\definecolor{mydarkblue}{rgb}{0,0.08,0.45}
\definecolor{sns0}{HTML}{1f77b4}
\definecolor{sns1}{HTML}{ff7f0e}
\definecolor{sns2}{HTML}{2ca02c}
\definecolor{sns3}{HTML}{d62728}
\definecolor{sns4}{HTML}{9467bd}
\definecolor{sns5}{HTML}{8c564b}
\newtheorem{theorem}{Theorem}[section]
\newtheorem{lemma}[theorem]{Lemma}
\providecommand\given{\MidSymbol[\vert]}
\newcommand\MidSymbol[1][]{%
\nonscript\:#1
\allowbreak
\nonscript\:
\mathopen{}}
\DeclareMathOperator{\opVar}{\mathrm{Var}}
\DeclarePairedDelimiterXPP{\Var}[2]{\opVar_{#1}}{[}{]}{}{%
    \renewcommand\given{\MidSymbol[\delimsize\vert]}
    \ifblank{#2}{\:\cdot\:}{#2}
}
\DeclarePairedDelimiterXPP{\implicitVar}[1]{\opVar}{[}{]}{}{%
    \renewcommand\given{\MidSymbol[\delimsize\vert]}
    \ifblank{#1}{\:\cdot\:}{#1}
}
\DeclareMathOperator{\opExpectation}{\mathbb{E}}
\DeclarePairedDelimiterXPP{\implicitE}[1]{\opExpectation}{[}{]}{}{%
    \renewcommand\given{\MidSymbol[\delimsize\vert]}
    \ifblank{#1}{\:\cdot\:}{#1}
}
\DeclarePairedDelimiterXPP{\E}[2]{\opExpectation_{#1}}{[}{]}{}{%
    \renewcommand\given{\MidSymbol[\delimsize\vert]}
    \ifblank{#2}{\:\cdot\:}{#2}
}
\newcommand{\simpleE}[1]{\opExpectation_{#1}} %
\DeclareMathOperator{\opCovariance}{\mathrm{Cov}}
\DeclarePairedDelimiterXPP{\implicitCov}[1]{\opCovariance}{[}{]}{}{%
    \renewcommand\given{\MidSymbol[\delimsize\vert]}
    \ifblank{#1}{\:\cdot\:}{#1}
}
\DeclarePairedDelimiterXPP{\Cov}[2]{\opCovariance_{#1}}{[}{]}{}{%
    \renewcommand\given{\MidSymbol[\delimsize\vert]}
    \ifblank{#2}{\:\cdot\:}{#2}
}
\DeclarePairedDelimiterXPP{\emCov}[2]{\widehat{\opCovariance}_{#1}}{[}{]}{}{%
    \renewcommand\given{\MidSymbol[\delimsize\vert]}
    \ifblank{#2}{\:\cdot\:}{#2}
}
\DeclarePairedDelimiterXPP{\indicator}[1]{\mathbb{1}}{\{}{\}}{}{%
    \ifblank{#1}{\:\cdot\:}{#1}
}
\DeclareMathOperator{\opInformationContent}{H}
\DeclarePairedDelimiterXPP{\ICof}[1]{\opInformationContent}{(}{)}{}{%
    \ifblank{#1}{\:\cdot\:}{#1}
}
\DeclareMathOperator{\opEntropy}{H}
\DeclarePairedDelimiterXPP{\Hof}[1]{\opEntropy}{[}{]}{}{%
    \renewcommand\given{\MidSymbol[\delimsize\vert]}
    \ifblank{#1}{\:\cdot\:}{#1}
}
\DeclarePairedDelimiterXPP{\xHof}[1]{\opEntropy}{(}{)}{}{%
    \ifblank{#1}{\:\cdot\:}{#1}
}
\DeclareMathOperator{\opMI}{I}
\DeclarePairedDelimiterXPP{\MIof}[1]{\opMI}{[}{]}{}{%
    \renewcommand\given{\MidSymbol[\delimsize\vert]}
    \ifblank{#1}{\:\cdot\:}{#1}
}
\DeclareMathOperator{\opTC}{TC}
\DeclarePairedDelimiterXPP{\TCof}[1]{\opTC}{[}{]}{}{%
    \renewcommand\given{\MidSymbol[\delimsize\vert]}
    \ifblank{#1}{\:\cdot\:}{#1}
}
\DeclarePairedDelimiterXPP{\CrossEntropy}[2]{\opEntropy}{(}{)}{}{%
    \ifblank{#1#2}{\:\cdot\: \MidSymbol[\delimsize\Vert] \:\cdot\:}{#1 \MidSymbol[\delimsize\Vert] #2}
}
\DeclareMathOperator{\opKale}{D_\mathrm{KL}}
\DeclarePairedDelimiterXPP{\Kale}[2]{\opKale}{(}{)}{}{%
    \ifblank{#1#2}{\:\cdot\: \MidSymbol[\delimsize\Vert] \:\cdot\:}{#1 \MidSymbol[\delimsize\Vert] #2}
}
\DeclareMathOperator{\opp}{p}
\DeclarePairedDelimiterXPP{\pof}[1]{\opp}{(}{)}{}{%
    \renewcommand\given{\MidSymbol[\delimsize\vert]}
    \ifblank{#1}{\:\cdot\:}{#1}
}
\DeclarePairedDelimiterXPP{\hpof}[1]{\hat{\opp}}{(}{)}{}{%
    \renewcommand\given{\MidSymbol[\delimsize\vert]}
    \ifblank{#1}{\:\cdot\:}{#1}
}
\DeclarePairedDelimiterXPP{\fpcof}[2]{\opp{#1}}{(}{)}{}{%
    \renewcommand\given{\MidSymbol[\delimsize\vert]}
    \ifblank{#2}{\:\cdot\:}{#2}
}
\DeclarePairedDelimiterXPP{\pcof}[2]{\opp_{#1}}{(}{)}{}{%
    \renewcommand\given{\MidSymbol[\delimsize\vert]}
    \ifblank{#2}{\:\cdot\:}{#2}
}
\DeclareMathOperator{\opP}{\mathbb{P}}
\DeclarePairedDelimiterXPP{\pfor}[1]{\opP}{\lbrack}{\rbrack}{}{%
    \renewcommand\given{\MidSymbol[\delimsize\vert]}
    \ifblank{#1}{\:\cdot\:}{#1}
}
\DeclarePairedDelimiterXPP{\pcfor}[2]{\opp_{#1}}{\lbrack}{\rbrack}{}{%
    \renewcommand\given{\MidSymbol[\delimsize\vert]}
    \ifblank{#2}{\:\cdot\:}{#2}
}
\DeclarePairedDelimiterXPP{\hpcof}[2]{\hat{\opp}_{#1}}{(}{)}{}{%
    \renewcommand\given{\MidSymbol[\delimsize\vert]}
    \ifblank{#2}{\:\cdot\:}{#2}
}
\DeclareMathOperator{\opq}{q}
\DeclarePairedDelimiterXPP{\qof}[1]{\opq}{(}{)}{}{%
    \renewcommand\given{\MidSymbol[\delimsize\vert]}
    \ifblank{#1}{\:\cdot\:}{#1}
}
\DeclarePairedDelimiterXPP{\qcof}[2]{\opq_{#1}}{(}{)}{}{%
    \renewcommand\given{\MidSymbol[\delimsize\vert]}
    \ifblank{#2}{\:\cdot\:}{#2}
}
\DeclarePairedDelimiterXPP{\qcpof}[2]{\opq'_{#1}}{(}{)}{}{%
    \renewcommand\given{\MidSymbol[\delimsize\vert]}
    \ifblank{#2}{\:\cdot\:}{#2}
}
\DeclarePairedDelimiterXPP{\varHof}[2]{\opEntropy_{\ifblank{#1}{\:\cdot\:}{#1}}}{[}{]}{}{%
    \renewcommand\given{\MidSymbol[\delimsize\vert]}
    \ifblank{#2}{\:\cdot\:}{#2}
}
\DeclarePairedDelimiterXPP{\xvarHof}[2]{\opEntropy_{\ifblank{#1}{\:\cdot\:}{#1}}}{(}{)}{}{%
    \renewcommand\given{\MidSymbol[\delimsize\vert]}
    \ifblank{#2}{\:\cdot\:}{#2}
}
\DeclarePairedDelimiterXPP{\varMIof}[2]{\opMI_{\ifblank{#1}{\:\cdot\:}{#1}}}{[}{]}{}{%
    \renewcommand\given{\MidSymbol[\delimsize\vert]}
    \ifblank{#2}{\:\cdot\:}{#2}
}
\DeclareMathOperator{\opmus}{\mu^*}
\DeclarePairedDelimiterXPP{\IMof}[1]{\opmus}{[}{]}{}{%
    \renewcommand\given{\MidSymbol[\delimsize\vert]}
    \ifblank{#1}{\:\cdot\:}{#1}
}
\DeclarePairedDelimiterXPP{\aICof}[1]{\hat \opInformationContent}{(}{)}{}{%
    \ifblank{#1}{\:\cdot\:}{#1}
}
\DeclarePairedDelimiterXPP{\aHof}[1]{\hat \opEntropy}{[}{]}{}{%
    \renewcommand\given{\MidSymbol[\delimsize\vert]}
    \ifblank{#1}{\:\cdot\:}{#1}
}
\DeclarePairedDelimiterXPP{\axHof}[1]{\hat \opEntropy}{(}{)}{}{%
    \ifblank{#1}{\:\cdot\:}{#1}
}
\DeclarePairedDelimiterXPP{\aMIof}[1]{\hat \opMI}{[}{]}{}{%
    \renewcommand\given{\MidSymbol[\delimsize\vert]}
    \ifblank{#1}{\:\cdot\:}{#1}
}
\DeclarePairedDelimiterXPP{\aCrossEntropy}[2]{\hat \opEntropy}{(}{)}{}{%
    \ifblank{#1#2}{\:\cdot\: \MidSymbol[\delimsize\Vert] \:\cdot\:}{#1 \MidSymbol[\delimsize\Vert] #2}
}
\DeclarePairedDelimiterXPP{\aKale}[2]{\hat \opKale}{(}{)}{}{%
    \ifblank{#1#2}{\:\cdot\: \MidSymbol[\delimsize\Vert] \:\cdot\:}{#1 \MidSymbol[\delimsize\Vert] #2}
}
\newcommand{\Dtrain}{{\mathcal{D}^\text{train}}}
\newcommand{\Itrain}{{\mathcal{I}^\text{train}}}
\newcommand{\Dpool}{{\mathcal{D}^\text{pool}}}
\newcommand{\Ipool}{{\mathcal{I}^\text{pool}}}
\newcommand{\w}{\omega}
\newcommand{\W}{\Omega}
\newcommand{\Y}{Y}
\newcommand{\Ypred}{Y}
\newcommand{\ypred}{y}
\newcommand{\x}{x}
\newcommandPIE{\xeval}{x^{\text{eval}}#2#3#1}
\newcommandPIE{\xtest}{x^{\text{test}}#2#3#1}
\newcommandPIE{\xtrain}{x^{\text{train}}#2#3#1}
\newcommandPIE{\xbatch}{x^{\text{acq}}#2#3#1}
\newcommandPIE{\xpool}{x^{\text{pool}}#2#3#1}
\newcommandPIE{\Xeval}{X^{\text{eval}}#2#3#1}
\newcommandPIE{\Xtest}{X^{\text{test}}#2#3#1}
\newcommandPIE{\Xtrain}{X^{\text{train}}#2#3#1}
\newcommandPIE{\Xbatch}{X^{\text{acq}}#2#3#1}
\newcommandPIE{\Xpool}{X^{\text{pool}}#2#3#1}
\newcommandPIE{\yeval}{y^{\text{eval}}#2#3#1}
\newcommandPIE{\ytest}{y^{\text{test}}#2#3#1}
\newcommandPIE{\ytrain}{y^{\text{train}}#2#3#1}
\newcommandPIE{\ybatch}{y^{\text{acq}}#2#3#1}
\newcommandPIE{\ypool}{y^{\text{pool}}#2#3#1}
\newcommandPIE{\Ytest}{Y^{\text{test}}#2#3#1}
\newcommandPIE{\Ytrain}{Y^{\text{train}}#2#3#1}
\newcommandPIE{\Yeval}{Y^{\text{eval}}#2#3#1}
\newcommandPIE{\Ybatch}{Y^{\text{acq}}#2#3#1}
\newcommandPIE{\Ypool}{Y^{\text{pool}}#2#3#1}
\newcommand{\xpoolsetfull}{\{x_i\}_{i \in \Ipool}}
\newmathcommand{\batchvar}{K}
\newtextcommand{\batchvar}{$\batchvar$\xspace}
\newmathcommand{\poolsize}{M}
\newtextcommand{\poolsize}{$\poolsize$\xspace}
\newmathcommand{\trainsize}{N}
\newtextcommand{\trainsize}{$\trainsize$\xspace}
\DeclareMathOperator*{\argmax}{arg\,max}
\newcommand{\defeq}{\vcentcolon=}
\newcommand{\gumbel}[2]{\text{Gumbel}(#1; #2)}
\newcommand{\figref}[1]{\hyperref[#1]{Fig.\ \ref*{#1}}}
\title{Stochastic Batch Acquisition: \\A Simple Baseline for Deep Active Learning}
\author{%
    \name Andreas Kirsch\thanks{Joint first authors} \email andreas.kirsch@cs.ox.ac.uk \\
    \addr OATML, Department of Computer Science, University of Oxford\thanks{Work done while there.}
    \AND
    \name Sebastian Farquhar\footnotemark[1] \email sebastian.farquhar@cs.ox.ac.uk \\
    \addr OATML, Department of Computer Science, University of Oxford\footnotemark[2]
    \AND
    \name Parmida Atighehchian \email parmi.atg@gmail.com \\
    \addr ServiceNow\footnotemark[2]
    \AND
    \name Andrew Jesson \email andrew.jesson@cs.ox.ac.uk \\
    \addr OATML, Department of Computer Science, University of Oxford\footnotemark[2]
    \AND
    \name Frédéric Branchaud-Charron \email frederic.branchaud.charron@gmail.com \\
    \addr ServiceNow\footnotemark[2]
    \AND
    \name Yarin Gal \email yarin.gal@cs.ox.ac.uk \\
    \addr OATML, Department of Computer Science, University of Oxford
}
\begin{document}

\maketitle

\begin{abstract}
    We examine a simple stochastic strategy for adapting well-known single-point acquisition functions to allow batch active learning.
    Unlike acquiring the top-\batchvar points from the pool set, score- or rank-based sampling takes into account that acquisition scores change as new data are acquired.
    This simple strategy for adapting standard single-sample acquisition strategies can even perform just as well as compute-intensive state-of-the-art batch acquisition functions, like BatchBALD or BADGE while using orders of magnitude less compute.
    In addition to providing a practical option for machine learning practitioners, the surprising success of the proposed method in a wide range of experimental settings raises a difficult question for the field: when are these expensive batch acquisition methods pulling their weight?
\end{abstract}

\section{Introduction}

Active learning is a widely used strategy for efficient learning in settings where unlabelled data are plentiful, but labels are expensive \citep{atlasTraining1990,Settles2010}.
For example, labels for medical image data may require highly trained annotators, and when labels are the results of scientific experiments, each one can require months of work.
Active learning uses information about unlabelled data and the current state of the model to acquire labels for those samples that are most likely to be informative.

While many acquisition schemes are designed to acquire labels one at a time \citep{houlsby_bayesian_2011,gal2017deep}, recent work has highlighted the importance of \textit{batch acquisition} \citep{kirsch2019batchbald,ash2020deep}.
Acquiring in a batch lets us parallelise labelling.
For example, we could hire hundreds of annotators to work in parallel or run more than one experiment at once. Batch acquisition also saves compute as single-point selection also incurs the cost of retraining the model for every new data point.

Unfortunately, existing performant batch acquisition schemes are computationally expensive (\Cref{tab:runtime}).
Intuitively, this is because batch acquisition schemes face combinatorial complexity when accounting for the interactions between possible acquisition points.
Recent works \citep{pinslerBayesian2019, ash2020deep, ash2021gone} trade off a principled motivation with various approximations to remain tractable.
A commonly used, though extreme, heuristic is to take the top-\batchvar highest scoring points from an acquisition scheme designed to select a single point (``\emph{top-\batchvar acquisition}'').

This paper examines a simple baseline for batch active learning that is competitive with methods that cost orders of magnitude more across a wide range of experimental contexts.
We notice that single-acquisition score methods such as BALD \citep{houlsby_bayesian_2011} do not take correlations between the samples into account and thus only act as a noisy proxy for future acquisition scores as we motivate in \Cref{fig:bald_full_rank_correlations}. This leads us to stochastically acquire points following a distribution determined by the single-acquisition scores instead of acquiring the top-\batchvar samples.
In sequential decision making, stochastic multi-armed bandits and reinforcement learning, variants of this are also known a Boltzmann exploration \citep{cesa2017boltzmann}. 
We examine other related methods in \S\ref{sec:related_work}. In deep active learning this simple approach remains under-explored, yet it can match the performance of earlier more complex methods for batch acquisition (e.g., BatchBALD, \citet{kirsch2019batchbald}; see in \S\ref{sec:experiments}) despite being very simple.
Indeed, this acquisition scheme has a time complexity of only $\mathcal{O}(M \log K)$ in the pool size $M$ and acquisition size \batchvar, just like top-\batchvar acquisition. %

We show empirically that the presented stochastic strategy performs as well as or better than top-\batchvar acquisition with almost identical computational cost on several commonly used acquisition scores, empirically making it a strictly-better batch strategy.
Strikingly, the empirical comparisons between this stochastic strategy and the evaluated more complex methods cast doubt on whether they function as well as claimed.
Concretely, in this paper we:
\begin{itemize}
    \item examine a family of three computationally cheap stochastic batch acquisition strategies (softmax, power and soft-rank)---the latter two which have not been explored and compared to in detail before;
    \item demonstrate that these strategies are preferable to the commonly used top-\batchvar acquisition heuristic; and
    \item identify the failure of two existing batch acquisition strategies to outperform this vastly cheaper and more heuristic strategy.
\end{itemize}

\textbf{Outline.} In \S\ref{sec:problem_setting}, we present active learning notation and commonly used acquisition functions. We propose stochastic extensions in \S\ref{sec:method}, relate them to previous work in \S\ref{sec:related_work}, and validate them empirically in \S\ref{sec:experiments} on various datasets, showing that these extensions are competitive with some much more complex active learning approaches despite being orders of magnitude computationally cheaper. Finally, we further validate some of the underlying theoretical motivation in \S\ref{sec:investigation} and discuss limitations in \S\ref{sec:discussion}.

\begin{table*}[t]
    \caption{
    \emph{Acquisition runtime (in seconds, 5 trials, $\pm$ s.d.)}.
    The stochastic acquisition methods are as fast as top-\batchvar, here with BALD scores, and \textbf{orders of magnitude} faster than BADGE or BatchBALD.
    Synthetic pool set with $M=10,000$ pool points with 10 classes.
    BatchBALD \& BALD: 20 parameter samples. In \textsuperscript{superscript}, mean accuracy over acquisition steps from \Cref{fig:rmnist} on Repeated-MNIST with 4 repetitions (5 trials).
    }
    \centering
    \begin{tabular}{ccccc}
        \toprule
            \batchvar & Top-\batchvar (BALD) \textsuperscript{80\%}& \textbf{Stochastic (PowerBALD) \textsuperscript{90\%}} & BatchBALD \textsuperscript{90\%} & BADGE \textsuperscript{86\%}  \\
            \cmidrule(lr){1-4} \cmidrule(lr){5-5}
            10 & $0.2\pm0.0$ & $0.2\pm0.0$ & $566.0\pm17.4$ & $9.2\pm0.3$ \\
            100 & $0.2\pm0.0$ & $0.2\pm0.0$ & $5,363.6\pm95.4$ & $82.1\pm2.5$ \\
            500 & $0.2\pm0.0$ & $0.2\pm0.0$ & $29,984.1\pm598.7$ & $409.3\pm3.7$\\
            \bottomrule
    \end{tabular}
    \label{tab:runtime}
\end{table*}

\section{Background \& Problem Setting}
\label{sec:problem_setting}

The stochastic approach we examine applies to batch acquisition for active learning in a pool-based setting \citep{Settles2010} where we have access to a large unlabelled \emph{pool} set, but we can only label a small subset of the points.
The challenge of active learning is to use what we already know to pick which points to label in the most efficient way.
Generally, we want to avoid labelling points similar to those already labelled. %
In the pool-based active learning setting, we are given a large pool of unlabeled data points and can request labels for only a small subset, the acquisition batch. To formalize this, we first introduce some notation.

\textbf{Notation.} Following \citet{farquhar_statistical_2020}, we formulate active learning over \emph{indices} instead over data points. This simplifies the notation.
The large, initially fully unlabelled, pool set containing \poolsize input points is 
\begin{equation}
    \Dpool=\xpoolsetfull,
\end{equation}
where $\Ipool = \{1, \ldots, \poolsize\}$ is the initial full index set.
We initialise a training dataset with $\trainsize_0$ randomly selected points from $\Dpool$ by acquiring their labels, $y_i$,
\begin{equation}
    \Dtrain=\{ ( x_i, y_i) \}_{i \in \Itrain},
\end{equation}
where $\Itrain$ is the index set of $\Dtrain$, \emph{initially} containing $\trainsize_0$ indices between $1$ and \poolsize.
A model of the predictive distribution, $\pof{\ypred \given x}$, can then be trained on $\Dtrain$.

\subsection{Active Learning} 

At each acquisition step, we select additional points for which to acquire labels.
Although many methods acquire one point at a time \citep{houlsby_bayesian_2011, gal2017deep}, one can alternatively acquire a whole batch of \batchvar examples.
An acquisition function $a$ takes $\Itrain$ and $\Ipool$ and returns \batchvar indices from $\Ipool$ to be added to $\Itrain$.
We then label those \batchvar datapoints and add them to $\Itrain$ while making them unavailable from the pool set.
That is,
\begin{align}
    & \Itrain \leftarrow \Itrain \cup a(\Itrain, \Ipool), \\
    & \Ipool \leftarrow \Ipool \setminus \Itrain.
\end{align}
A common way to construct the acquisition function is to define some scoring function, $s$, and then select the point(s) that score the highest.

\textbf{Probabilistic Model.} We assume classification with inputs $X$, labels $\Ypred$, and a discriminative classifier $\pof{\ypred \given x}$. 
In the case of Bayesian models, we further assume a subjective probability distribution over the parameters, $\pof{\w}$, and we have $\pof{\ypred \given x} = \E{\pof{\w}}{\pof{\ypred \given x, \w}}$.

Two commonly used acquisition scoring functions are Bayesian Active Learning by Disagreement (BALD) \citep{houlsby_bayesian_2011} and predictive entropy \citep{gal2017deep}:

\textbf{BALD.} \emph{Bayesian Active Learning by Disagreement} \citep{houlsby_bayesian_2011} computes the expected information gain between the predictive distribution and the parameter distribution $\pof{\w \given \Dtrain}$ for a Bayesian model.
For each candidate pool index, $i$, with mutual information, $\opMI$, and entropy, $\opEntropy$, the score is
\begin{align}
    s_{\textup{BALD}}(i; \Itrain) &\coloneqq \MIof{\Ypred; \W \given X=x_i, \Dtrain} \nonumber\\
    &= \Hof{\Ypred \given X=x_i, \Dtrain} - \E{\pof{\w \given \Dtrain}}{\Hof{\Ypred \given X=x_i, \w}}.
\end{align}

\textbf{Entropy.} The \emph{(predictive) entropy} \citep{gal2017deep} does not require Bayesian models, unlike BALD, and performs worse for data with high observation noise \citet{mukhoti2021deep}.
It is identical to the first term of the BALD score
\begin{align}
    s_{\textup{entropy}}(i; \Itrain) &\coloneqq \Hof{\Ypred \given X=x_i, \Dtrain}.
\end{align}

\subsection{Acquisition Functions} 

These scoring functions were introduced for single-point acquisition:
\begin{align}
    a_s(\Itrain) \coloneqq \argmax_{i \in \Ipool} s(i; \Itrain).
\end{align}
For deep learning in particular, single-point acquisition is computationally expensive due to retraining the model for every acquired sample. Moreover, it also means that labelling can only happen sequentially instead of in bulk. Thus, single-point acquisition functions were expanded to multi-point acquisition via acquisition batches in batch active learning.
The most naive batch acquisition function selects the highest \batchvar scoring points
\begin{align}
    & a^\text{batch}_{s}(\Itrain; \batchvar) \coloneqq \argmax_{I \subseteq \Ipool, |I| = \batchvar} \sum_{i \in I} s(i; \Itrain). 
\end{align}
Maximizing this sum is equivalent to taking the top-k scoring points, which cannot account for the interactions between points in an acquisition batch because individual points are scored independently.
For example, if the most informative point is duplicated in the pool set, all instances will be acquired, which is likely wasteful when we assume no label noise (see also Figure 1 in \citet{kirsch2019batchbald}).

\subsection{Batch Acquisition Functions}
Some acquisition functions are explicitly designed for batch acquisition \citep{kirsch2019batchbald,pinslerBayesian2019,ash2020deep}.
They try to account for the interaction between points, which can improve performance relative to simply selecting the top-\batchvar scoring points.
However, existing methods can be computationally expensive. For example, BatchBALD rarely scales to acquisition sizes of more than 5--10 points due to its long runtime \citep{kirsch2019batchbald}, as we evidence in \Cref{tab:runtime}.

\textbf{ACS-FW.} \citet{pinslerBayesian2019} propose \emph{``Active Bayesian CoreSets with Frank-Wolfe optimization''} which successively minimizes the distance between the (expected) loss on the pool set and the loss on a potential training set using a greedy Frank-Wolfe optimization in a Hilbert space. Either a posterior-weighted Fisher inner-product or a posterior-weighted loss inner-product between two samples are used. For non-linear models, random projections of the gradients speed up the Fisher inner-product.

\textbf{BADGE.} \citet{ash2020deep} propose \emph{``Batch Active learning by Diverse Gradient Embeddings''}: it motivates its batch selection approach using a k-Determinantal Point Process \citep{kulesza2011k} based on the (inner product) similarity matrix of the scores (gradients of the log loss) using hard pseudo-labels (the highest probability class according to the model's prediction) for each pool sample. See also \citet{kirsch2022unifying} for a more detailed analysis. In practice, they use the intialization step of k-MEANS++ with Euclidian distances between the scores to select an acquisition batch. BADGE is also computationally expensive as we elaborate in \Cref{sec:related_work}.

\textbf{BatchBALD.} \citet{kirsch2019batchbald} extend BALD to batch acquisition using the mutual information between the parameter distribution and the \emph{joint} distribution of the predictions of multiple point in an acquistion batch: this mutual information is the expected information gain for a full acquistion batch:
\begin{align}
    s_{\textup{BatchBALD}}(i_1, ..., i_\batchvar; \Itrain) &\coloneqq \MIof{\Ypred_1, ..., \Ypred_\batchvar; \W \given X_1=x_{i_1}, ..., X_\batchvar=x_{i_\batchvar}, \Dtrain}.
\end{align}
\citet{kirsch2019batchbald} greedily construct an acquisition batch by iteratively selecting the next unlabelled pool point that maximizes the joint score with the already selected points. This is $1-\nicefrac{1}{e}$-optimal as the expected information gain is submodular \citep{krause2014submodular}. They note that their approach is computationally expensive, and they only consider acquisition batches of up to size 10.

\section{Method}
\label{sec:method}
\begin{table}[t]%
    \centering%
    \vspace{-1em}
    \caption{\emph{Summary of stochastic acquisition variants.} Perturbing the scores $s_i$  themselves with $\epsilon_i \sim \gumbel{0}{\beta^{-1}}$ i.i.d.\ yields a softmax distribution. Log-scores result in a power distribution, with assumptions that are reasonable for active learning. Using the score-ranking, $r_i$ finally is a robustifying assumption. $\beta$ is included for completeness; we use $\beta\coloneqq 1$ in our experiments---except for the ablation in \S\ref{sec:finetuning_beta}.}
    \begin{tabular}{lll}
        \toprule
        Perturbation & Distribution & Probability mass\ \\
        \midrule
        $s_i + \epsilon_i$ & Softmax & $\propto \exp{\beta s_i}$ \\
        $\log s_i + \epsilon_i$ & Power & $\propto s_i^\beta$ \\ %
        $- \log r_i + \epsilon_i$ & Soft-rank & $\propto r_i^{-\beta}$ \\
        \bottomrule
    \end{tabular}
    \label{tbl:distributions}
\end{table}

\begin{figure*}[t]
    \centering
    \begin{minipage}[t]{.49\textwidth}
        \centering %
        \includegraphics[width=\linewidth]{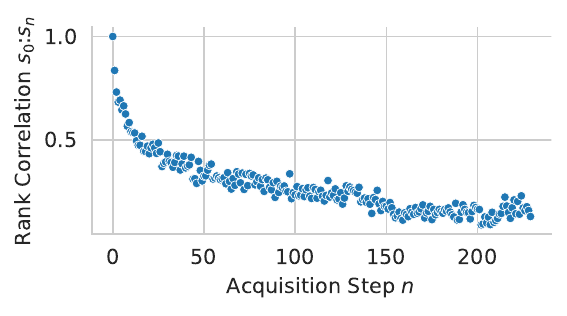} %
        \caption{
            \emph{Acquisition scores at individual acquisition step $t$ are only a loose proxy for later scores at $t+n$ (here: $t=0$).}
            Specifically, the Spearman rank-correlation between acquisition scores on the zerot-th and $n$'th time-step falls with $n$.
            While top-\batchvar acquisition incorrectly implicitly assumes the rank-correlation remains $1$, stochastic acquisitions do not.
            Using Monte-Carlo Dropout BNN trained on MNIST at initial 20 points and 73\% initial accuracy; score ranks computed over test set.
        } %
        \label{fig:bald_full_rank_correlations}
    \end{minipage}%
    \hfill
    \begin{minipage}[t]{.49\textwidth}
        \centering %
        \includegraphics[width=\linewidth]{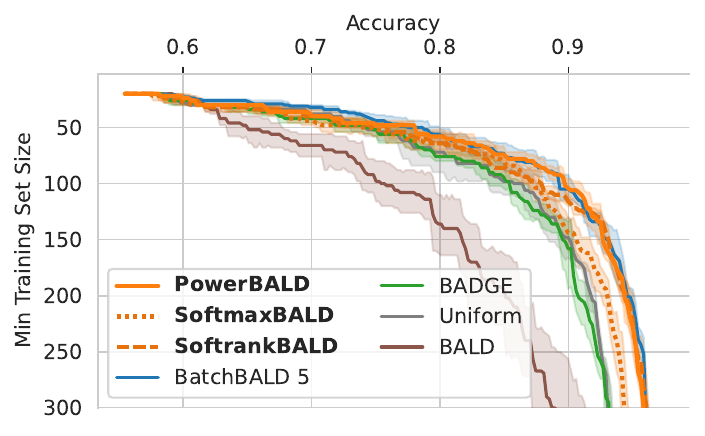} %
        \caption{
        \emph{Performance on Repeated-MNIST with 4 repetitions (5 trials).} \textbf{Up and to the right is better ($\nearrow$).}
        PowerBALD outperforms (top-\batchvar) BALD and BADGE and is on par with BatchBALD.
        This is despite being orders of magnitude faster.
        Acquisition size: 10, except for BatchBALD (5). See \Cref{appfig:rmnist_badge_ablation} in the appendix for an ablation study of BADGE's acquisition size, and \Cref{appfig:rmnist_bald_5_ablation} for a comparison of all BALD variants at acquisition size 5.
        }
        \label{fig:rmnist}
    \end{minipage}
\end{figure*}

Selecting the top-\batchvar points at acquisition step $t$ amounts to the assumption that the informativeness of these points is independent of each other. This leads to the pathology that if the most informative pool point is duplicated in the pool set, each instance would be selected (up to the acquisition size). This is clearly wrong. 

\textbf{Top-\batchvar Acquisition Pathologies.}
Another way to see that this is wrong is to think step by step, that is to split the batch acquisition into multiple steps of size 1:
We select the top pool sample by acquisition score and retrain the model once for each possible class label for this point. We then compute the averaged acquisition scores on the pool set given each of these models weighted by the original model's probability of each class label. 
We select the top pool sample using this new (averaged) score, and repeat the process, exponentially branching out as necessary. This is equivalent to the joint acquisition batch selection in BatchBALD \citep{kirsch2019batchbald}:
\begin{align}
    &\MIof{\Ypred_1, ..., \Ypred_\batchvar; \W \given X_1=x_{i_1}, ..., X_\batchvar=x_{i_\batchvar}, \Dtrain} = \\
    &=
    \sum_{j=1}^{\batchvar} \simpleE{\pof{y_{i_1},...,y_{i_{j-1}}
    \given x_{i_1},...,x_{i_{j-1}}, \Dtrain}} \MIof{\Ypred_j ; \W \given X_j=x_{i_j}, X_1=x_{i_1}, Y_1=y_{i_1}, ..., X_1=x_{i_{j-1}}, Y_1=y_{i_{j-1}}, \Dtrain} \notag \\
    &\not= \sum_{j=1}^{\batchvar} \MIof{\Ypred_j ; \W \given X_j=x_{i_j}, \Dtrain} = \sum_{j=1}^\batchvar s_{\textup{BALD}}(i_j; \Itrain)
\end{align}
using the chain rule of the mutual information.
We see that the informativeness of the samples will usually not be independent of each other.
Of course, the acquisition scores for models trained with these additional points will be quite different from the first set of scores.
After all, the purpose of active learning is to add the \emph{most informative} points---those that will update the model the most.
In contrast, selecting a top-\batchvar batch using the same scores implicitly assumes that the score ranking will not change due to other points. 

We provide empirical confirmation in \Cref{fig:bald_full_rank_correlations} that, in fact, the ranking of acquisition scores at step $t$ and $t+\batchvar$ is decreasingly correlated as \batchvar grows when we retrain the model for each acquired point. 
\Cref{batchbald:mnist_bald_scores_intuition2} also illustrates this on MNIST.
Moreover, as we will see in \S\ref{subsec:rank_correlation}, this effect is the strongest for the most informative points. 

\begin{figure}[t]
	\centering
	\includegraphics[width=\linewidth]{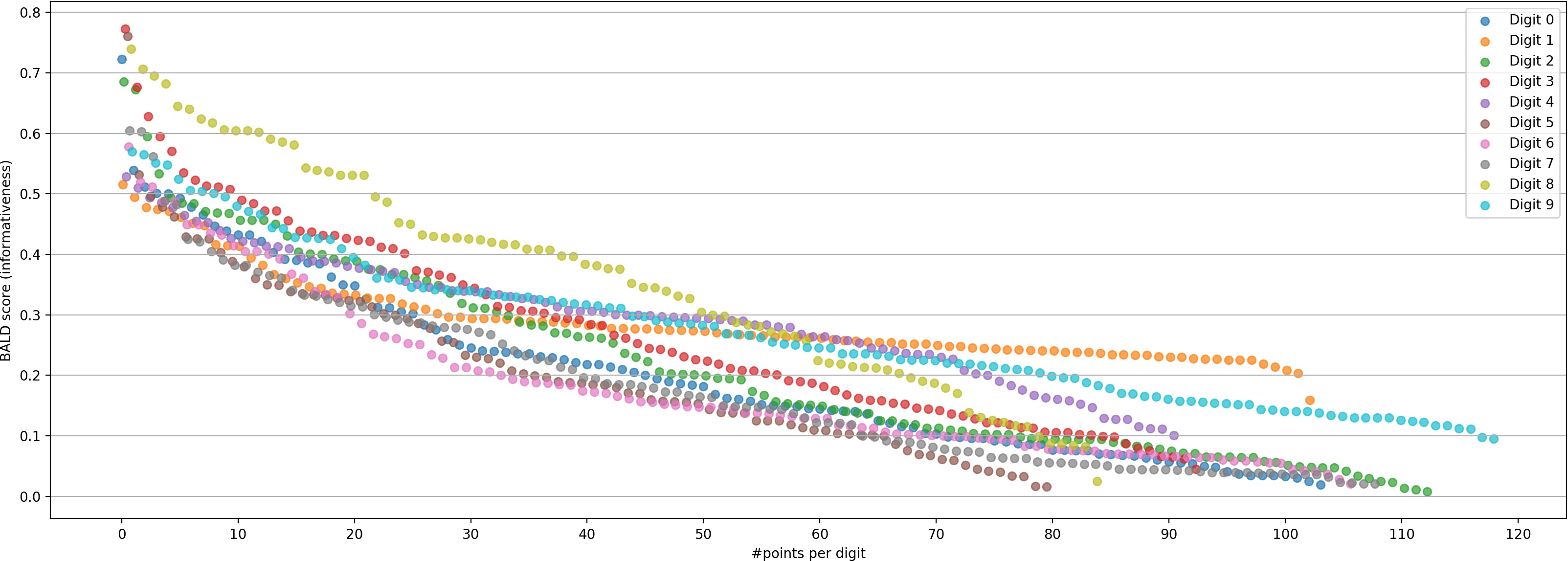}
	\caption{
	\emph{BALD scores for 1000 randomly-chosen points from the MNIST dataset (handwritten digits).} The points are colour-coded by digit label and sorted by score. The model used for scoring has been trained to 90\% accuracy first. 
	If we were to pick the top scoring points (e.g. scores above 0.6), most of them would be 8s, even though we can assume that after acquiring the first couple of them, the model would consider them less informative than other available data. Points are slightly jittered on the x-axis by digit label to avoid overlaps.
	}
	\label{batchbald:mnist_bald_scores_intuition2}
\end{figure}

Instead, we investigate the use of stochastic sampling as an alternative to top-\batchvar acquisition, which implicitly acknowledges the uncertainty within the batch acquisition step using a simple noise process model governing how scores change. 
We motivate and investigate the theory behind this in \S\ref{sec:investigation}, but given how simple the examined methods are, this theory only obstructs their simplicity.
Specifically, we examine three simple stochastic extensions of single-sample scoring functions $s(i; \Itrain)$ that make slightly different assumptions.
These methods are compatible with conventional active learning frameworks that typically take the top-\batchvar highest scoring samples.
For example, it is straightforward to adapt entropy, BALD, and other scoring functions for use with these extensions. 

\textbf{Gumbel Noise.} These stochastic acquisition extensions assume that future scores differ from the current score by a perturbation.
We model the noise distribution of this perturbation as the addition of Gumbel-distributed noise $\epsilon_i \sim \gumbel{0}{1}$, which is frequently used to model the distribution of extrema. 
At the same time, the choice of a Gumbel distribution for the noise is also one of mathematical convenience, in the spirit of a straightforward baseline: the maximum of sets of many other standard distributions, such as the Gaussian distribution, is not analytically tractable.
On the other hand, taking the highest-scoring points from a distribution perturbed with Gumbel noise is equivalent to sampling from a softmax distribution\footnote{Also known as Boltzmann/Gibbs distribution.} without replacement.
We investigate the choice of Gumbel noise further in \S\ref{sec:investigation}.

This follows from the \emph{Gumbel-Max} trick \citep{gumbel1954statistical, maddison2014sampling} and, more specifically, the Gumbel-Top-\batchvar trick \citep{kool2019stochastic}.
We provide a short proof in appendix \S\ref{app:proof_gumbel_top_k}.
Expanding on \citet{maddison2014sampling}:
\begin{restatable}{proposition}{gumbeltopk}
\label{prop:gumbel_top_k}
For scores $s_i$, $i \in \{1, \ldots, n\}$, and $k \le n$ and $\beta > 0$, if we draw $\epsilon_i \sim \gumbel{0}{\beta^{-1}}$ independently, then $\arg \mathrm{top}_k \{s_i + \epsilon_i\}_i$ is an (ordered) sample without replacement from the categorical distribution $\mathrm{Categorical}(\nicefrac{\exp(\beta \, s_i)}{\sum_j \exp(\beta \, s_j)}, i \in \{1, \ldots, n\})$. 
\end{restatable}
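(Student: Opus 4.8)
The plan is to transport the statement to independent \emph{exponential} random variables, where it reduces to the classical ``exponential race'' together with the memorylessness of the exponential. First I would remove $\beta$: since $\arg top_k$ is invariant under the common strictly increasing rescaling $x \mapsto \beta x$, and $\beta \epsilon_i \sim \gumbel{0}{1}$ whenever $\epsilon_i \sim \gumbel{0}{\beta^{-1}}$, it suffices to prove the claim for standard Gumbels, i.e.\ for $G_i \defeq s_i + \epsilon_i$ with $\epsilon_i \sim \gumbel{0}{1}$ i.i.d., the target being $\text{Categorical}(\phi_i / \sum_j \phi_j)$ with unnormalised weights $\phi_i \defeq \exp(s_i)$. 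Then I would apply the strictly decreasing map $x \mapsto e^{-x}$: put $E_i \defeq e^{-G_i} = \phi_i^{-1} e^{-\epsilon_i}$. Because $e^{-\epsilon_i} \sim \text{Exp}(1)$ for a standard Gumbel $\epsilon_i$, the $E_i$ are independent with $E_i \sim \text{Exp}(\phi_i)$. Since $x \mapsto e^{-x}$ reverses order, $\arg top_k\{G_i\}_i$ (the $k$ largest, in order) equals the ordered list of the $k$ \emph{smallest} of $\{E_i\}_i$; so it is enough to show that the ordered bottom-$k$ of independent $\text{Exp}(\phi_i)$ variables is an ordered sample without replacement from $\text{Categorical}(\phi_i / \sum_j \phi_j)$.

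This I would prove by induction on $k$, uniformly in the finite index set. For $k = 1$: $\pof{E_i = \min_j E_j} = \pof{E_i < \min_{j \neq i} E_j} = \phi_i / \sum_j \phi_j$, the standard competing-exponentials calculation (integrate the density of $E_i$ against the survival functions of the $E_j$, $j \neq i$). For the step, condition on the index $i_1$ and the value $t$ of $\min_j E_j$. By independence and memorylessness, conditionally on $\{\min_j E_j = t\text{ attained at }i_1\}$ the shifted variables $\{E_j - t\}_{j \neq i_1}$ are again independent with $E_j - t \sim \text{Exp}(\phi_j)$, and this conditional law does not depend on $t$. Subtracting the common constant $t$ changes neither which indices are smallest among $\{E_j\}_{j \neq i_1}$ nor their order, so, conditionally on $i_1$, the ordered bottom-$(k-1)$ of $\{E_j\}_{j\neq i_1}$ has exactly the law of the ordered bottom-$(k-1)$ of a \emph{fresh} independent exponential family with rates $\{\phi_j\}_{j \neq i_1}$. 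By the induction hypothesis that is an ordered without-replacement sample from the renormalised categorical over the index set minus $\{i_1\}$; multiplying by $\pof{i_1} = \phi_{i_1}/\sum_j \phi_j$ (the probability of selecting $i_1$ first) gives the product law $\prod_{\ell=1}^{k} \phi_{i_\ell} / \sum_{j \notin \{i_1, \ldots, i_{\ell-1}\}} \phi_j$, which is precisely drawing $k$ times without replacement from $\text{Categorical}(\phi_i / \sum_j \phi_j)$.

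The step to get right is the conditioning in the induction: given which variable attains the running minimum and its value, the remaining exponentials reset to independent exponentials with the same rates. This is exactly memorylessness of the exponential, and it is the structural fact that makes the Gumbel-Top-$k$ trick work — for a generic perturbation distribution the analogue fails. The only technical nuisance is that $\{\min_j E_j = t\}$ has probability zero, so the argument is cleanest phrased via the joint density of the order statistics together with their argsort (ties occur with probability zero, so the argsort is a.s.\ well defined); this is routine and I would relegate it to a line. Finally, I would remark that running the same recursion directly on the Gumbels — using that $\argmax_{i \in S} G_i$ and $\max_{i \in S} G_i$ are independent, with $\max_{i\in S} G_i \sim \gumbel{\log \sum_{i \in S}\phi_i}{1}$, so that deleting the argmax leaves a Gumbel-Max problem on the reduced weights — yields an equivalent proof, matching the exposition of \citet{maddison2014sampling} and \citet{kool2019stochastic}.
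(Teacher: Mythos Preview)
Your proof is correct. Both you and the paper open with the same reduction: rescale by $\beta$ so that $\beta\epsilon_i \sim \gumbel{0}{1}$ and observe that $\arg top_k$ is unchanged, reducing to the $\beta=1$ case. From there the two diverge. The paper simply invokes Theorem~1 of \citet{kool2019stochastic} as a black-box lemma and is done. You instead prove the $\beta=1$ statement from scratch by pushing forward under $x\mapsto e^{-x}$ to turn the perturbed Gumbels into independent exponentials $E_i\sim\text{Exp}(\phi_i)$, and then run the exponential-race induction using memorylessness. This buys you a fully self-contained argument that also makes transparent \emph{why} the Gumbel is the right noise distribution (it is the one for which the residuals after removing the running extremum reset to the same family), whereas the paper's proof is shorter but outsources the substantive step. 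Your closing remark about recursing directly on the Gumbels via max-stability is in fact the route \citet{kool2019stochastic} take, so you have covered both approaches.
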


$\beta \ge 0$ is a `coldness' parameter.
In the spirit of providing a simple and surprisingly effective baseline without hyperparameters, we fix $\beta \coloneqq 1$ for our main experiments.
However, to understand its effect, we examine ablations of $\beta$ in \S\ref{sec:finetuning_beta}. Overall, for $\beta \to \infty$, this distribution will converge towards top-\batchvar acquisition; whereas for $\beta \to 0$, it will converge towards uniform acquisition.

We apply the perturbation to three quantities in the three sampling schemes: the scores themselves, the log scores, and the rank of the scores.
Perturbing the log scores assumes that scores are non-negative and uninformative points should be avoided.
Perturbing the ranks can be seen as a robustifying assumption that requires the relative scores to be reliable but allows the absolute scores to be unreliable.
\Cref{tbl:distributions} summarizes the three stochastic acquisition variants with their associated sampling distributions, which make slightly different assumptions about the acquisition scores:

\textbf{Soft-Rank Acquisition.} This first variant only relies on the rank order of the scores and makes no assumptions on whether the acquisition scores are meaningful beyond that. It thus uses the \emph{least} amount of information from the acquisition scores. 
It only requires the \textit{relative score order} to be useful and ignores the \textit{absolute score values}.
If the absolute scores provide useful information, we would expect this method to perform worse than the variants below, which make use of the score values. As we will see, this is indeed sometimes the case.

Ranking the scores $s(i ;\Itrain)$ with descending ranks $\{r_i\}_{i \in \Ipool}$ such that $s(r_i;\Itrain) \ge s(r_j;\Itrain)$ for $r_i \le r_j$ and smallest rank being $1$, we sample index $i$ with probability $\pcof{\text{softrank}}{i} \propto r_i^{-\beta}$ with coldness $\beta$. This is invariant to the actual scores.
We can draw $\epsilon_i \sim \gumbel{0}{\beta^{-1}}$ and create a perturbed `rank'
\begin{align}
    s^\text{softrank}(i; \Itrain):= -\log r_i + \epsilon_i.
\end{align}
Following \Cref{prop:gumbel_top_k}, taking the top-\batchvar points from $s^\text{softrank}$ is equivalent to sampling without replacement from the rank distribution $\pcof{\text{softrank}}{i}$.

\textbf{Softmax Acquisition.} %
The next simplest variant uses the actual scores instead of the ranks. Again, it perturbs the scores by a Gumbel-distributed random variable $\epsilon_i \sim \gumbel{0}{\beta^{-1}}$ 
\begin{align}
    s^\text{softmax}(i; \Itrain):= s(i; \Itrain) + \epsilon_i.
\end{align}
However, this makes no assumptions about the semantics of the absolute values of the scores: the softmax function is invariant to constants shifts. 
Hence, the sampling distribution will only depend on the \emph{relative scores} and not their absolute value.

\textbf{Power Acquisition.} For many scoring functions, the scores are non-negative, and a score close to zero means that the sample is not informative in the sense that we do not expect it will improve the model---we do not want to sample it.
This is the case with commonly used score functions such as BALD and entropy.
BALD measures the expected information gain. When it is zero for a sample, we do not expect anything to be gained from acquiring a label for that sample. Similarly, entropy is upper-bounding BALD, and the same consideration applies. 
This assumption also holds for other scoring functions such as the standard deviation and variation ratios; see \Cref{sec:method_other_scoring_functions}.
To take this into account, the last variant models the future log scores as perturbations of the current log score with Gumbel-distributed noise
\begin{align}
    s^\text{power}(i; \Itrain):= \log s(i; \Itrain) + \epsilon_i.
\end{align}
By \Cref{prop:gumbel_top_k}, this is equivalent to sampling from a power distribution
\begin{align}
    \pcof{power}{i} \propto \left (\frac{1}{s(i ; \Itrain)} \right )^{-\beta}.
    \label{eq:power_sampling}
\end{align}
This may be seen by noting that $\exp(\beta \log s(i;\Itrain)) = s(i;\Itrain)^\beta$.
Importantly, as scores $\to 0$, the (perturbed) log scores $\to -\infty$ and will have probability mass $\to 0$ assigned. This variant takes the absolute scores into account and avoids data points with score $0$.

Given the above considerations, when using BALD, entropy, and other appropriate scoring functions, power acquisition is the most sensible. Thus, we expect it to work best. Indeed, we find this to be the case in the toy experiment on Repeated-MNIST \citep{kirsch2019batchbald} depicted in \Cref{fig:rmnist}.
However, even soft-rank acquisition often works well in practice compared to top-\batchvar acquisition; see also appendix \S\ref{sec:power_softmax_softrank_comparison} for a more in-depth comparison.
In the rest of the main paper, we mostly focus on power acquisition. We include results for all methods in \S\ref{appsec:experiments}.
In \Cref{appsec:stochastic_batch_acquisition_impl}, we present the simple implementation of the stochastic acquisition variants we use in our experiments.

\section{Related Work}
\label{sec:related_work}

Researchers in active learning \citep{atlasTraining1990,Settles2010} have identified the importance of \textit{batch} acquisition as well as the failures of top-\batchvar acquisition using straightforward extensions of single-sample methods in a range of settings including support vector machines \citep{campbellQuery2000,schohnLess2000,brinkerIncorporating2003,hoi2006batch,guoDiscriminative2008}, GMMs \citep{azimiBatch2012}, and neural networks \citep{senerActive2018,kirsch2019batchbald,ash2020deep,baykal2021lowregret}.

Many of these methods aim to introduce structured diversity to batch acquisition that accounts for the \textit{interaction} of the points acquired in the learning process.
In most cases, the computational complexity scales poorly with the acquisition size (\batchvar) or pool size ($M$), for example because of the estimation of joint mutual information \citep{kirsch2019batchbald}; the $\mathcal{O}(\batchvar M)$ complexity of using a k-means++ initialisation scheme \citep{ash2020deep}, which approximates k-DPP-based batch active learning \citep{biyik2019batch}, or Frank-Wolfe optimization \citep{pinslerBayesian2019}; or the $\mathcal{O}(M^2\log M)$ complexity of methods based on \batchvar-centre coresets \citep{senerActive2018} (although heuristics and continuous relaxations can improve this somewhat).
In contrast, we examine simple and efficient stochastic strategies for adapting well-known single-sample acquisition functions to the batch setting. 
The proposed stochastic strategies are based on observing that acquisition scores would change as new points are added to the acquisition batch and modelling this difference for additional batch samples in the most naive way, using Gumbel noise.
The presented stochastic extensions have the same complexity $\mathcal{O}(M \log\batchvar)$ as naive top-\batchvar batch acquisition, yet outperform it, and they can perform on par with above more complex methods.

For multi-armed bandits, Soft Acquisition is also known as Boltzmann exploration \citep{cesa2017boltzmann}. It has also been shown that adding noise to the scores, specifically via Thompson sampling, is effective for choosing informative batches \citep{kalkanli2021batched}.
Similarly, in reinforcement learning, stochastic prioritisation has been employed as \textit{prioritized replay} \citep{schaul2016prioritized} which may be effective for reasons analogous to those motivating the approach examined in this work.

While stochastic sampling has not been extensively explored for acquisition in deep active learning, most recently it has been used as an auxiliary step in diversity-based active learning methods that rely on clustering as main mechanism \citep{ash2020deep, citovsky2021batch}. 
\citet{kirsch2019batchbald} empirically find that additional noise in the acquisition scores seems to benefit batch acquisition but do not investigate further.
\citet{fredlund2010bayesian} suggest modeling single-point acquisition as sampling from a ``\emph{query density}'' modulated by the (unknown) sample density $\pof{x}$ and analyze a binary classification toy problem.
\citet{farquhar_statistical_2020} propose stochastic acquisition as part of de-biasing actively learned estimators.

Most relevant to this work, and building on \citet{fredlund2010bayesian} and \citet{farquhar_statistical_2020}, \citet{zhan2022asymptotic} propose a stochastic acquisition scheme that is asymptotically optimal. They normalize the acquisition scores via the softmax function to obtain a query density function for unlabeled samples and draw an acquisition batch from it, similar to SoftmaxEntropy. Their method aims to achieve asymptotic optimality for active learning processes by mitigating the impact of bias. 
In contrast, in this work, we examine multiple stochastic acquisition strategies based on score-based or rank-based distributions and apply these strategies to several single-sample acquisition functions, such as BALD and entropy (and standard deviation, variation ratios, see \Cref{fig:rmnist_other_acquisition_functions}); and we focus on active learning in a (Bayesian) deep learning setting. As such our empirical results and additional proposed strategies can be seen as complementary to their work.

Thus, while stochastic sampling is generally well-known within acquisition functions, to our knowledge, this work is the first\footnote{A workshop version was presented at ICML 2021, and the first submission of this work was concurrent to \citet{zhan2022asymptotic}.} to investigate simple stochastic sampling methods entirely as alternatives to naive top-\batchvar acquisition in (Bayesian) deep active learning and to compare them to more complex approaches in various settings.

\section{Experiments}

\label{sec:experiments}

In this section, we empirically verify that the presented stochastic acquisition methods (a) outperform top-\batchvar acquisition and (b) are competitive with specially designed batch acquisition schemes like BADGE~\citep{ash2020deep} and BatchBALD \citep{kirsch2019batchbald}; and are vastly cheaper than these more complicated methods. 

To demonstrate the seriousness of the possible weakness of recent batch acquisition methods, we use a range of datasets.
These experiments show that the performance of the stochastic extensions is not dependent on the specific characteristics of any particular dataset.
Our experiments include computer vision, natural language processing (NLP), and causal inference (in \S\ref{sec:finetuning_beta}).
We show that stochastic acquisition helps avoid selecting redundant samples on Repeated-MNIST~\citep{kirsch2019batchbald}, examine performance in active learning for computer vision on EMNIST~\citep{cohen2017emnist}, MIO-TCD~\citep{luo2018mio}, Synbols \citep{lacoste2020synbols}, and CLINC-150~\citep{larson-etal-2019-evaluation} for intent classification in NLP. MIO-TCD is especially close to real-world datasets in size and quality. In appendix \S\ref{sec:synbols}, we further investigate edges cases using the Synbols dataset under different types of biases and noise, and in \Cref{appsec:acs-fw-experiments}, we also separately examine stochastic batch acquisition using last-layer MFVI models on CIFAR-10~\citep{krizhevsky2009learning}, SVHN~\citep{netzer2011reading}, Repeated-MNIST, Fashion-MNIST~\citep{xiao2017} and compare to ACS-FW \citep{pinslerBayesian2019}.

In the main paper, we consider BALD as scoring function. We examine predictive entropy on many of these datasets in the appendix. Additionally, other scoring functions are examined on Repeated-MNIST in appendix \S\ref{appsec:exp_repeated_mnist_other_acq_functions}. Overall, we observe similar results as for BALD.

For the sake of legible figures, we mainly focus on power acquisition in this section, as it fits BALD and entropy best: the scores are non-negative, and zero scores imply uninformative samples.
We show that all three methods (power, softmax, softrank) can perform similarly in appendix \S\ref{sec:power_softmax_softrank_comparison}.

We are not always able to compare to BADGE and BatchBALD because of computational limitations of those methods.
BatchBALD is computationally infeasible for large acquisition sizes ($\ge10$) because of time constraints, cf.\ \Cref{tab:runtime}. When possible, we use BatchBALD with acquisition size 5 as baseline. Note that this gives BatchBALD an advantage as it is known to perform better with smaller acquisition sizes \citep{kirsch2019batchbald}.
Similarly, BADGE ran out of memory for large dataset sizes, such as EMNIST `ByMerge' with 814,255 examples, independently of the acquisition size.

Figures interpolate linearly between available points, and we show 95\% confidence intervals.

\textbf{Experimental Setup \& Compute.} We document the experimental setup and model architectures in detail in appendix \S\ref{sec:exp_setup}. Our experiments used about 25,000 compute hours on Titan RTX GPUs.

\begin{figure}[t]
    \captionsetup[subfigure]{aboveskip=0pt,belowskip=-1pt}
    \begin{subfigure}[t]{0.49\linewidth} %
        \centering %
        \includegraphics[width=\linewidth]{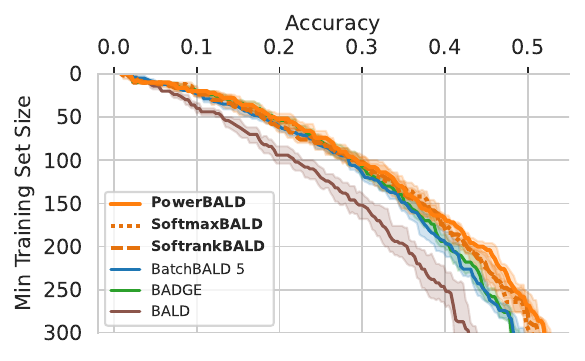} %
        \caption{EMNIST `Balanced' (5 trials)} %
        \label{fig:emnist_balanced} %
    \end{subfigure} %
    \hfill
    \begin{subfigure}[t]{0.49\linewidth} %
        \centering %
        \includegraphics[width=\linewidth]{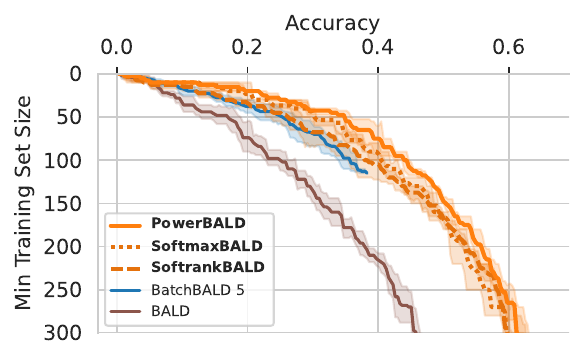} %
        \caption{EMNIST `ByMerge' (5 trials)} %
        \label{fig:emnist} %
    \end{subfigure} %
    \hfill %
    \begin{subfigure}[t]{0.49\linewidth} %
        \centering %
        \includegraphics[width=\linewidth]{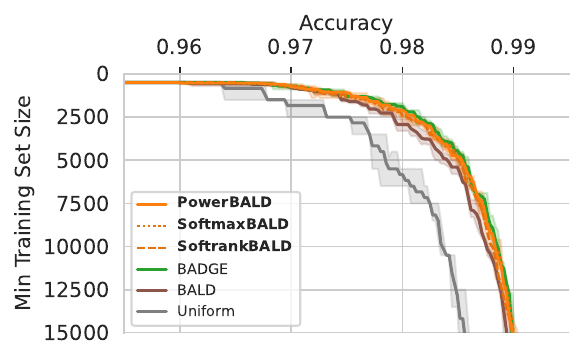} %
        \caption{MIO-TCD (3 trials)} %
        \label{fig:miotcd} %
    \end{subfigure} %
    \hfill %
    \begin{subfigure}[t]{0.49\linewidth} %
        \centering %
        \includegraphics[width=\linewidth]{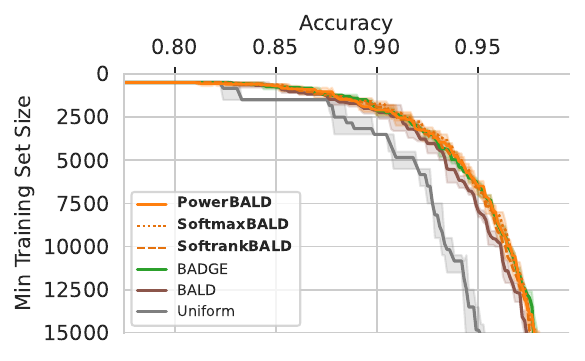} %
        \caption{
        Synbols with minority groups (10 trials)
        } %
        \label{fig:synbols_underrepresented} %
    \end{subfigure} %
    \caption{
        \emph{Performance on various datasets.} BatchBALD took infeasibly long on these datasets \& acquisition sizes.
        \textbf{(\subref{fig:emnist_balanced})} \emph{EMNIST `Balanced':} On 132k samples, PowerBALD (acq.\ size 10) outperforms BatchBALD (acq.\ size 5) and BADGE (acq.\ size 10).
        \textbf{(\subref{fig:emnist})} \emph{EMNIST `ByMerge':} On 814k samples, PowerBALD (acq.\ size 10) outperforms BatchBALD (acq.\ size 5). BADGE (not shown) OOM'ed, and BatchBALD took $>12$ days for 115 acquisitions.
        \textbf{(\subref{fig:miotcd})} \emph{MIO-TCD:} PowerBALD performs better than BALD and on par with BADGE (all acq.\ size $100$).
        \textbf{(\subref{fig:synbols_underrepresented})} \emph{Synbols with minority groups:} PowerBALD performs on par with BADGE (all acq.\ size $100$). %
    }
\end{figure}

\textbf{Runtime Measurements.}\label{subsec:runtime} %
We emphasize that the stochastic acquisition strategies are much more computationally efficient compared to specialised batch-acquisition approaches like BADGE and BatchBALD.
Runtimes, shown in \Cref{tab:runtime}, are essentially identical for top-\batchvar and the stochastic versions.
Both are orders of magnitude faster than BADGE and BatchBALD even for small batches.
Unlike those methods, stochastic acquisition scales \textit{linearly} in pool size and \emph{logarithmically} in acquisition size.
Runtime numbers do not include the cost of retraining models (identical in each case). %
The runtimes for top-\batchvar and stochastic acquisition appear constant over \batchvar because the execution time is dominated by fixed-cost memory operations.
The synthetic dataset used for benchmarking has 4,096 features, 10 classes, and 10,000 pool points.

\textbf{Repeated-MNIST.} %
Repeated-MNIST \citep{kirsch2019batchbald} duplicates MNIST a specified number of times and adds Gaussian noise to prevent perfect duplicates.
Redundant data are incredibly common in industrial applications but are usually removed from standard benchmark datasets.
The controlled redundancies in the dataset allow us to showcase pathologies in batch acquisition methods.
We use an acquisition size of 10 and 4 dataset repetitions.

\Cref{fig:rmnist} shows that PowerBALD outperforms top-\batchvar BALD.
While much cheaper computationally, cf. \Cref{tab:runtime}, PowerBALD also outperforms BADGE and even performs on par with BatchBALD. We use an acquisition size of 10 for all methods, except for BatchBALD, for which we use an acquisition size of 5, as explained above.
Note that BatchBALD performs better for smaller acquisition sizes while BADGE (counterintuitively) can perform better for larger ones; see \Cref{appfig:rmnist_badge_ablation} in the appendix for an ablation. BatchBALD, BALD, and the stochastic variants all become equivalent for acquisition size 1 when points are acquired individually, which performs best \citet{kirsch2019batchbald}.

\textbf{Computer Vision: EMNIST.} %
EMNIST \citep{cohen2017emnist} contains handwritten digits and letters and comes with several splits: 
we examine the `Balanced` split with 131,600 samples in \Cref{fig:emnist_balanced}\footnote{This result exactly reproduces BatchBALD's trajectory in Figure 7 from \citet{kirsch2019batchbald}.} and the `ByMerge` split with 814,255 samples in \Cref{fig:emnist}. 
Both have 47 classes.
We use an acquisition size of 5 for BatchBALD, and 10 otherwise. 

We see that the stochastic methods outperform BatchBALD on it and both BADGE and BatchBALD on `Balanced' (\Cref{fig:emnist_balanced}). They do not have any issues with the huge pool set in `ByMerge` (\Cref{fig:emnist}). In the appendix, \Cref{appfig:emnist_bald,appfig:emnist_balanced_bald} show results for all three stochastic extensions, and \Cref{appfig:emnist_balanced_badge_ablation} shows an ablation of different acquisition sizes for BADGE.
For `ByMerge', BADGE ran out of memory on our machines, and BatchBALD took more than 12 days for 115 acquisitions when we halted execution.

\textbf{Computer Vision: MIO-TCD.} %
\label{sec:miotcd} %
The Miovision Traffic Camera Dataset (MIO-TCD)~\citep{luo2018mio} is a vehicle classification and localisation dataset with 648,959 images designed to exhibit realistic data characteristics like class imbalance, duplicate data, compression artefacts, varying resolution (between 100 and 2,000 pixels), and uninformative examples; see \Cref{fig:mio_tcd_examples} in the appendix.
As depicted in \Cref{fig:miotcd}, PowerBALD performs better than BALD and essentially matches BADGE despite being much cheaper to compute. We use an acquisition size of 100 for all methods.

\textbf{Computer Vision: Synbols.}
\label{sec:synbols_main}
Synbols \citep{lacoste2020synbols} is a character dataset generator which can demonstrate the behaviour of batch active learning under various edge cases \citep{lacoste2020synbols, branchaud2021can}.
In \Cref{fig:synbols_underrepresented}, we evaluate PowerBALD on a dataset with minority character types and colours.
PowerBALD outperforms BALD and matches BADGE.
Further details as well as an examination of the `spurious correlation' and `missing synbols' edge cases \citep{lacoste2020synbols, branchaud2021can} can be found in appendix \S\ref{sec:synbols}.

\textbf{Natural Language Processing: CLINC-150.} %
\label{sec:cinic150} %
We perform intent classification on CLINC-150~\citep{larson-etal-2019-evaluation}, which contains 150 intent classes plus an out-of-scope class. This setting captures data seen in production for chatbots. We fine-tune a pretrained DistilBERT model from HuggingFace~\citep{wolf-etal-2020-transformers} on CLINC-150 for 5 epochs with Adam as optimiser. 
In appendix \S\ref{appsec:clinc}, we see that PowerEntropy shows strong performance: it performs better than Entropy and almost on par with BADGE. This demonstrates that our technique is domain independent and can be easily reused for other tasks.

\textbf{MFVI Last-Layer Comparison with ACS-FW}. In \Cref{appsec:acs-fw-experiments}, we provide a comparison of the proposed stochastic acquisition functions with BALD and ACS-FW \citep{pinslerBayesian2019} in a Bayesian last-layer setting using variational inference with a mean-field Gaussian approximation (instead of Monte-Carlo dropout). We find that for smaller acquisition sizes, the stochastic acquisition functions also outperform BALD on Fashion-MNIST and Repeated-MNIST. For larger acquisition sizes on CIFAR-10 and SVHN, this is not the case, however: no single method seems to perform better than the others. However, we find that ACW-FW overall takes much longer to run than stochastic and top-\batchvar acquisition (\Cref{appfig:acsfw_main_wt}). We leave a more thorough investigation of this setting to future work.

\textbf{In Summary.} %
We have verified that stochastic acquisition functions outperform top-\batchvar batch acquisition in several settings and perform on par with more complex methods such as BADGE or BatchBALD.
Moreover, we refer the reader to \citet{jesson2021causal}, \citet{murray2021depth}, \citet{tigas2022interventions}, \citet{holmes2022efficiently}, \citet{malik2023batchgfn}, and \citet{rubashevskii2023scalable} for additional works that use the proposed stochastic acquisition functions in this paper and provide further empirical validation.

\vspace{-0.5em}
\section{Further Investigations}
\label{sec:investigation}

In this section, we examine and validate assumptions about the underlying score dynamics by examining the scores across acquisitions.
We further hypothesise about when top-\batchvar acquisition is the most detrimental to active learning.

\textbf{Why Gumbel Noise?}
Intuitively, to select the $k$-th point in the acquisition batch, we want to take into account how much additional information (increase in acquisition scores) the still-to-be-selected additional $\batchvar-k$ points will provide. As such we want to model the maximum over all possible additional candidate points that are still to be selected to complete the acquisition batch. 
Empirically, acquisition scores are similar to a truncated exponential distribution (`80/20' rule) as visualized in \Cref{fig:score_visualization,batchbald:mnist_bald_scores_intuition2}.
Note that this is a rough approximation---we do not claim that the distribution of acquisition scores really truncated exponential.
A sum of i.i.d.\ exponential variables follows the Erlang distribution (which is a special case of the Gamma distribution) and has an exponential tail.
We know that the maximum of a set of i.i.d.\ random variables that follow an exponential distribution or a distribution which has an exponential tail is known to be well approximated by a Gumbel distribution in the sample limit \citep{gumbel1954statistical}\footnote{See also the following \hyperlink{https://math.stackexchange.com/questions/3519031/convergence-in-distribution-of-maximum-of-exponentially-distributed-random-varia/}{Math StackExchange thread}.}, and thus the maximum over sums of such random variables also follows a Gumbel distribution.
Concretely, we can use the following result from \cite{garg2023extreme}:
\begin{theorem}[Extreme Value Theorem (EVT) \citep{mood1950introduction,fisher1928limiting}]
    For i.i.d.\ random variables $X_1, \ldots, X_n \sim f_X$, with exponential tails, $\lim _{n \rightarrow \infty} \max _i\left(X_i\right)$ follows the Gumbel $\left(G E V\right.$-1) distribution. Furthermore, $\mathcal{G}$ is max-stable, i.e. if $X_i \sim \mathcal{G}$, then $\max _i\left(X_i\right) \sim \mathcal{G}$ holds.
\end{theorem}

However, it is unlikely that the increase in acquisition scores can be modeled well as i.i.d exponential random variables with the \emph{same} rate, but the Gumbel approximation also seems to hold empirically for the hypoexponential distribution which is a sum of exponential distributions with \emph{different} rates. We do not have a proof for this, but we present a numerical simulation in appendix \S\ref{app:hypoexponential}.

Overall, this motivates us to use a Gumbel distribution as a simple model for the increase in acquisition scores the still-to-be-selected additional $\batchvar-k$ points will provide under the modelling assumption that the increase in acquisition scores at each step is exponentially distributed with \emph{different} rates at each step.
\citet{zhan2022asymptotic} provide a different analysis for the use of Gumbel noise in the context of active learning as we relate in \S\ref{sec:related_work}.

\textbf{Acquisition Asymptotics of Bayesian Models.}
For well-specified and well-defined Bayesian parametric models, the posterior distribution of the model parameters converges to the true parameters as the number of data points increases \citep{van2000asymptotic}.

For such models and assuming that the predictions are independent given the model parameters, the total correlation between the predictions decreases as the number of training points increases, as the posterior distribution of the model parameters becomes more concentrated around the true parameters:
\begin{align}
    \TCof{\Y_1, \ldots, \Y_\batchvar \given \x_1, \ldots, \x_\batchvar, \Dtrain} \to 0 \quad \text{as} \quad |\Dtrain| \to \infty.
\end{align}
This can be proved by noting that in the finite data limit, the posterior parameter distribution converges to the true model parameters, and the marginal distribution then factorizes.
This means that the predictions become more independent as the number of training points increases and fully independent in the infinite data limit.

The total correlation is defined as:
\begin{align}
    \TCof{\Y_1, \ldots, \Y_\batchvar \given \x_1, \ldots, \x_\batchvar, \Dtrain} \defeq \underbrace{\sum_i \Hof{\Y_i \given \x_i, \Dtrain}}_{\text{top-\batchvar Entropy}} - \underbrace{\Hof{\Y_1, \ldots, \Y_\batchvar \given \x_1, \ldots, \x_\batchvar, \Dtrain}}_{\text{`Batch Entropy'}}.,
\end{align}
We can also write the total correlation as difference between top-\batchvar BALD and BatchBALD:
\begin{align}
    \TCof{\Y_1, \ldots, \Y_\batchvar \given \x_1, \ldots, \x_\batchvar, \Dtrain} = \underbrace{\sum_i \MIof{\Y_i; \W \given \x_i, \Dtrain}}_{\text{top-\batchvar BALD}} - \underbrace{\MIof{\Y_1, \ldots, \Y_\batchvar; \W \given \x_1, \ldots, \x_\batchvar, \Dtrain}}_{\text{BatchBALD}}.
\end{align}
As the total correlation converges to $0$, the top-\batchvar BALD term (first term) becomes equal to the BatchBALD term (the second term on the right side), and the same happens for top-\batchvar entropy and `BatchEntropy', which we could similarly define.

Thus, for well-specified and well-defined Bayesian parametric models, the top-\batchvar acquisition functions will eventually become equivalent to the BatchBALD and `BatchEntropy' acquisition functions as the number of training points increases. This tells us that top-\batchvar acquisition is the most detrimental to active learning in the earlier stages of learning, when the total correlation between the predictions is still high. This is consistent with our empirical results below (`Increasing Top-\batchvar Analysis').

At the same time, as the number of training points increases and the model parameters concentrate, the expected information gain (BALD) also decreases. The mutual information with a deterministic variable is always 0, and thus:
\begin{align}
    \MIof{\Y; \W \given \x, \Dtrain} \to 0 \quad \text{as} \quad |\Dtrain| \to \infty.
\end{align}
This asymptotic behavior is a trivial but important result, as it tells us that the expected information gain (BALD) will eventually become uninformative as the number of training points increases and no better than random acquisition, and the important question is: when? Given that we only have noisy estimators, this determines until when active learning is of use compared to random acquisition.

Many different active learning methods that are considered non-Bayesian nevertheless approximate the expected information gain or the expected predictive information \citep{kirsch2022unifying,smith2023predictionoriented}, which is an expected total correlation. Hence, the considerations apply to those methods, too.

Finally, we observe that estimators such as BatchBALD, which utilize Monte-Carlo samples for parameter approximations, are inherently limited by the logarithm of the total number $M$ of these samples, \( \log M \). 
This constraint implies that their informativeness can diminish rapidly. Concretely, consider an empirical estimator \( \aMIof{\cdot; \W} \) built using Monte-Carlo samples \( \w_1, \ldots, \w_k \).
This is analogous to computing the exact mutual information \( \MIof{\cdot; \hat{\W}} \) with the `empirical' random variable \( \hat{\W} \), which uniformly samples from \( \w_1, \ldots, \w_k \). Given that the discrete mutual information is restricted by the entropy of its terms, we have:
\[
\aMIof{\cdot; \W} = \MIof{\cdot; \hat{\W}} \le \Hof{\hat{\W}} = \log M.
\]

For instance, BatchBALD employs a greedy approach to select the \( t \)-th acquisition sample in its batch.
It does this by maximizing the empirical \( \aMIof{\Y ; \W \given \x, \Y_{t-1}, \x_{t-1}, \ldots, \Y_1, \x_1 \Dtrain} \) for the subsequent candidate samples denoted by \( \x \). 
We can represent this relationship as:
\[
\log M \ge \aMIof{\Y_1, \ldots, \Y_\batchvar; \W \given \x_1, \ldots, \x_\batchvar, \Dtrain} = \sum_{i=1}^\batchvar \aMIof{\Y ; \W \given \x, \Y_y, \x_y, \ldots, \Y_1, \x_1 \Dtrain}.
\]
From the above equation, as \( \batchvar \) grows, the estimated \( \aMIof{\Y_\batchvar ; \W \given \x_\batchvar, \Y_{\batchvar-1}, \x_{\batchvar-1}, \ldots, \Y_1, \x_1 \Dtrain} \) approaches zero since it is restricted by \( \log M \). For a scenario with \( M = 100 \) parameter samples (leading to \( \log_{10} M = 2 \)), BatchBALD might rapidly lose its informativeness after just two acquisitions in a classification scenario involving 10 categories. This situation arises if the pool set contains a minimum of two highly diverse, that is uncorrelated, data points with maximum disagreement.

\begin{figure}[t]
    \begin{minipage}[t]{.66\linewidth} %
        \captionsetup[subfigure]{aboveskip=-1pt,belowskip=-1pt}
        \begin{subfigure}[t]{0.5\linewidth} %
            \centering %
            \includegraphics[width=\textwidth]{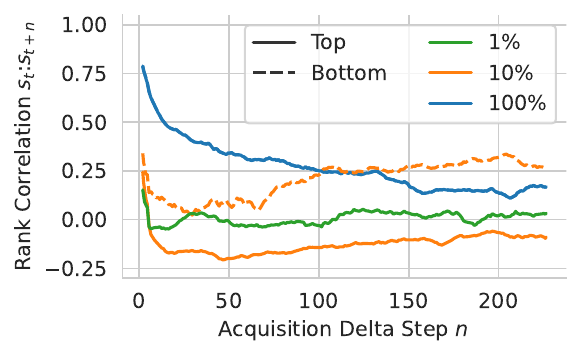} %
            \caption{$t=0$} %
            \label{subfig:bald_rank_corr_0} %
        \end{subfigure}
        \hfill
        \begin{subfigure}[t]{0.5\linewidth} %
            \centering %
            \includegraphics[width=\textwidth]{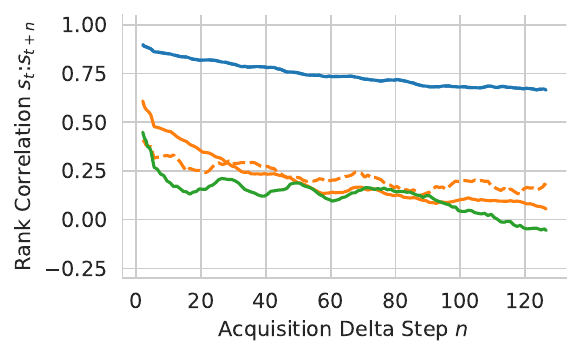} %
            \caption{$t=100$} %
            \label{subfig:bald_rank_corr_100} %
        \end{subfigure}
        \caption{
            \emph{Rank correlations for BALD scores on MNIST between the initial scores and later scores of the top- or bottom-scoring 1\%, 10\% and 100\% of test points (smoothed with a size-10 Parzen window).}
            Rank-orders decorrelate faster for the most informative samples and in the early stages of training.
            The top scorers' ranks \textit{anti-correlate} after roughly 40 (100) acquisitions unlike the bottom ones. %
            Later in training, the acquisition scores stay more strongly correlated. This suggests \emph{the acquisition size could be increased later in training}.
        }
        \label{fig:bald_full_rank_correlations_detail}
    \end{minipage}%
    \hfill %
    \begin{minipage}[t]{.33\linewidth} %
        \centering %
        \includegraphics[width=\linewidth]{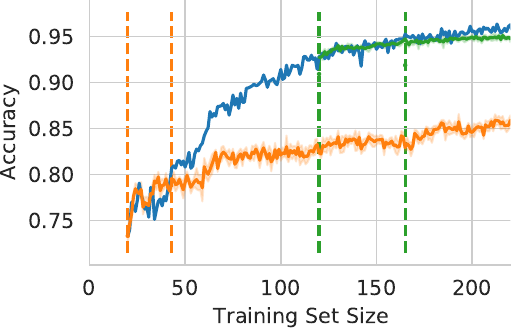} %
        \vspace{-0.29em} %
        \caption{ %
        \emph{Top-\batchvar acquisition hurts less later in training (BALD on MNIST).} At $t\in\{20, 100\}$ (\textcolor{sns0}{blue}), we keep acquiring samples using the BALD scores from those two steps. At $t=20$ (\textcolor{sns1}{orange}), the model performs well for $\approx20$ acquisitions; at $t=120$ (\textcolor{sns2}{green}), for $\approx50$; see \S\ref{subsec:query_size_analysis}. %
        } %
        \label{fig:BALD_predetermined_acquisitions}
    \end{minipage}%
\end{figure}
\begin{figure}
    \begin{subfigure}{0.33\linewidth}
        \centering
        \includegraphics[width=\linewidth]{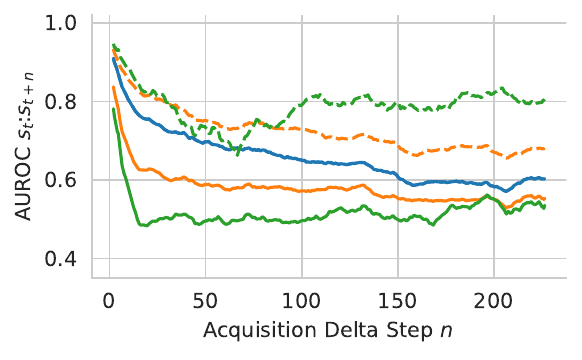}
        \caption{$t=0$}
        \label{fig:score_auroc_detail_0}
    \end{subfigure}\hfill
    \begin{subfigure}{0.33\linewidth}
        \centering
        \includegraphics[width=\linewidth]{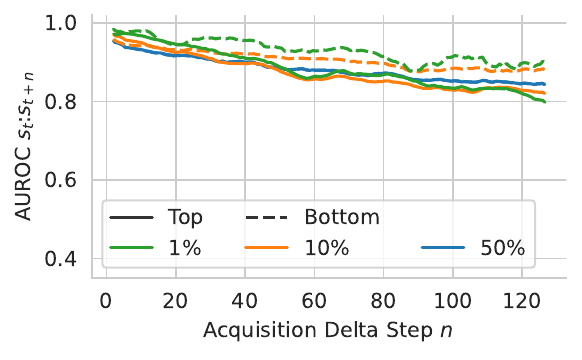}
        \caption{$t=100$}
        \label{fig:score_auroc_detail_100}
    \end{subfigure}\hfill
    \begin{subfigure}{0.33\linewidth}
        \centering
        \includegraphics[width=\linewidth]{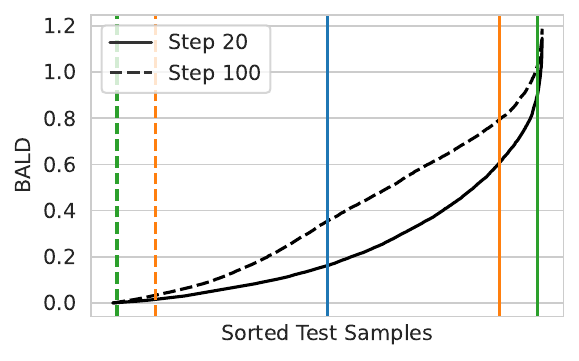}
        \caption{Scores at Step 20 and 100.}
        \label{fig:score_visualization}      
    \end{subfigure}
    \caption{\emph{AUROCs for BALD scores on MNIST between the initial scores and later scores of the top- or bottom-scoring 1\%, 10\% and 50\% of test points (smoothed with a size-10 Parzen window).} AUROC between original points as `ground truth' and later scores as predictors. This is equivalent to the probability that the acquisition score at $n$ for a point in $t=0$'s top or bottom 1\%, etc. is larger than points outside. This tells us how likely other points outside the batch have higher acquisition scores. This ignores the ranking of points otherwise.
    \textbf{(\subref{fig:score_auroc_detail_0}, \subref{fig:score_auroc_detail_100})} Points in the top quantiles are superseded by other points in the top quantiles in the later acquisitions to a large degree. This is much more pronounced early in the training than later. The bottom quantiles are more stable.
    \textbf{(\subref{fig:score_visualization})} The overall score distributions at steps $t=0, 100$ are visualized and the relevant top and bottom quantiles are marked.
    }
    \label{fig:score_aurocs_and_vis}
\end{figure}

\textbf{Rank Correlations Across Acquisitions.} %
\label{subsec:rank_correlation} %
In \Cref{sec:method}, we made the following assumptions: (1) the acquisition scores $s_t$ at step $t$ are a proxy for scores $s_{t'}$ at step $t' > t$; (2) the larger $t'-t$ is, the worse a proxy $s_t$ is for $s_t'$; (3) this effect is the largest for the most informative points.

We demonstrate these empirically by examining the Spearman rank correlation between scores during acquisition.
Specifically, we train a model for $n$ steps using BALD as single-point acquisition function.
We compare the rank order at each step to the starting rank order at step $t$. To denoise the rankings across $n$, we smooth the rank correlations with a Parzen window of size 10 and to reduce the effect of noise to the rank order, we round all scores to 2 decimal places. This especially removes unimportant rank changes for points with low scores around 0. 

\Cref{fig:bald_full_rank_correlations} shows that acquisition scores become less correlated as more points are acquired.
\Cref{fig:bald_full_rank_correlations_detail}\subref{subfig:bald_rank_corr_0} shows this in more detail for the top and bottom 1\%, 10\% or 100\% of scorers of the test set across acquisitions starting at step $t=0$ for a model initialised with 20 points.
The top-10\% scoring points (\textcolor{sns2}{solid green}) quickly become uncorrelated across acquisitions and even become \textit{anti-correlated}.
In contrast, the points overall (\textcolor{sns0}{solid blue}) correlate well over time (although they have a much weaker training signal on average).
This result supports all three of our hypotheses.

At the same time, we see that as training progresses, and we converge towards the best model, the order of scores becomes more stable across acquisitions.
In \Cref{subfig:bald_rank_corr_100} the model begins with 120 points ($t=100$), rather than 20 ($t=0$).
Here, the most informative points are less likely to change their rank---even the top-1\% ranks do not become \textit{anti-correlated}, only decorrelated.
Thus, we hypothesise that further in training, we might be able to choose larger \batchvar.

We do not display the correlations for bottom 1\% of scorers as their scores are close to 0 throughout training and thus noisy and uninformative, see also \Cref{fig:score_visualization} for this.

Overall, this analysis can be confounded by noisy samples and swaps in rank order between samples with similar scores that are not meaningful in the sense that they would not influence batch acquisition.

Thus, to provide a different analysis, we also consider the more direct question in \Cref{fig:score_aurocs_and_vis} of how likely other samples have higher acquisition scores at $t+n$ than the top samples from $t$ for different quantiles (1\%, 10\%, 50\%) of the test set. As a sanity check, we also examine the bottom quantiles.
This is equivalent to computing the \emph{AUROC} between the original points as `ground truth' and later scores as predictors: for acquisition step n, the AUROC is
$\pof{S^{t, \text{top/bottom } p \%}_{n} \lessgtr S^{t, \text{bottom/top } 1 - p \%}_{n}}$
with $S^{t, \text{top/bottom } p \%}_{n} \defeq \{s_{t+n, i} : s_{t, i} \in \text{top/bottom $p\%$ of } \{s_t, j\}\}$. Specifically, we set up a binary classification with the top or bottom 1\%, 10\% or 50\% of the test set as positive and the rest as negative.
This again helps us quantify how much the scores meaningfully change across acquisition steps.
These results match the previous ones and provide another validation for the mentioned assumptions. 

\textbf{Increasing Top-\batchvar Analysis.} %
\label{subsec:query_size_analysis} %
Another way to investigate the effect of top-\batchvar selection is to freeze the acquisition scores during training and then continue single-point `active learning' as if those were the correct scores. Comparing this to the performance of regular active learning with updated single-point scores allows us to examine how well earlier scores perform as proxies for later scores.
We perform this toy experiment on MNIST, showing that freezing scores early on greatly harms performance while doing it later has only a small effect (\Cref{fig:BALD_predetermined_acquisitions}).
For frozen scores at a training set size of 20 (73\% accuracy, $t=0$), the accuracy matches single-acquisition BALD up to a training set size of roughly 40 (\textcolor{sns1}{dashed orange lines}) before diverging to a lower level.
But when freezing the scores of a more accurate model, at a training set size of 120 labels (93\% accuracy, $t=100$),  selecting the next fifty points according to those frozen scores performs indistinguishably from step-by-step acquisition (\textcolor{sns2}{dashed green lines}).
This result shows that top-\batchvar acquisition hurts less later in training but can negatively affect performance at the beginning of training.

These observations lead us to ask whether we could dynamically change the acquisition size: with smaller acquisition batches at the beginning and larger ones towards the end of active learning. We leave the exploration of this for future work.

\subsection{Ablation: Changing $\beta$}
\label{sec:finetuning_beta}

\begin{figure*}[t]
    \centering
    \begin{minipage}[t]{\textwidth} %
        \captionsetup[subfigure]{aboveskip=-1pt,belowskip=-1pt} %
        \begin{subfigure}[t]{0.325\textwidth} %
            \centering %
            \includegraphics[width=\textwidth]{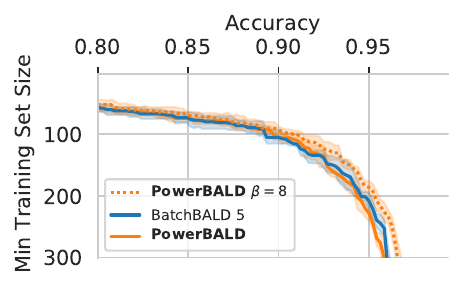} %
            \caption{
                Repeated-MNISTx4 
            } %
            \label{fig:rmnist_finetuned} %
        \end{subfigure}
        \hfill
        \begin{subfigure}[t]{0.325\textwidth} %
            \centering %
            \includegraphics[width=\textwidth]{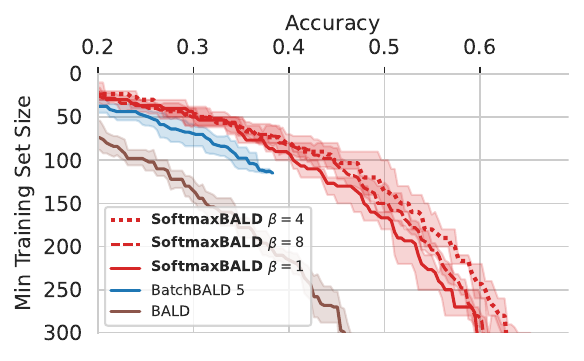} %
            \caption{
                EMNIST `ByMerge'
            } %
            \label{fig:emnist_finetuned} %
        \end{subfigure}
        \hfill %
        \begin{subfigure}[t]{0.325\textwidth} %
            \centering
            \includegraphics[width=\textwidth]{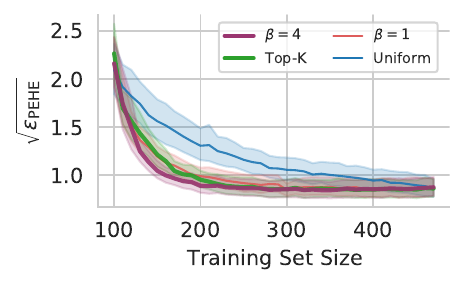} %
            \caption{
                IHDP
            } %
            \label{fig:ihdp_causal} %
        \end{subfigure} %
        \caption{
            \emph{Effect of changing $\beta$.} 
            \textbf{(\subref{fig:rmnist_finetuned})} \emph{Repeated-MNISTx4 (5 trials):} PowerBALD outperforms BatchBALD for $\beta=8$.
            \textbf{(\subref{fig:emnist_finetuned})} \emph{EMNIST `ByMerge' (5 trials):} SoftmaxBALD for $\beta=4$ performs best. 
            \textbf{(\subref{fig:ihdp_causal})} \emph{IHDP (400 trials):} At high temperature ($\beta = 0.1$), CausalBALD with power acquisition is like random acquisition. As the temperature decreases, the performance improves (lower $\sqrt{\epsilon_{\mathrm{PEHE}}}$), surpassing top-\batchvar acquisition. Both experiments use an acquisition size of 10.
        } %
    \end{minipage} %
\end{figure*}

So far, we have set $\beta=1$ in the spirit of examining a simple baseline without additional hyperparameters.
The results above show that this already works well and matches the performance of much more expensive methods, raising questions about their value.
In addition, however, tuning $\beta$ may be able to further improve performance.
In the following, we show that other values of $\beta$ can yield even higher performance on Repeated-MNIST, EMNIST, and when estimating causal treatment effects; we provide many additional results in \Cref{appsec:tuning_beta}.

\textbf{Repeated-MNIST \& EMNIST.} In \Cref{fig:rmnist_finetuned}, we see that for PowerBALD the best-performing value, $\beta=8$,even outperforms BatchBALD. Similarly, tuning $\beta$ can improve the performance of other strategies as well as we see for SoftmaxBALD on EMNIST, where $\beta=4$ performs better than $\beta=8$ and $\beta=1$, showing that the best $\beta$ depends on the dataset.

\textbf{Causal Treatment Effects: Infant Health Development Programme.} 
Active learning for Conditional Average Treatment Effect (CATE) estimation \cite{heckman1997matching, heckman1998matching, hahn1998role, abrevaya2015estimating} on data from the Infant Health and Development Program (IHDP) estimates the causal effect of treatments on an infant's health from observational data.
Statistical estimands of the CATE are obtainable from observational data under certain assumptions.  
\citet{jesson2021causal} show how to use active learning to acquire data for label-efficient estimation.
Among other subtleties, this prioritises the data for which matched treated/untreated pairs are available.

We follow the experiments of \citet{jesson2021causal} on both synthetic data and the semi-synthetic IHDP dataset \citep{hill2011bayesian}, a commonly used benchmark for causal effects estimation.
In \Cref{fig:ihdp_causal} we show that power acquisition performs significantly better than both top-\batchvar and uniform acquisition, using an acquisition size of 10 in all cases with further.
We provide additional results on semi-synthetic data in appendix \S\ref{app:causal_synth_temp}.
Note that methods such as BADGE and BatchBALD are not well-defined for causal-effect estimation, while stochastic batch acquisition remains applicable and is effective when fine-tuning $\beta$.

Performance on these tasks is measured using the expected \emph{Precision in Estimation of Heterogeneous Effect (PEHE)} \citep{hill2011bayesian} such that $\sqrt{\epsilon_{\mathrm{PEHE}}} = \sqrt{\mathbb{E}[(\widetilde{\tau}(\mathbf{X}) - \tau(\mathbf{X}))^2]}$ \citep{shalit2017estimating} where $\widetilde{\tau}$ is the estimated CATE and $\tau$ is CATE (i.e.\ a form of RMSE).

\textbf{Limitations.} Although we highlight the possibility for future work to adapt $\beta$ to specific datasets or score functions, our aim is not to offer a practical recipe for this to practitioners.
Our focus is on showing how even the simplest form of stochastic acquisition already raises questions for some recent more complex methods.

\section{Discussion \& Conclusion}
\label{sec:discussion}
Our experiments demonstrate that the stochastic sampling approach we have examined is orders of magnitude faster than sophisticated batch-acquisition strategies like BADGE and BatchBALD while retaining comparable performance in settings across computer vision, NLP, and causal inference.
Compared to the flawed top-\batchvar batch acquisition heuristic, it is never worse: we see no reason to continue using top-\batchvar acquisition.

Importantly, our work raises serious questions about these current methods.
If they fail to outperform such a simple baseline in a wide range of settings, do they model the interaction between points sufficiently well?
If so, are the scores themselves unreliable?
We call on future work in batch active learning to at least demonstrate that it can outperform  simple stochastic batch acquisition strategies.

At the same time, this opens doors for improved methods.
Although we only put forward a naive model due its computational and mathematical simplicity, future work can explore more sophisticated modelling of the predicted score changes that take the current model and dataset into account.
In its simplest form, this might mean choosing the $\beta$ hyperparameter of the acquisition distribution based on the dataset and adapting it online.
Our experiments also highlight that the acquisition size could be adapted dynamically, with larger batch sizes acceptable in later acquisition steps.

\section*{Acknowledgements}

The authors would like to thank their anonymous TMLR reviewers for their kind, constructive and helpful feedback during the review process, which has significantly improved this work. We would also like to thank
Freddie Bickford Smith and Tom Rainforth as well as the members of OATML in general for their feedback at various stages of the project. SF is supported by the EPSRC via the Centre for Doctoral Training in Cybersecurity at the University of Oxford as well as Christ Church, University of Oxford.
AK is supported by the UK EPSRC CDT in Autonomous Intelligent Machines and Systems (grant reference EP/L015897/1). 

\FloatBarrier

\bibliographystyle{tmlr}
\bibliography{references}

\newpage

\appendix

\newcommand{\miotcd}{MIO-TCD\xspace}
\newcommand{\synminority}{Synbols Minority Groups\xspace}
\newcommand{\synmissing}{Synbols Missing Characters\xspace}
\newcommand{\synspurious}{Synbols Spurious Correlations\xspace}

\section{Ethical impact}
We do not foresee any ethical risks related to this work.
Insofar as the sampling methods we have examined reduce computational costs, applications might benefit from reduced resource consumption.
The methods appear to be as good or better than alternatives on evaluations examining the ability to learn from data with under-represented groups and on evaluations that measure the difference between performance for the most- and least-represented groups, which may aid algorithmic fairness (see \ref{sec:synbols}).

\section{Method}
\subsection{Other scoring functions}
\label{sec:method_other_scoring_functions}

Following \citet{gal2017deep}, we also examine using variation ratios (least confidence) and standard deviation as scoring functions.

\textbf{Variation Ratio.} Also known as \emph{least confidence}, the variation-ratios is the complement of the least-confident class prediction:
\begin{align}
    s_{\textup{variation-ratios}}(i; \Itrain) &\coloneqq 1 - \max_y \pof{y \given X=x_i}.
\end{align}
This scoring function is non-negative and a score of $0$ means that the sample is uninformative: a score of $0$ means that the respective prediction is one-hot, which means that the expected information gain is also $0$ as can be easily verified. Thus, variation ratios matches the intuitions behind power acquisition.

\textbf{Standard Deviation.} The standard deviation score function measures the sum of the class probability deviations and is closely related to the BALD scores:
\begin{align}
    s_{\textup{std-dev}}(i; \Itrain) &\coloneqq \sum_y \sqrt{\mathrm{Var}_{\pof{\w}}[\pof{y \given X=x_i, \w}]}.
\end{align}
This scoring function is also non-negative, and no variance for the predictions implies a zero expected information gain and thus an uninformative sample. Thus, the standard deviation should also perform well with power acquisition.

\subsection{Proof of \Cref{prop:gumbel_top_k}}
\label{app:proof_gumbel_top_k}

First, we remind the reader that a random variable $G$ is Gumbel distributed $G \sim \gumbel{\mu}{\beta}$ when its cumulative distribution function follows $\pof{G \le g} = \exp(-\exp(-\frac{g - \mu}{\beta}))$.

Furthermore, the Gumbel distribution is closed under translation and positive scaling:
\begin{lemma}
Let $G \sim \gumbel{\mu}{\beta}$ be a Gumbel distributed random variable, then:
\begin{align}
    \alpha G + d \sim \gumbel{d + \alpha \mu}{\alpha \beta}.
\end{align}
\end{lemma}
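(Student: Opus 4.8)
The plan is to verify the claim directly at the level of cumulative distribution functions, which is the natural tool for an affine transformation of a scalar random variable. First I would set $Y \defeq \alpha G + d$ and observe that, since $\alpha > 0$, the map $g \mapsto \alpha g + d$ is a strictly increasing bijection of $\mathbb{R}$; hence the event $\{Y \le y\}$ coincides with $\{G \le (y-d)/\alpha\}$ for every $y \in \mathbb{R}$, so that $\pof{Y \le y} = \pof{G \le (y-d)/\alpha}$. I would then substitute $g = (y-d)/\alpha$ into the Gumbel CDF recalled just above the lemma.

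This substitution gives
\begin{align}
\pof{Y \le y} = \exp\!\left(-\exp\!\left(-\frac{(y-d)/\alpha - \mu}{\beta}\right)\right),
\end{align}
and the only remaining work is the algebraic simplification of the inner exponent: multiplying numerator and denominator by $\alpha > 0$ yields $\frac{(y-d)/\alpha - \mu}{\beta} = \frac{y - d - \alpha\mu}{\alpha\beta} = \frac{y - (d + \alpha\mu)}{\alpha\beta}$. Thus $\pof{Y \le y} = \exp(-\exp(-\frac{y - (d+\alpha\mu)}{\alpha\beta}))$, which is exactly the CDF of $\gumbel{d + \alpha\mu}{\alpha\beta}$; since a distribution on the real line is determined by its CDF, this finishes the proof.

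I do not expect a genuine obstacle here — the argument is a one-line change of variables followed by routine algebra. The only point that actually uses the hypothesis is the positivity of $\alpha$: it is what lets the affine map pass through the inequality without flipping its direction, and it keeps $\alpha\beta$ a legitimate (positive) scale parameter. For $\alpha < 0$ the same manipulation would instead produce a reflected Gumbel law, so the restriction to positive scaling in the statement is essential rather than cosmetic. If one preferred to avoid CDFs, an alternative would be to use the stochastic representation $G \overset{d}{=} \mu - \beta\log(-\log U)$ with $U \sim \text{Uniform}(0,1)$ and push the parameters through the affine map, but the CDF computation above is shorter and self-contained given the definition already stated in the appendix.
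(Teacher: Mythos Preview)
Your proposal is correct and follows essentially the same route as the paper's own proof: both compute the CDF of $\alpha G + d$ by reducing $\{\alpha G + d \le y\}$ to $\{G \le (y-d)/\alpha\}$, substitute into the Gumbel CDF, and simplify the inner exponent to read off the new location $d+\alpha\mu$ and scale $\alpha\beta$. Your additional remark that $\alpha>0$ is needed to preserve the inequality and keep the scale positive is a helpful clarification that the paper leaves implicit.
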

\begin{proof}
We have $\pof{\alpha G + d \le x} = \pof{G \le \frac{x - d}{\alpha}}$. Thus, we have:
\begin{align}
    \pof{\alpha G + d \le x} &= \exp(-\exp(-\frac{\frac{x - d}{\alpha} - \mu}{\beta})) \\
    &= \exp(-\exp(-\frac{x - (d + \alpha \mu)}{\alpha \beta})) \\
    \Leftrightarrow & \alpha G + d \sim \gumbel{d + \alpha \mu}{\alpha \beta}.
\end{align}
\end{proof}

We can then easily prove \Cref{prop:gumbel_top_k} using Theorem 1 from \citet{kool2019stochastic}, which we present it here slightly reformulated to fit our notation:
\begin{lemma}
\label{lem:koolthm}
For $k \leq n$, let $I_{1}^{*}, \ldots, I_{k}^{*}=\arg \operatorname{top}_k \{s_i + \epsilon_i\}_i$ with $\epsilon_i \sim \gumbel{0}{1}$, i.i.d.. Then $I_{1}^{*}, \ldots, I_{k}^{*}$ is an (ordered) sample without replacement from the $\text{Categorical} \left(\frac{\exp s_{i}}{\sum_{j \in n} \exp s_{j}}, i \in \{1, \ldots, n\}\right)$ distribution, e.g. for a realization $i_{1}^{*}, \ldots, i_{k}^{*}$ it holds that
$$
P\left(I_{1}^{*}=i_{1}^{*}, \ldots, I_{k}^{*}=i_{k}^{*}\right)=\prod_{j=1}^{k} \frac{\exp s_{i_{j}^{*}}}{\sum_{\ell \in N_{j}^{*}} \exp s_{\ell}}
$$
where $N_{j}^{*}=N \backslash\left\{i_{1}^{*}, \ldots, i_{j-1}^{*}\right\}$ is the domain (without replacement) for the $j$-th sampled element.
\end{lemma}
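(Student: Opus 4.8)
The plan is to reduce the Gumbel-Top-$k$ statement to the memorylessness of the exponential distribution via the change of variables $E_i := \exp(-(s_i + \epsilon_i))$. Since $\epsilon_i \sim \gumbel{0}{1}$ implies $\exp(-\epsilon_i) \sim \text{Exp}(1)$, we obtain independent $E_i \sim \text{Exp}(\lambda_i)$ with rates $\lambda_i = \exp(s_i)$. Because $x \mapsto \exp(-x)$ is strictly decreasing, $\arg\operatorname{top}_k\{s_i + \epsilon_i\}_i$ listed in descending order of the scores equals the sequence of indices of the $k$ smallest values among $\{E_i\}_i$ in ascending order, i.e.\ the first $k$ ``arrivals'' in a race of independent exponential clocks with rates $\lambda_i$. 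So it suffices to show that the order of the first $k$ arrivals $(i_1^*, \ldots, i_k^*)$ occurs with probability $\prod_{j=1}^k \lambda_{i_j^*} / \sum_{\ell \in N_j^*} \lambda_\ell$; substituting $\lambda_i = \exp(s_i)$ then gives the claimed product formula.

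The key steps, in order. First I would prove the one-step ($k=1$) fact: for independent exponentials the minimizer satisfies $\pof{\argmin_\ell E_\ell = i} = \lambda_i / \sum_\ell \lambda_\ell$, via the standard computation $\pof{E_i < \min_{\ell \neq i} E_\ell} = \int_0^\infty \lambda_i e^{-\lambda_i t} \prod_{\ell \neq i} e^{-\lambda_\ell t}\,dt = \lambda_i / \sum_\ell \lambda_\ell$, which is exactly the Gumbel-Max trick in disguise. Second, I would set up an induction on $k$ using the chain rule $\pof{I_1^* = i_1^*, \ldots, I_k^* = i_k^*} = \pof{I_1^* = i_1^*}\,\pof{I_2^* = i_2^*, \ldots, I_k^* = i_k^* \mid I_1^* = i_1^*}$, so that it remains to identify the conditional law of the surviving clocks after the first arrival and apply the induction hypothesis on the index set $N \setminus \{i_1^*\}$.

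The main obstacle is precisely this conditioning step, and it is where memorylessness does the work. Conditioning on the identity and the time of the first arrival, the residual waiting times $E_\ell - E_{i_1^*}$ of the not-yet-arrived clocks $\ell \neq i_1^*$ are, by the lack-of-memory property of the exponential, again independent $\text{Exp}(\lambda_\ell)$ variables and independent of the elapsed time; moreover the order of the remaining arrivals depends only on these residuals. Hence the conditional process is an exponential race on $N \setminus \{i_1^*\}$ with unchanged rates, and the induction hypothesis yields $\pof{I_2^* = i_2^*, \ldots, I_k^* = i_k^* \mid I_1^* = i_1^*} = \prod_{j=2}^k \lambda_{i_j^*} / \sum_{\ell \in N_j^*} \lambda_\ell$. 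Multiplying by the $k=1$ factor closes the induction.

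As an alternative that avoids the change of variables, one can argue directly on the Gumbels using two facts derived from the CDF $\pof{G \le g} = \exp(-\exp(-(g - \mu)))$: that $\max_i (s_i + \epsilon_i) \sim \gumbel{\log \sum_j \exp(s_j)}{1}$, and that this maximum value is independent of the maximizer. The subtlety on this route is that after removing the argmax the survivors are Gumbels \emph{truncated} above at the max, not plain Gumbels; the redeeming computation is that the argmax of independent Gumbels conditioned to lie below any common level $m$ still has law $\exp(s_\ell)/\sum_{\ell'} \exp(s_{\ell'})$, independent of $m$ (a short integral using the substitution $u = e^{-x}$). One then proves a strengthened, truncation-invariant statement by induction so that the truncation level introduced at each step never affects the selection probabilities. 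Either route delivers the product formula; I would present the exponential-race version as the main argument, since memorylessness handles the conditioning most transparently.
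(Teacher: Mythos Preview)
Your proof is correct, but note that the paper does not actually prove this lemma at all: it merely restates Theorem~1 of \citet{kool2019stochastic} and then invokes it as a black box to derive Proposition~\ref{prop:gumbel_top_k}. So there is no ``paper's own proof'' to compare against here.

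That said, your exponential-race argument is a clean and standard route. The transformation $E_i = \exp(-(s_i+\epsilon_i)) \sim \text{Exp}(\exp s_i)$ is exactly right, the one-step minimum computation is the Gumbel-Max trick, and memorylessness is the correct tool for the inductive step. The alternative you sketch (working directly with the Gumbels and showing that the argmax of independent Gumbels truncated above by a common level $m$ still has softmax law independent of $m$) is closer in spirit to how \citet{kool2019stochastic} themselves argue, building on the max-stability and independence results of \citet{maddison2014sampling}. Either route is valid; the exponential-race version is arguably more transparent because the conditioning step is handled by a single named property rather than a bespoke integral.
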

Now, it is easy to prove the proposition:
\gumbeltopk*
\begin{proof}
As $\epsilon_i \sim \gumbel{0}{\beta^{-1}}$, define $\epsilon'_i \coloneqq \beta \epsilon_i \sim \gumbel{0}{1}$. Further, let $s'_i \coloneqq \beta s_i$. Applying \Cref{lem:koolthm} on $s'_i$ and $\epsilon'_i$, $\arg \operatorname{top}_k \{s'_i + \epsilon'_i\}_i$ yields (ordered) samples without replacement from the categorical distribution $\text{Categorical}(\frac{\exp(\beta \, s_i)}{\sum_j \exp(\beta \, s_j)}, i \in \{1, \ldots, n\})$.
However, multiplication by $\beta$ does not change the resulting indices of $\arg \operatorname{top}_k$:
\begin{align}
    \arg \operatorname{top}_k \{s'_i + \epsilon'_i\}_i = \arg \operatorname{top}_k \{s_i + \epsilon_i\}_i,
\end{align}
concluding the proof.
\end{proof}
\begin{figure}
    \centering
    \begin{subfigure}[h]{0.49\linewidth}
        \centering
        \includegraphics[width=\linewidth]{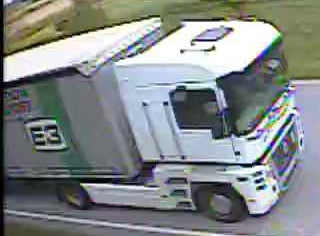}
        \caption{Good sample.}
    \label{fig:good}
    \end{subfigure}
    \begin{minipage}[h]{0.49\linewidth}
        \begin{subfigure}[b]{0.49\linewidth}
            \centering
            \includegraphics[width=\linewidth,height=1.5cm]{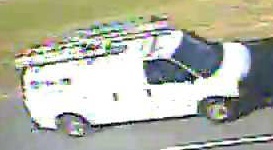}
            \includegraphics[width=\linewidth,height=1.5cm]{figs/miotcd/truc2.jpg}
            \caption{Duplicates.}
        \label{fig:duplicat}
        \end{subfigure}
        \hfill
        \begin{subfigure}[b]{0.49\linewidth}
            \centering
            \includegraphics[width=\linewidth,height=3cm]{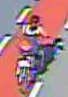}
            \caption{Class confusion.}
        \label{fig:confusion}
        \end{subfigure}
        \begin{subfigure}[t]{0.49\linewidth}
            \centering
            \includegraphics[width=\linewidth,height=2cm]{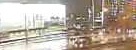}
            \caption{Compression artefacts.}%
        \label{fig:compression}
        \end{subfigure}
        \hfill
        \begin{subfigure}[t]{0.49\linewidth}
            \centering
            \includegraphics[width=\linewidth,height=2cm]{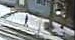}
            \caption{Low-resolution sample.}%
        \label{fig:lowres}
        \end{subfigure}
    \end{minipage}
    \caption{\emph{MIO-TCD Dataset} is designed to include common artifacts from production data. The size and quality of the images vary greatly between crops; from high-quality cameras on sunny days to low-quality cameras at night. (a) shows an example of clean samples that can be clearly assigned to a class. (b)(c)(d) and (e) show the different categories of noise. (b) shows an example of duplicates that exist in the dataset. (c) is a good example where the assigned class is subject to interpretation: motorcycle or bicycle? (d) is a sample with heavy compression artefacts and (e) is an example of samples with low resolution which again is considered a hard example to learn for the model.}
    \label{fig:mio_tcd_examples}
\end{figure}

\section{Experiments}
\label{appsec:experiments}
\subsection{Experimental setup \& compute}
\label{sec:exp_setup}

\textbf{Frameworks.} %
We use PyTorch. Repeated-MNIST and EMNIST experiments use PyTorch Ignite. Synbols and MIO-TCD experiments use the BaaL library: \url{https://github.com/baal-org/baal} \citep{atighehchian2020bayesian}. Predictive parity is calculated using FairLearn \citep{bird2020fairlearn}. The CausalBALD experiments use \url{https://github.com/anndvision/causal-bald} \citep{jesson2021causal}. 
The experiments comparing to ACS-FW \citep{pinslerBayesian2019} use the original authors' implementation with added support for stochastic batch acquisitions: \url{https://github.com/BlackHC/active-bayesian-coresets/releases/tag/stoch_batch_acq_paper}. The Repeated-MNIST experiments were run using \url{https://github.com/BlackHC/active_learning_redux/releases/tag/stoch_batch_acq}, and the results are also available on WandB (\url{https://wandb.ai/oatml-andreas-kirsch/oatml-snow-stoch-acq}).

\textbf{Compute.} %
Results shown in Table \ref{tab:runtime} were run inside Docker containers with 8 CPUs (2.2Ghz) and 32 Gb of RAM. Other experiments were run on similar machines with Titan RTX GPUs. The Repeated-MNIST and EMNIST experiments take about 5000 GPU hours. The MIO, Synbols and CLINC-150 experiments take about 19000 GPU hours. The CausalBALD experiments take about 1000 GPU hours.

\textbf{Dataset Licenses.} Repeated-MNIST is based on MNIST which is made available under the terms of the Creative Commons Attribution-Share Alike 3.0 license. The EMNIST dataset is made available as CC0 1.0 Universal Public Domain Dedication. Synbols is a dataset generator. MIO-TCD is made available under the terms of the Creative Commons Attribution-NonCommercial-ShareAlike 4.0 International License. CLINC-150 is made available under the terms of Creative Commons Attribution 3.0 Unported License.

\subsubsection{Runtime measurements}

The synthetic dataset used for benchmarking has 4,096 features, 10 classes, and 10,000 pool points.
VGG-16 models \citep{simonyan2014deep} were used to sample predictions and latent embeddings.

\subsubsection{Repeated-MNIST}

The Repeated-MNIST dataset is also constructed following \citet{kirsch2019batchbald} with duplicated examples from MNIST with isotropic Gaussian noise added to the input images (standard deviation 0.1).

We use the same setup as \citet{kirsch2019batchbald}: a LeNet-5-like architecture with ReLU activations instead of tanh and added dropout. The model obtains 99\% test accuracy when trained on the full MNIST dataset. 
Specifically, 
the model is made up of two blocks of a convolution, dropout, max-pooling, ReLU with 32 and 64 channels and 5x5 kernel size, respectively. As classifier head, a two-layer MLP with 128 hidden units (and 10 output units) is used that includes dropout between the layers. We use a dropout probability of 0.5 everywhere. 
The model is trained with early stopping using the Adam optimiser and a learning rate of 0.001. We sample predictions using 100 MC-Dropout samples for BALD. Weights are reinitialized after each acquisition step.

\subsubsection{EMNIST}

We follow the setup from \citep{kirsch2019batchbald} with 20 MC dropout samples. We use a similar model as for Repeated-MNIST but with three blocks instead of two.
Specifically, we use 32, 64, and 128 channels and 3x3 kernel size. This is followed by a 2x2 max pooling layer before the classifier head. The classifier head is a two-layer MLP but with 512 hidden units instead of 128. Again, we use dropout probability 0.5 everywhere.

\subsubsection{Synbols \& MIO-TCD} The full list of hyperparameters for the Synbols and MIO-TCD experiments is presented in Table \ref{tab:hp_al}.
Our experiments are built using the BaaL library \citep{atighehchian2020bayesian}.
We compute the predictive parity using FairLearn \citep{bird2020fairlearn}.
We use VGG-16 model \citep{simonyan2014deep} trained for 10 epochs using Monte Carlo dropout for acquisition \citep{gal2017deep} with 20 dropout samples.

\begin{table}[t]
    \centering
        \caption{Hyper-parameters used in Section \ref{sec:miotcd} and \ref{sec:synbols}}
    \label{tab:hp_al}
    \vspace{3mm}
    \begin{tabular}{cc}
    \toprule
    Hyperparameter & Value \\
    \midrule
        Learning rate & 0.001 \\
        Optimiser & SGD \\
        Weight decay & 0 \\
        Momentum & 0.9 \\
        Loss function & Crossentropy \\
        Training duration & 10 \\
        Batch size & 32 \\
        Dropout $p$ & 0.5 \\
        MC iterations & 20 \\
        Query size & 100 \\
        Initial set & 500 \\
        \bottomrule
    \end{tabular}
\end{table}

In \Cref{fig:mio_tcd_examples}, we show a set of images with common problems that can be found in MIO-TCD.

\subsubsection{CLINC-150}

We fine-tune a pretrained DistilBERT model from HuggingFace~\citep{wolf-etal-2020-transformers} on CLINC-150 for 5 epochs with Adam as optimiser. Estimating epistemic uncertainty in transformer models is an open research question, and hence, we do not report results using BALD and focus on entropy instead.

\subsubsection{CausalBALD}
Using the Neyman-Rubin framework \citep{neyman1923edited, rubin1974estimating, sekhon2008neyman}, the CATE is formulated in terms of the potential outcomes, $\mathrm{Y}_\mathrm{t}$, of treatment levels $\mathrm{t} \in \{0, 1\}$.
Given observable covariates, $\mathbf{X}$, the CATE is defined as the expected difference between the potential outcomes at the measured value $\mathbf{X} = \mathbf{x}$: $\tau(\mathbf{x}) = \mathbb{E}[\mathrm{Y}_1 - \mathrm{Y}_0 \mid \mathbf{X}=\mathbf{x}]$.
This causal quantity is fundamentally unidentifiable from observational data without further assumptions because it is not possible to observe both $\mathrm{Y}_1$ and $\mathrm{Y}_0$ for a given unit. 
However, under the assumptions of consistency, non-interference, ignoreability, and positivity, the CATE is identifiable as the statistical quantity $\widetilde{\tau}(\mathbf{x}) = \mathbb{E}[\mathrm{Y} \mid \mathrm{T}=1, \mathbf{X}=\mathbf{x}] - \mathbb{E}[\mathrm{Y} \mid \mathrm{T}=0, \mathbf{X}=\mathbf{x}]$ \citep{rubin1980randomization}.

\citet{jesson2021causal} define BALD acquisition functions for active learning CATE functions from observational data when the cost of acquiring an outcome, $\mathrm{y}$, for a given covariate and treatment pair, $(\mathbf{x}, \mathrm{t})$, is high.
Because we do not have labels for $\mathrm{Y}_1$ and $\mathrm{Y}_0$ for each $(\mathbf{x}, \mathrm{t})$ pair in the dataset, their acquisition function focusses on acquiring data points $(\mathbf{x}, \mathrm{t})$ for which it is likely that a matched pair $(\mathbf{x}, 1 - \mathrm{t})$ exists in the pool data or has already been acquired at a previous step.
We follow their experiments on their synthetic dataset with limited positivity and the semi-synthetic IHDP dataset \citep{hill2011bayesian}.
Details of the experimental setup are given in \citep{jesson2021causal}, we use their provided code, and implement the power acquisition function. 

The settings for causal inference experiments are identical to those used in \citet{jesson2021causal}, using the IHDP dataset \citep{hill2011bayesian}.
Like them, we use a Deterministic Uncertainty Estimation Model \citep{amersfoot2021due}, which is initialised with 100 data points and acquire 10 data points per acquisition batch for 38 steps.
The dataset has 471 pool points and a 201 point validation set.

\subsubsection{ACS-FW}

We compare to the experiments in the paper `Bayesian Batch Active Learning as Sparse Subset Approximation' \citep{pinslerBayesian2019} which introduces the ACS-FW algorithm for batch acquisition. For its deep learning experiments, the paper uses a ResNet feature extractor, followed by a Bayesian multi-class classification model on the final layer. Since exact inference is intractable, variational inference is used with a factorized Gaussian posterior approximation. As prior a zero-mean Gaussian is used, and Monte-Carlo samples are drawn to approximate the predictive distribution.

\subsection{Repeated-MNIST}
\label{appsec:exp_repeated_mnist}

\begin{figure}[H]
    \centering
    \includegraphics[width=\linewidth]{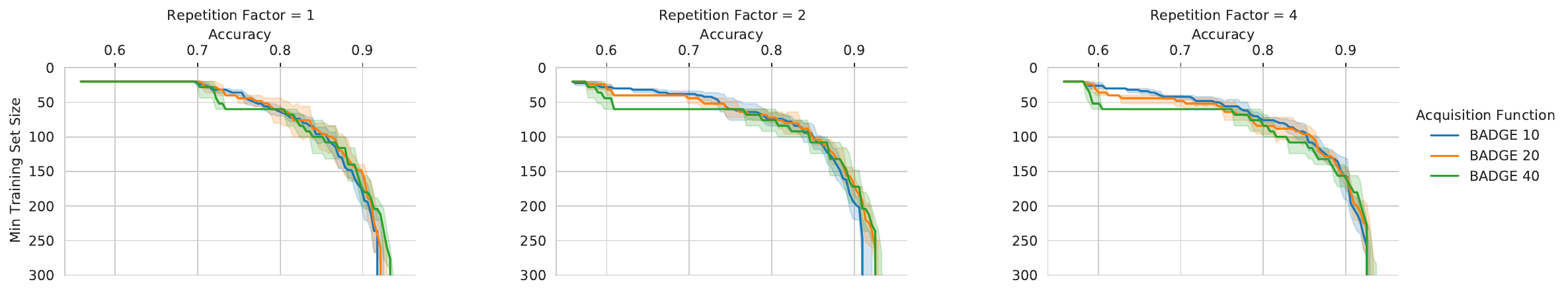}
    \caption{
    \emph{Repeated-MNIST x4 (5 trials): acquisition size ablation for BADGE.}
    Acquisition size 40 performs best out of $\{10,20,40\}$ for repetition factor 1, acquisition 20 performs best for repetition factor 2, and all three perform similarly for repetition factor 4.
    }
    \label{appfig:rmnist_badge_ablation}
\end{figure}

\textbf{BADGE Ablation.} In \Cref{appfig:rmnist_badge_ablation}, we see that BADGE performs best with acquisition size 20 on Repeated-MNISTx4 overall. BADGE 40 and BADGE 20 have the highest final accuracy, cf.\ BADGE 10 while BADGE 20 performs better than BADGE 40 for small training set sizes.

\begin{figure}[H]
    \centering
    \includegraphics[width=0.5\textwidth]{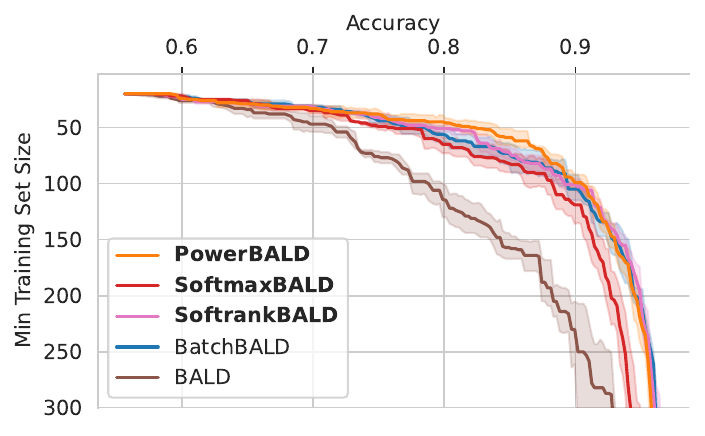}
    \caption{
    \emph{Repeated-MNIST x4 (5 trials): acquisition size 5.}
    To compare BatchBALD, BALD and the stochastic acquisition strategies more fairly, we also show the performance of acquisition size 5. This is in line with \Cref{fig:rmnist}.
    }
    \label{appfig:rmnist_bald_5_ablation}
\end{figure}

\textbf{Acquisition Size 5.} In \Cref{appfig:rmnist_bald_5_ablation}, we see that the stochastic acquisition strategies perform better than BALD and as well as BatchBALD for acquisition size 5 as well. Thus, we see no qualitative difference to the results in \Cref{fig:rmnist}.

\subsubsection{Other scoring functions}
\label{appsec:exp_repeated_mnist_other_acq_functions}

\begin{figure}[H]
    \centering
    \includegraphics[width=0.75\linewidth]{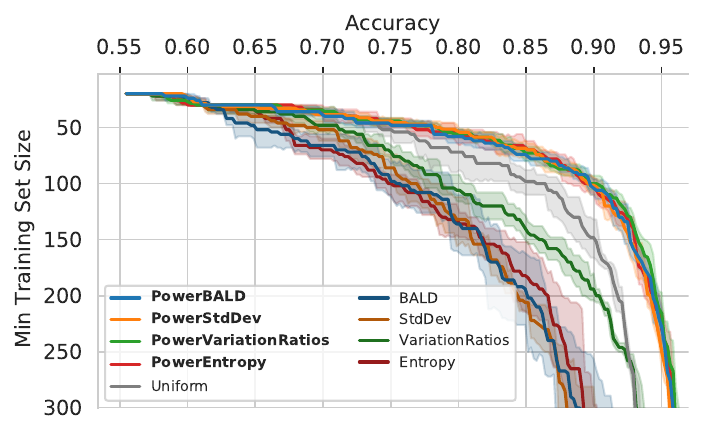}
    \caption{\emph{Repeated-MNIST x4 (5 trials): Performance for other scoring functions.} Entropy, std dev, variation ratios behave like BALD when applying a stochastic sampling scheme (Power from \S\ref{sec:method}).}
    \label{fig:rmnist_other_acquisition_functions}
\end{figure}

In \Cref{fig:rmnist_other_acquisition_functions} shows the performance of other scoring functions than BALD on RepeatedMNIST x4.

\subsubsection{Redundancy ablation}

\begin{figure}[H]
    \centering
    \includegraphics[width=\linewidth]{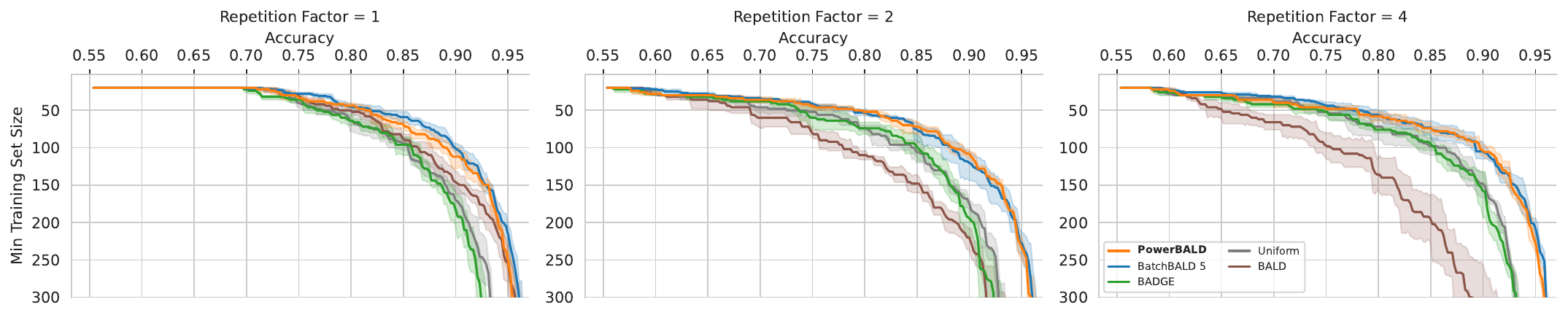}
    \caption{\emph{Repeated-MNIST (5 trials): Performance ablation for different repetition counts.}}
    \label{appfig:rmnist_repetition_ablation}
\end{figure}

In \Cref{appfig:rmnist_repetition_ablation}, we see the same behaviour in an ablation for different repetition sizes of Repeated-MNIST.

\subsection{MIO-TCD}
\label{appsec:miotcd}

\begin{figure}[H]
    \begin{subfigure}[t]{0.45\linewidth}
        \centering
        \includegraphics[width=\linewidth]{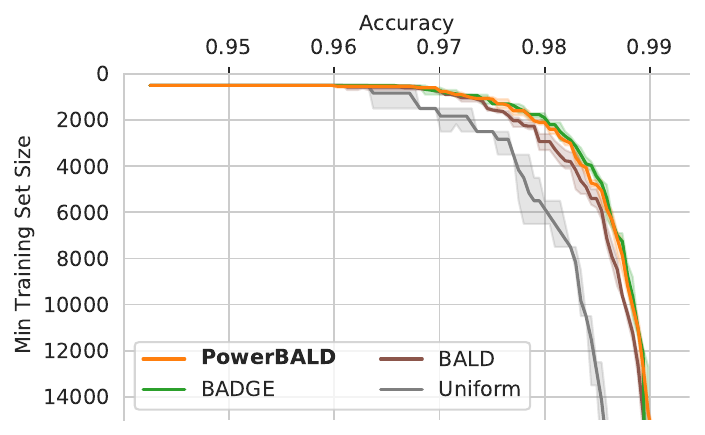}
        \caption{ %
        BALD
        }
        \label{appfig:miotcd_bald}
    \end{subfigure}
    \hfill
    \begin{subfigure}[t]{0.45\linewidth}
        \centering
        \includegraphics[width=\linewidth]{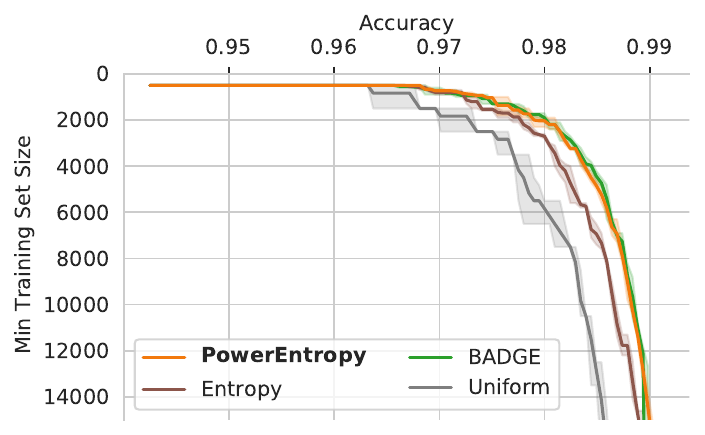}
        \caption{ %
        Entropy
        }
        \label{appfig:miotcd_entropy}
    \end{subfigure}
    \caption{\emph{\miotcd (5 trials).}}
    \label{appfig:miotcd}
\end{figure}

In \Cref{appfig:miotcd}, we see that power acquisition performs on par with BADGE with both BALD and entropy as underlying score functions.

\subsection{EMNIST}
\label{appsec:emnist}

\begin{figure}[H]
    \begin{minipage}[t]{0.45\linewidth}
        \centering
        \includegraphics[width=\linewidth]{figs/colab/eval_accuracy_vs_train_emnist_balanced_emnist_balanced_main_300_C1.pdf}
        \caption{ %
        \emph{EMNIST (Balanced) (5 trials): Performance with BALD.}
        }
        \label{appfig:emnist_balanced_main}
    \end{minipage}
    \hfill
    \begin{minipage}[t]{0.45\linewidth}
        \centering
        \includegraphics[width=\linewidth]{figs/colab/eval_accuracy_vs_train_emnist_emnist_main_300_C1.pdf}
        \caption{ %
        \emph{EMNIST (ByMerge) (5 trials): Performance with BALD.} 
        }
        \label{appfig:emnist_main}
    \end{minipage}
\end{figure}

In \Cref{appfig:emnist_balanced_main} and \ref{appfig:emnist_main}, we see that PowerBALD outperforms BALD, BatchBALD, and BADGE.

\begin{figure}[H]
    \centering
    \begin{minipage}[t]{0.75\linewidth}
        \centering
        \includegraphics[width=\linewidth]{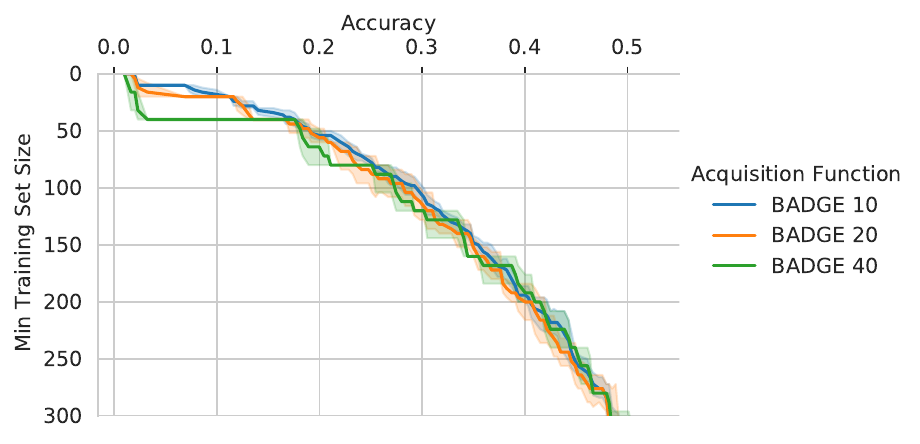}
        \caption{ %
        \emph{EMNIST (Balanced) (5 trials): acquisition size ablation for BADGE.}
        }
        \label{appfig:emnist_balanced_badge_ablation}
    \end{minipage}
\end{figure}

\textbf{BADGE Ablation.} In \Cref{appfig:emnist_balanced_badge_ablation}, we see that BADGE performs similarly with all three acquisition sizes. Acquisition size 10 is the smoothest.

\subsection{Edge cases in Synbols} \label{sec:synbols}
We use Synbols~\citep{lacoste2020synbols} to demonstrate the behaviour of batch active learning in artificially constructed edge cases.
Synbols is a character dataset generator for classification where a user can specify the type and proportion of bias and insert artefacts, backgrounds, masking shapes, and so on.
We selected three datasets with strong biases supplied by \citet{lacoste2020synbols, branchaud2021can} to evaluate stochastic batch acquisition methods. We use an acquisition size of 100.
The experimental settings are described in \Cref{sec:exp_setup}.

For these tasks, performance evaluation includes `predictive parity', also known as `accuracy difference', which is the maximum difference in accuracy between subgroups---which are, in this case, different coloured characters.
This measure is used most widely in domain adaptation and ethics~\citep{verma2018fairness}.
We want to maximise the accuracy while minimising the predictive parity.

\begin{figure}[H]
    \centering
    \begin{subfigure}[t]{0.49\linewidth}
        \centering
        \includegraphics[width=\linewidth]{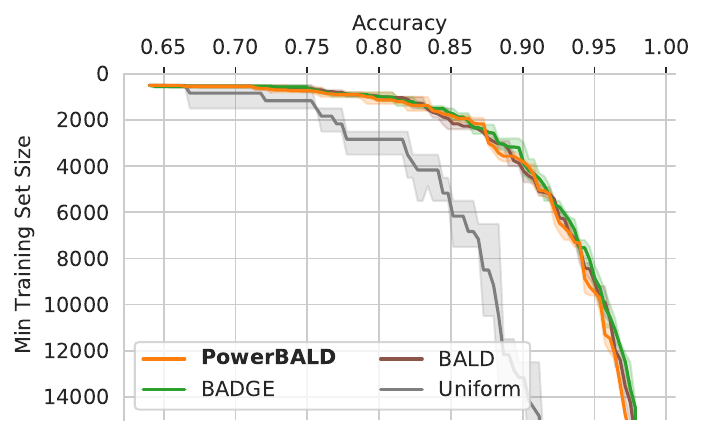}
        \caption{Accuracy}
        \label{fig:spurious_acc}
    \end{subfigure}
    \hfill
    \begin{subfigure}[t]{0.49\linewidth}
        \centering
        \includegraphics[width=\linewidth]{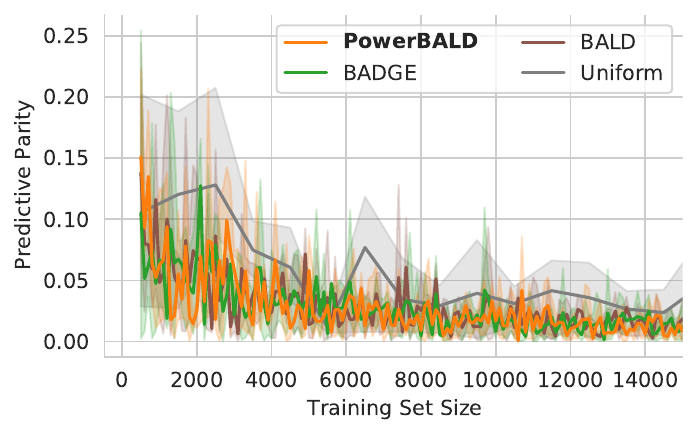}
        \caption{Predictive parity (\textbf{Down and left is better.})}
        \label{fig:synbols_spurious}
    \end{subfigure}
    \caption{ %
        \emph{Performance on \synspurious (3 trials) with BALD.}
        Stochastic acquisition matches BADGE and BALD's predictive parity and performance, which is reassuring as stochastic acquisition functions might be affected by spurious correlations.%
        }
\end{figure}

\textbf{Spurious Correlations.} This dataset includes spurious correlations between character colour and class. As shown in \citet{branchaud2021can}, active learning is especially strong here as characters that do not follow the correlation will be informative and thus selected.

We compare the predictive parity between methods in \figref{fig:synbols_spurious}. We do not see any significant difference between the stochastic batch acquisition method and BADGE or BALD. This is encouraging, as stochastic approaches might select more examples following the spurious correlation and thus have higher predictive parity, but this is not the case.

\begin{figure}[H]
    \centering
    \begin{subfigure}[t]{0.49\linewidth}
        \centering %
        \includegraphics[width=\linewidth]{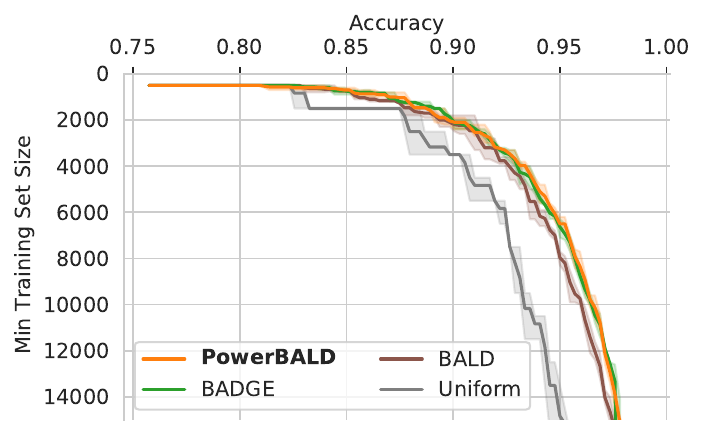} %
        \caption{
            Accuracy
        } %
        \label{appsubfig:synbols_minority_accuracy}
    \end{subfigure} %
    \hfill %
    \begin{subfigure}[t]{0.49\linewidth} %
        \centering %
        \includegraphics[width=\linewidth]{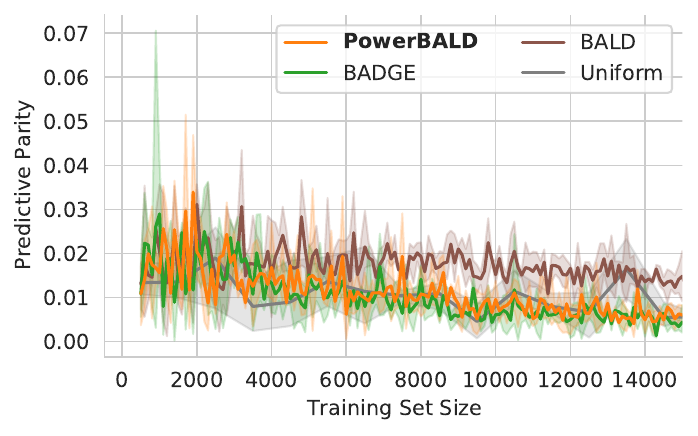}
        \caption{
            Predictive parity
        } %
        \label{appsubfig:synbols_minority_diff_accuracy}
    \end{subfigure}
    \caption{
        \emph{\synminority (3 trials): Performance on BALD}.
        PowerBALD outperforms BALD and matches BADGE for both accuracy and predictive parity.
    }
    \label{appfig:minority_bald} %
\end{figure}

\textbf{Minority Groups.} This dataset includes a subgroup of the data that is under-represented; specifically, most characters are red while few are blue.
As \citet{branchaud2021can} shows, active learning can improve the accuracy for these groups.

Our stochastic approach lets batch acquisition better capture under-represented subgroups.
In \Cref{appsubfig:synbols_minority_accuracy}, PowerBALD has an accuracy almost identical to that of BADGE, despite being much cheaper, and outperforms BALD.
At the same time, we see in \Cref{appsubfig:synbols_minority_diff_accuracy} that PowerBALD has a lower predictive parity than BALD, demonstrating a fairer predictive distribution given the unbalanced dataset.

\begin{figure}[H]
    \centering
    \begin{minipage}[t]{0.49\linewidth}
        \centering
        \includegraphics[width=\textwidth]{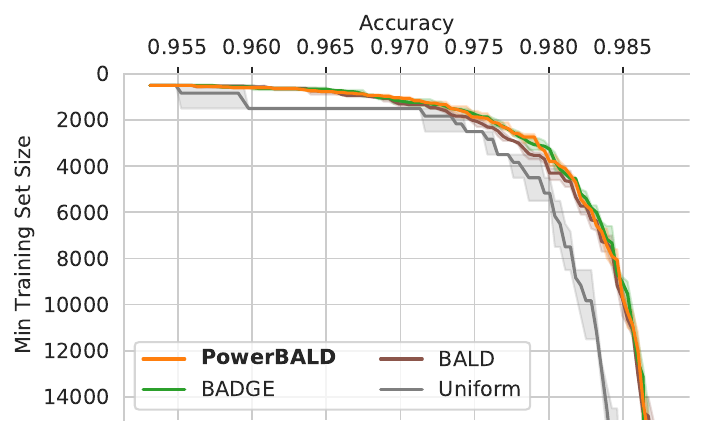}
        \caption{BALD}
        \label{appfig:missing_bald}
    \end{minipage}
    \hfill
    \begin{minipage}[t]{0.49\linewidth}
        \centering
        \includegraphics[width=\textwidth]{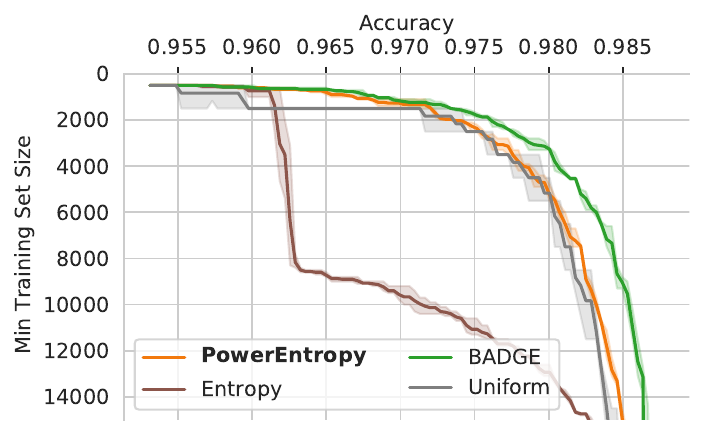}
        \caption{Entropy}
        \label{fig:missing_entropy}
    \end{minipage}
    \caption{
        \emph{Performance on \synmissing (3 trials).}
        In this dataset with high aleatoric uncertainty, PowerBALD matches BADGE and BALD performance. PowerEntropy significantly outperforms Entropy which confounds aleatoric and epistemic uncertainty.
    }
    \label{appfig:synbols_missing}
\end{figure}

\textbf{Missing Synbols.} This dataset has high aleatoric uncertainty (input noise).
Some images are missing information required to make high-probability predictions---these images have shapes randomly occluding the character---so even a perfect model would remain uncertain.
\citet{lacoste2020synbols} demonstrated that entropy is ineffective on this data as it cannot distinguish between aleatoric and epistemic uncertainty (input noise and model uncertainty), while BALD can do so. As a consequence, entropy will unfortunately prefer samples with occluded characters, resulting in degraded active learning performance.
For predictive entropy, stochastic acquisition largely corrects the failure of entropy acquisition to account for missing data (\Cref{appfig:synbols_missing}) although PowerEntropy still underperforms BADGE here. For BALD, we show in \Cref{appfig:missing_bald} in the appendix that, as before, the stochastic batch acquisition method performs on par with BADGE and marginally better than BALD.

\subsection{CLINC-150}
\label{appsec:clinc}

\begin{figure}[H]
    \centering %
    \includegraphics[width=0.5\linewidth]{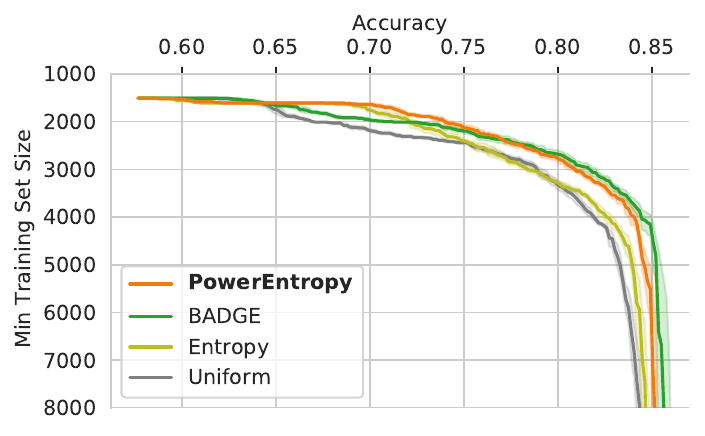} %
    \caption{\emph{Performance on CLINC-150 (10 trials).} PowerEntropy performs much better than entropy, which only performs marginally better than uniform, and almost on par with BADGE.}%
    \label{appfig:clinc150} %
\end{figure}
    
In \Cref{appfig:clinc150}, we see that PowerEntropy performs much better than entropy which only performs marginally better than the uniform baseline. PowerEntropy also performs better than BADGE at low training set sizes, but BADGE performs better in the second half. Between $\approx2300$ and $4000$ samples, BADGE and PowerEntropy perform the same. We use an acquisition size of 100, starting from an initial training set of 1510 points (10 points per intent class). 

\subsection{ACS-FW}
\label{appsec:acs-fw-experiments}

\begin{figure}[H]
    \begin{subfigure}{0.4\linewidth}
        \centering %
        \includegraphics[width=\linewidth]{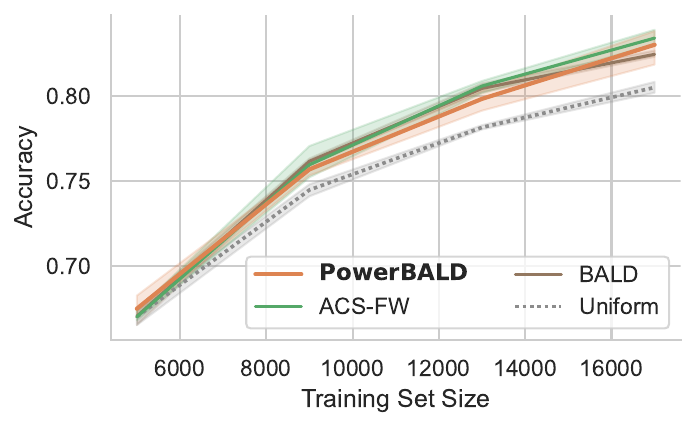} %
        \caption{CIFAR-10}
    \end{subfigure}\hfill
    \begin{subfigure}{0.4\linewidth}
        \centering %
        \includegraphics[width=\linewidth]{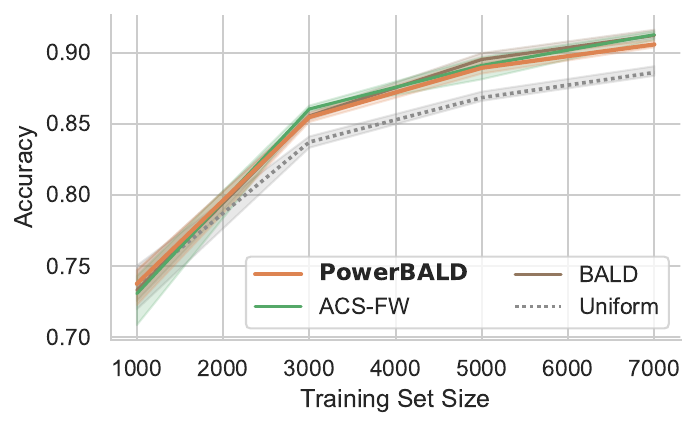} %
        \caption{SVHN}
    \end{subfigure}\hfill
    \begin{subfigure}{0.4\linewidth}
        \centering %
        \includegraphics[width=\linewidth]{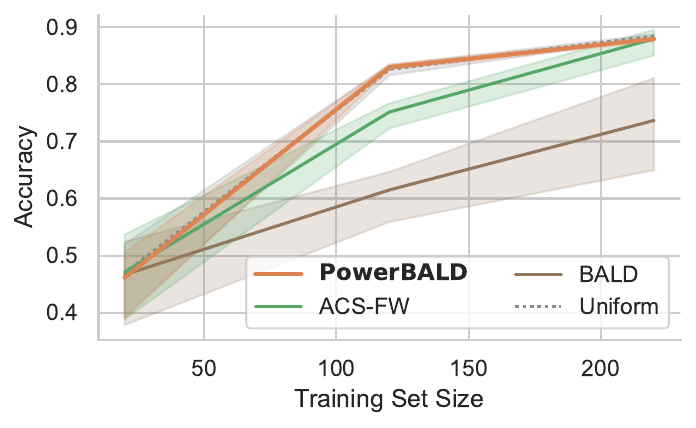} %
        \caption{Repeated-MNIST}
    \end{subfigure}\hfill
    \begin{subfigure}{0.4\linewidth}
        \centering %
        \includegraphics[width=\linewidth]{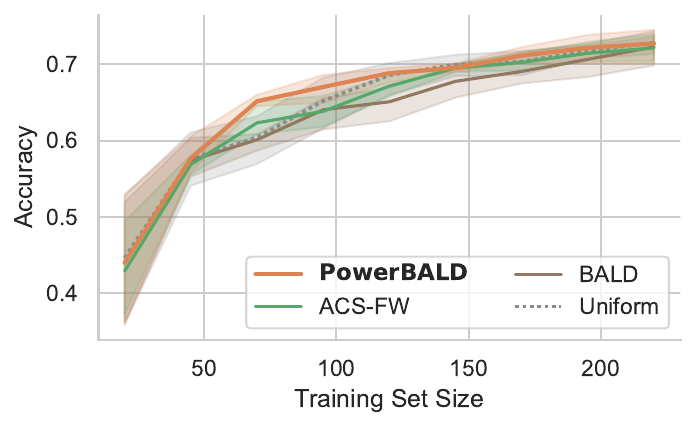} %
        \caption{Fashion-MNIST}
    \end{subfigure}
    \caption{\emph{MFVI Last-Layer Performance (3 trials).} 
    }%
    \label{appfig:acsfw_main} %
\end{figure}
    
\begin{figure}[H]
    \begin{subfigure}{0.4\linewidth}
        \centering %
        \includegraphics[width=\linewidth]{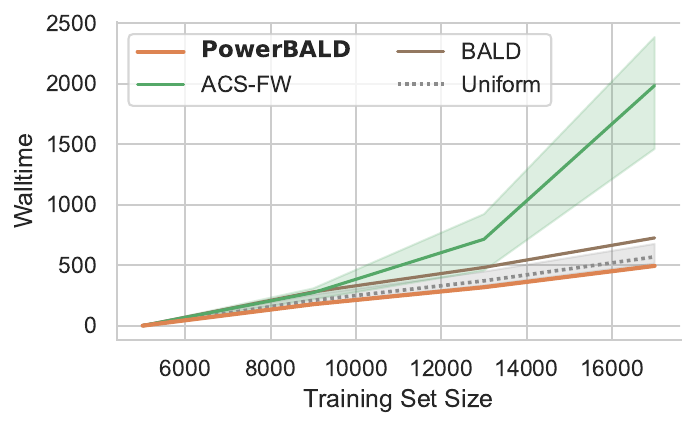} %
        \caption{CIFAR-10}
    \end{subfigure}\hfill
    \begin{subfigure}{0.4\linewidth}
        \centering %
        \includegraphics[width=\linewidth]{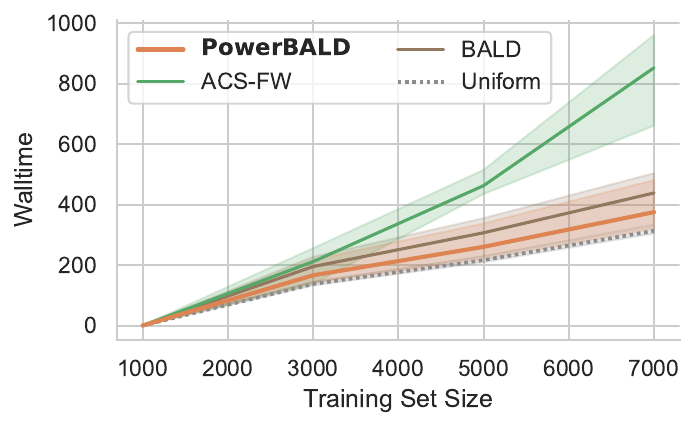} %
        \caption{SVHN}
    \end{subfigure}\hfill
    \begin{subfigure}{0.4\linewidth}
        \centering %
        \includegraphics[width=\linewidth]{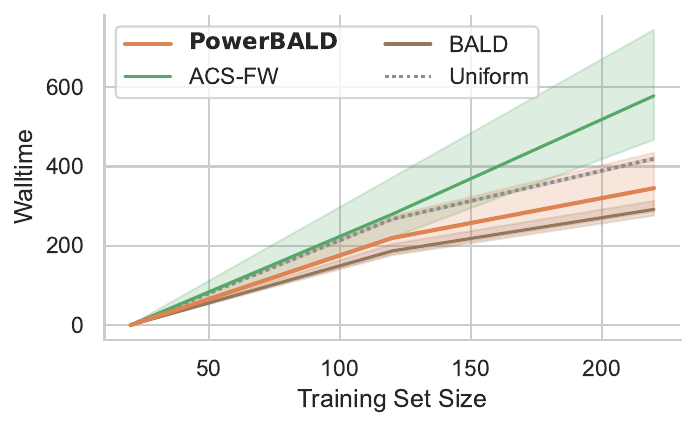} %
        \caption{Repeated-MNIST}
    \end{subfigure}\hfill
    \begin{subfigure}{0.4\linewidth}
        \centering %
        \includegraphics[width=\linewidth]{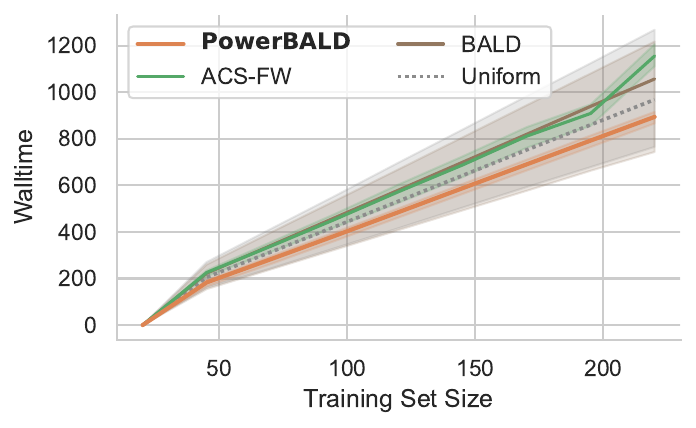} %
        \caption{Fashion-MNIST}
    \end{subfigure}
    \caption{\emph{MFVI Last-Layer Walltime (3 trials).} 
    }%
    \label{appfig:acsfw_main_wt} %
\end{figure}

We run comparisons for several datasets using last-layer mean-field variational inference: CIFAR-10 \citep{krizhevsky2009learning}, SVHN \citep{netzer2011reading}, Repeated-MNIST \citep{kirsch2019batchbald}, and Fashion-MNIST \citep{xiao2017}. 
We use 5000 initial training samples with an acquisition size of 4000 for CIFAR-10, 1000 initial training samples with an acquisition size of 2000 for SVHN, 20 initial training samples with an acquisition size of 100 for Repeated-MNIST, and 20 initial training samples with an acquisition size of 25 for Fashion-MNIST. 
The same hyperparameters as in \citet{pinslerBayesian2019} are used.

In \Cref{appfig:acsfw_main}, we see that PowerBALD performs best on Fashion-MNIST and Repeated-MNIST with smaller acquisition sizes, but ACS-FW and BALD perform similarly on CIFAR-10 and SVHN at larger acquisition sizes.
This is in line with \Cref{sec:investigation}, where we have seen in \Cref{fig:bald_full_rank_correlations_detail,fig:score_aurocs_and_vis} that the top 1\% of the samples are more sensitive than the top 10\%: larger acquisition sizes can be less sensitive to the issues of top-\batchvar.
The performance of ACS-FW contrasts to the results in \citet{pinslerBayesian2019} where ACS-FW performs better than BALD on CIFAR-10 and SVHN.
In \Cref{appsec:acs-fw-acquisition-size}, we provide ablations with different acquisition sizes and initial training set sizes for these four datasets.
Overall, beyond what we have noted above, these results are not conclusive, however, and the performance curves are very close to each other. 
On the other hand, the wall time curves in \Cref{appfig:acsfw_main_wt} show that ACS-FW is significantly slower than PowerBALD. This is congruent with the motivation that stochastic sampling methods are fast yet competitive and don't underperform compared to top-\batchvar acquisition, thus being a good substitute for top-\batchvar acquisition.

\section{Comparing Power, Softmax and Soft-Rank}
\label{sec:power_softmax_softrank_comparison}

\subsection{Empirical Evidence}

\begin{figure}[H]
    \begin{subfigure}[t]{\linewidth}
        \centering
        \includegraphics[width=\linewidth]{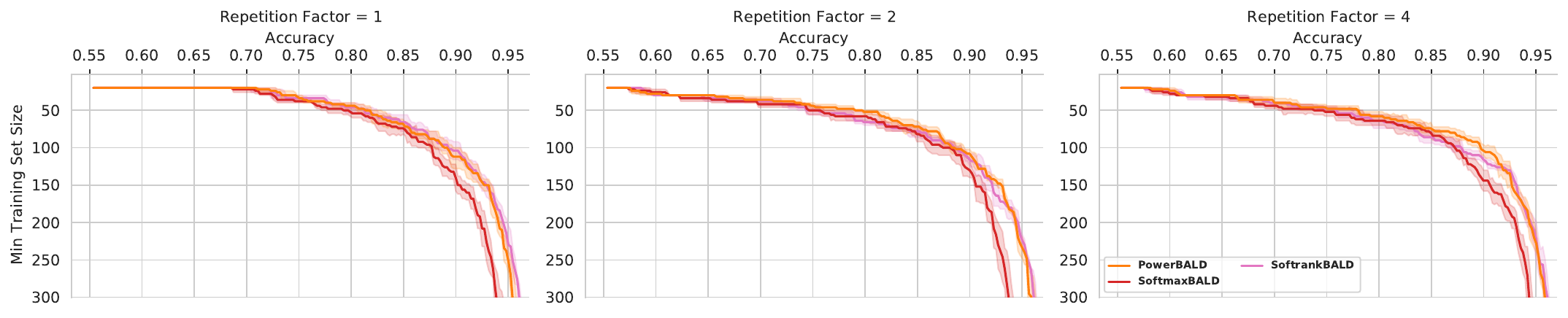}
        \vspace{-2em}
        \caption{ %
            BALD
        }
    \end{subfigure}
    \begin{subfigure}[t]{\linewidth}
        \centering
        \includegraphics[width=\linewidth]{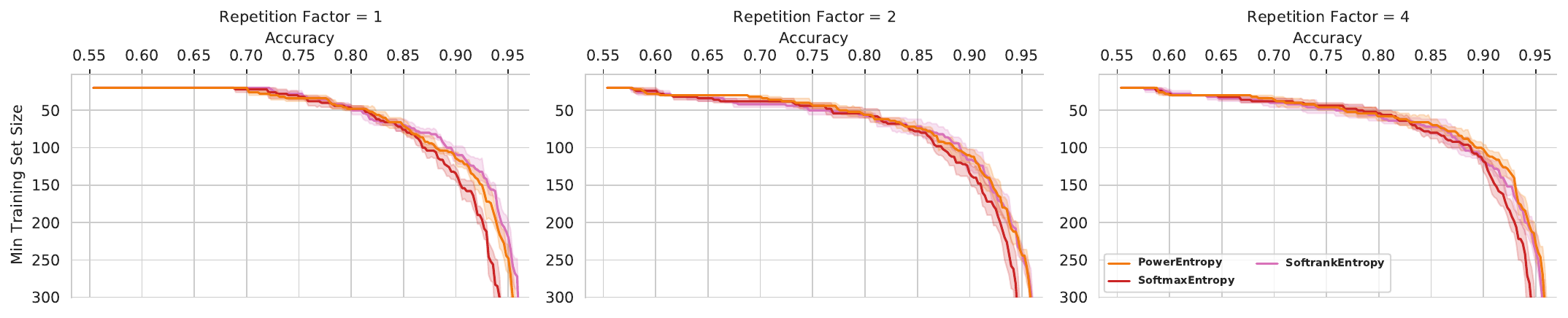}
        \vspace{-2em}
        \caption{ %
        Entropy
        }
    \end{subfigure}
    \caption{ %
    \emph{Repeated-MNIST (5 trials): Performance with all three stochastic strategies.}
    }
    \label{appfig:rmnist_all_three}
\end{figure}

\textbf{Repeated-MNIST.} In \Cref{appfig:rmnist_all_three}, power acquisition performs best overall, followed by soft-rank and then softmax.

\begin{figure}[H]
    \begin{minipage}[t]{0.45\linewidth}
        \centering
        \includegraphics[width=\linewidth]{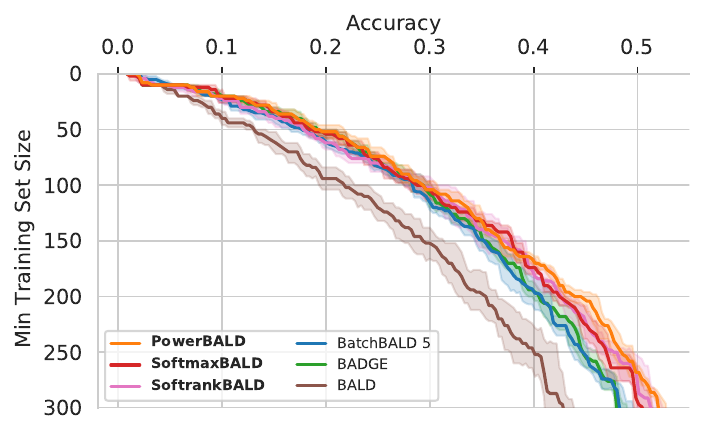}
        \caption{ %
        \emph{EMNIST (Balanced) (5 trials): Performance with all three stochastic strategies with BALD.} PowerBALD performs best.
        }
        \label{appfig:emnist_balanced_bald}
    \end{minipage}
    \hfill
    \begin{minipage}[t]{0.45\linewidth}
        \centering
        \includegraphics[width=\linewidth]{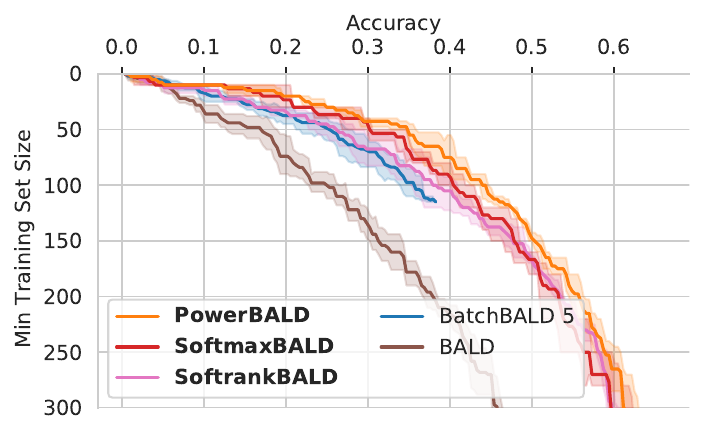}
        \caption{ %
        \emph{EMNIST (ByMerge) (5 trials): Performance with all three stochastic strategies with BALD.} PowerBALD performs best.
        }
        \label{appfig:emnist_bald}
    \end{minipage}
\end{figure}

\textbf{EMNIST.} In \Cref{appfig:emnist_balanced_bald} and \ref{appfig:emnist_bald}, we see that PowerBALD performs best, but Softmax- and SoftrankBALD also outperform other methods. BADGE did not run on EMNIST (ByMerge) due to out-of-memory issues and BatchBALD took very long as EMNIST (ByMerge) has more than 800,000 samples. 

\begin{figure}[H]
    \begin{subfigure}[t]{0.45\linewidth}
        \centering
        \includegraphics[width=\linewidth]{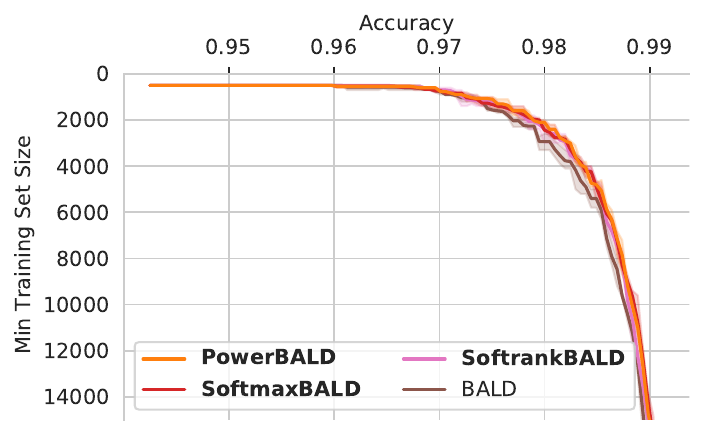}
        \caption{ %
            BALD
        }
    \end{subfigure}
    \hfill
    \begin{subfigure}[t]{0.45\linewidth}
        \centering
        \includegraphics[width=\linewidth]{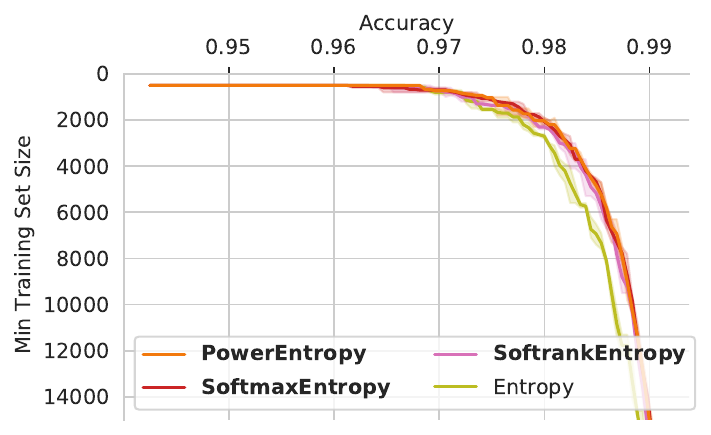}
        \caption{ %
        Entropy
        }
    \end{subfigure}
    \caption{ %
    \emph{MIO-TCD (3 trials): Performance with all three stochastic strategies.}
    }
    \label{appfig:miotcd_all_three}
\end{figure}

\textbf{MIO-TCD.} In \Cref{appfig:miotcd_all_three}, we see that all three stochastic acquisition methods perform about equally well.

\begin{figure}[H]
    \begin{subfigure}[t]{\linewidth}
        \centering
        \includegraphics[width=\linewidth]{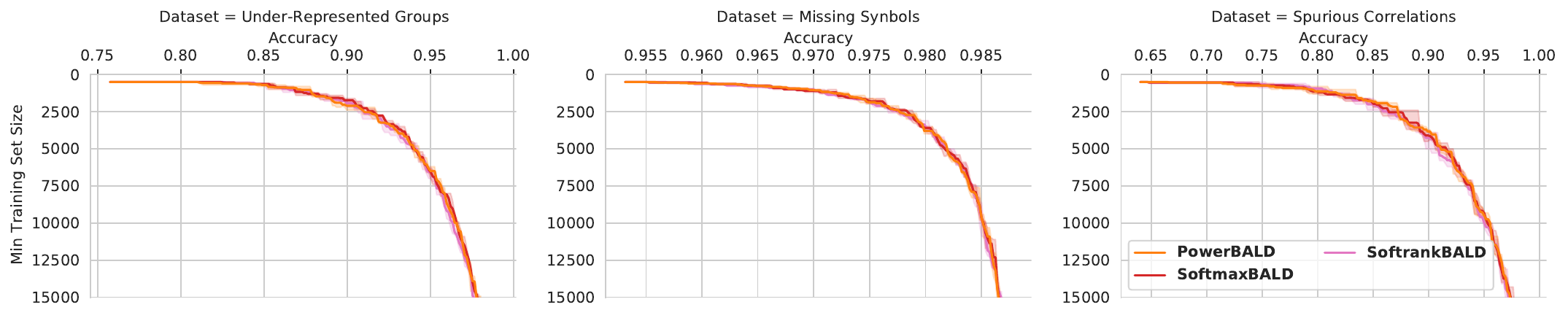}
        \caption{ %
            BALD
        }
    \end{subfigure}
    \begin{subfigure}[t]{\linewidth}
        \centering
        \includegraphics[width=\linewidth]{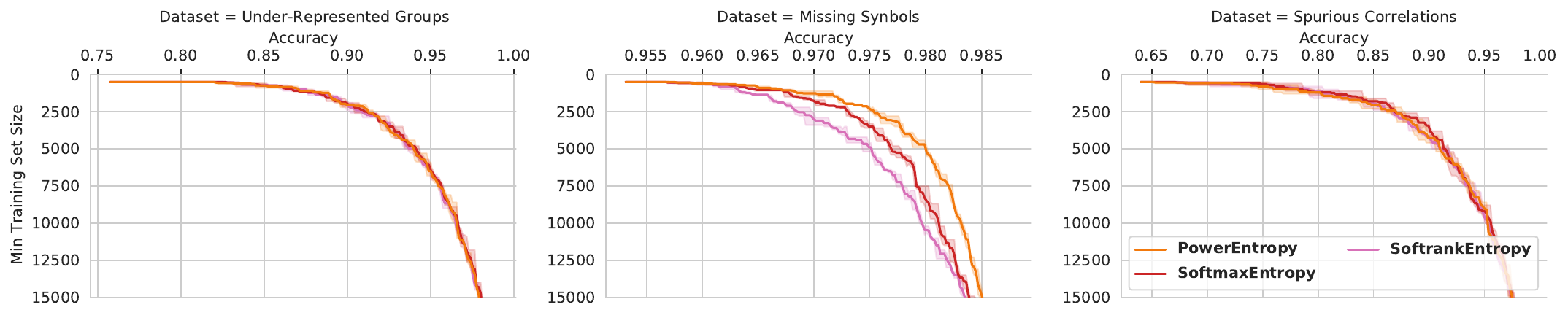}
        \caption{ %
        Entropy
        }
    \end{subfigure}
    \caption{ %
    \emph{Synbols edge cases (3 trials): Performance with all three stochastic strategies.}
    }
    \label{appfig:synbols_all_three}
\end{figure}

\textbf{Synbols.} In \Cref{appfig:synbols_all_three}, power acquisition seems to perform better overall---mainly due to the performance in \synmissing. 

\begin{figure}[H]
    \centering %
    \begin{minipage}[t]{0.50\linewidth}
        \centering
        \includegraphics[width=\linewidth]{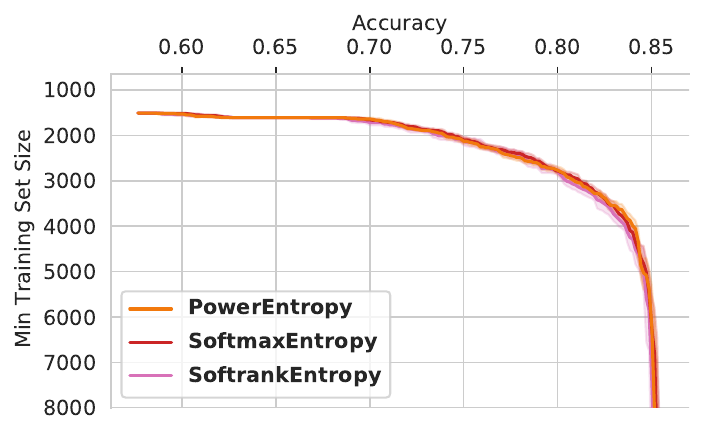}
        \caption{ %
            \emph{CLINC-150 (10 trials): Performance with all three stochastic strategies.}
        }
    \label{appfig:clinc_all_three}
    \end{minipage}
\end{figure}

\textbf{CLINC-150.} In \Cref{appfig:clinc_all_three}, all three stochastic methods perform similarly.

\subsection{Investigation}
\label{appsec:sampling_distribution_investigation}

\begin{figure}[t]
    \begin{minipage}[t]{\linewidth} %
        \centering %
        \includegraphics[width=\linewidth]{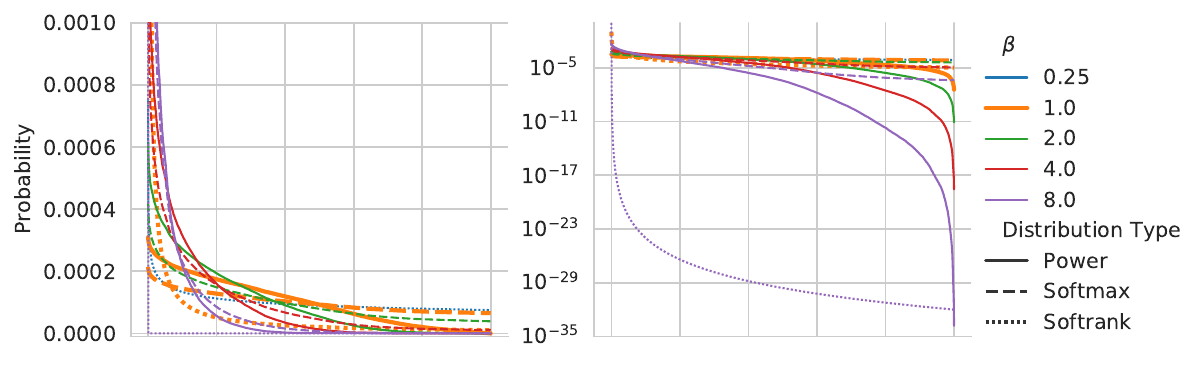} %
        \caption{
            \emph{Score distribution for power and softmax acquisition of BALD scores on MNIST for varying Coldness $\beta$ at $t=0$.} Linear and log plot over samples sorted by their BALD score. At $\beta=8$ both softmax and power acquisition have essentially the same distribution for high scoring points (closely followed by the power distribution for $\beta=4$). This might explain why the coldness ablation shows that these $\beta$ to have very similar AL trajectories on MNIST. Yet, while softmax and power acquisition seem transfer to RMNIST, this is not the case for softrank which is much more sensitive to $\beta$. 
            At the same time, power acquisition avoids low-scoring points more than softmax acquisition. %
        } %
        \label{fig:score_distribution} %
    \end{minipage}
\end{figure}

To further examine the three stochastic acquisition variants, we plot their score distributions, extracted from the same MNIST toy example, in \Cref{fig:score_distribution}. 
Power and softmax acquisition distributions are similar for $\beta=8$ (power, softmax) and $\beta=4$ (softmax).
This might explain why active learning with these $\beta$ shows similar accuracy trajectories.

We find that power and softmax acquisition are quite insensitive to $\beta$ and thus selecting $\beta=1$ might generally work quite well.

\section{Effect of changing the aquisition sizes}
\label{appsec:tuning_acquisition_size}

\subsection{Repeated-MNIST}

\begin{figure}[H]
    \centering
    \includegraphics[width=\textwidth]{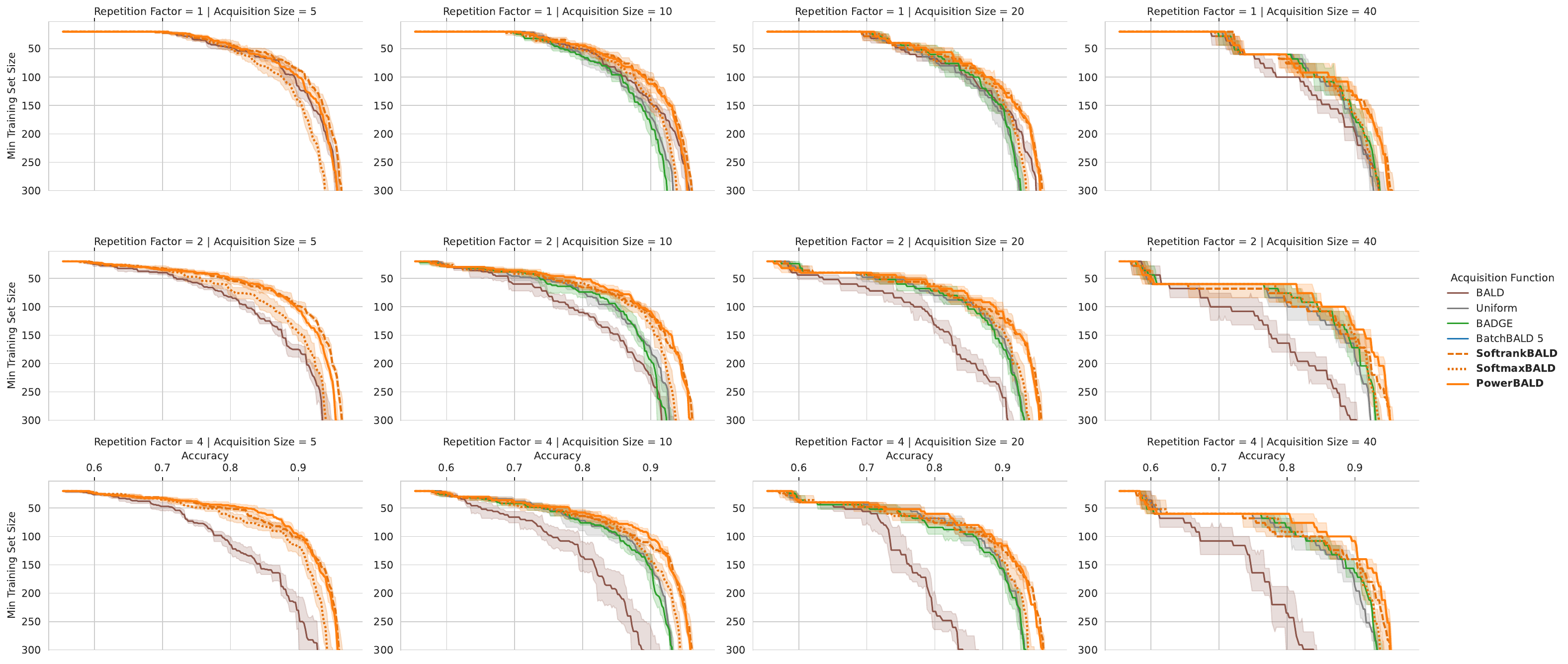}
    \caption{
    \emph{Repeated-MNIST: Acquisition size ablation grouped by acquisition size.} Stochastic acquisition strategies (except for Softrank at acquisition size 5) always perform better than top-\batchvar BALD and also better than BADGE. Softrank is the most sensitive to acquisition size changes because it is independent of the scores.
    }
    \label{appfig:rmnist_acquisition_size_ablation}
\end{figure}

\begin{figure}[H]
    \centering
    \includegraphics[width=\textwidth]{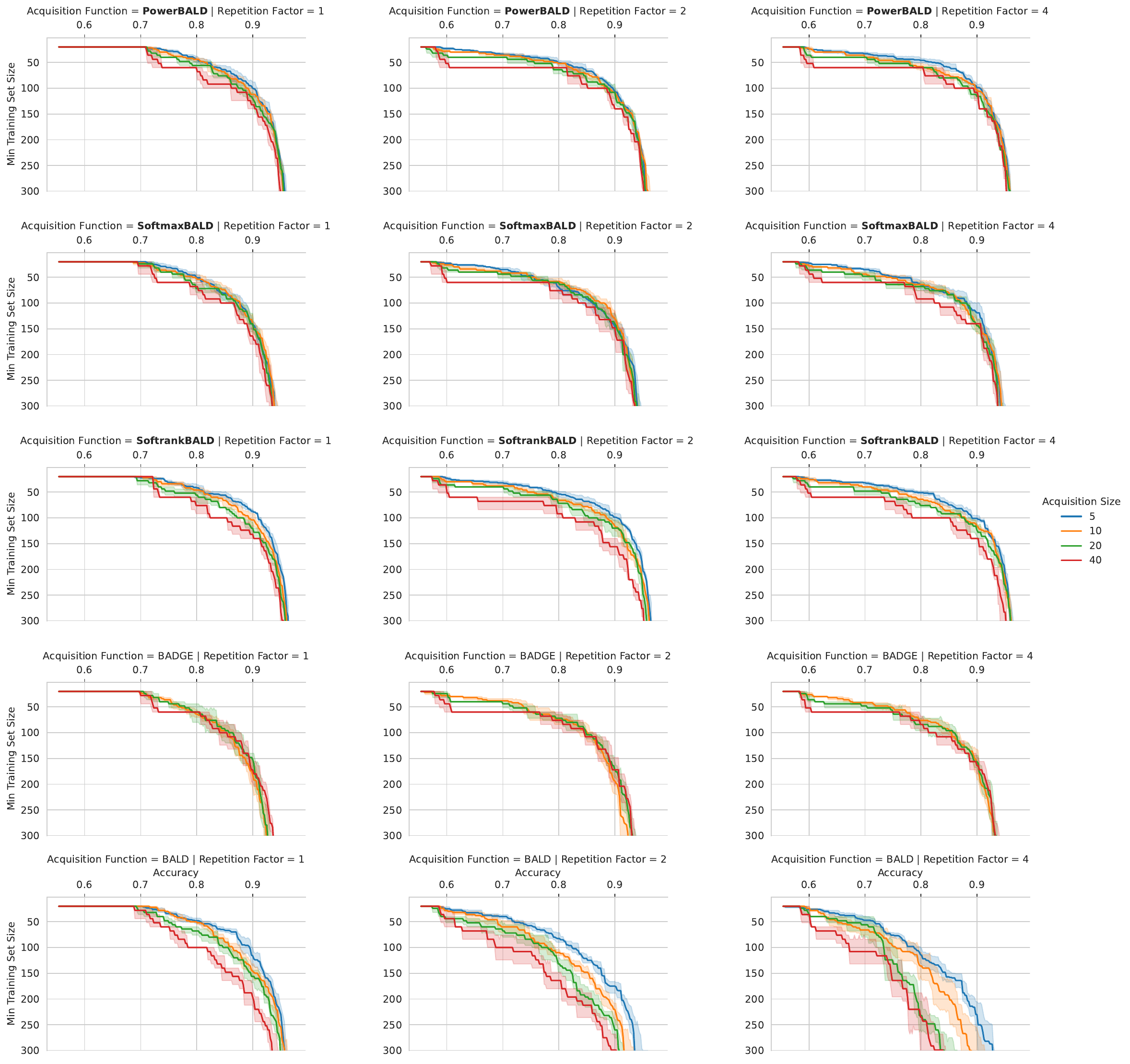}
    \caption{
    \emph{Repeated-MNIST: Acquisition size ablation grouped by acquisition function.} PowerBALD and SoftmaxBALD are less sensitive to acquisition size changes while SoftrankBALD is more sensitive to acquisition size changes. Overall, all three are much less sensitive to acquisition size changes than top-\batchvar BALD.
    }
    \label{appfig:rmnist_acquisition_size_ablation_2}
\end{figure}

\subsection{EMNIST}

\begin{figure}[H]
    \centering
    \includegraphics[width=0.99\textwidth]{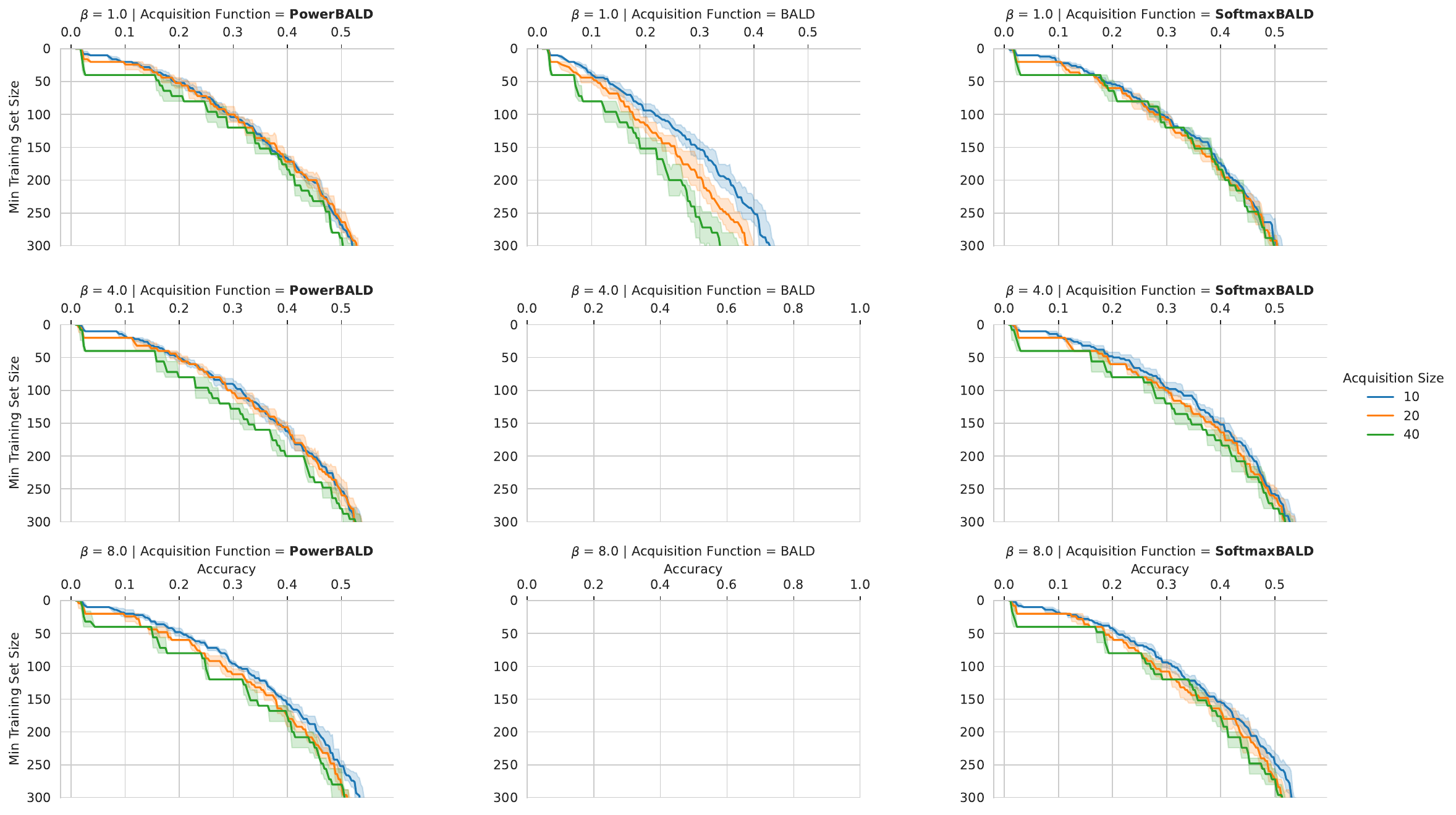}
    \caption{
    \emph{EMNIST `Balanced': Acquisition size ablation.} For (top-\batchvar) BALD, $\beta$ makes no difference, so $\beta=1$ is used as placeholder. The other two subplots are intentionally left out as they are very similar to this one.
    }
    \label{appfig:emnist_balanced_acquisition_size_ablation}
\end{figure}

\begin{figure}[H]
    \centering
    \includegraphics[width=0.99\textwidth]{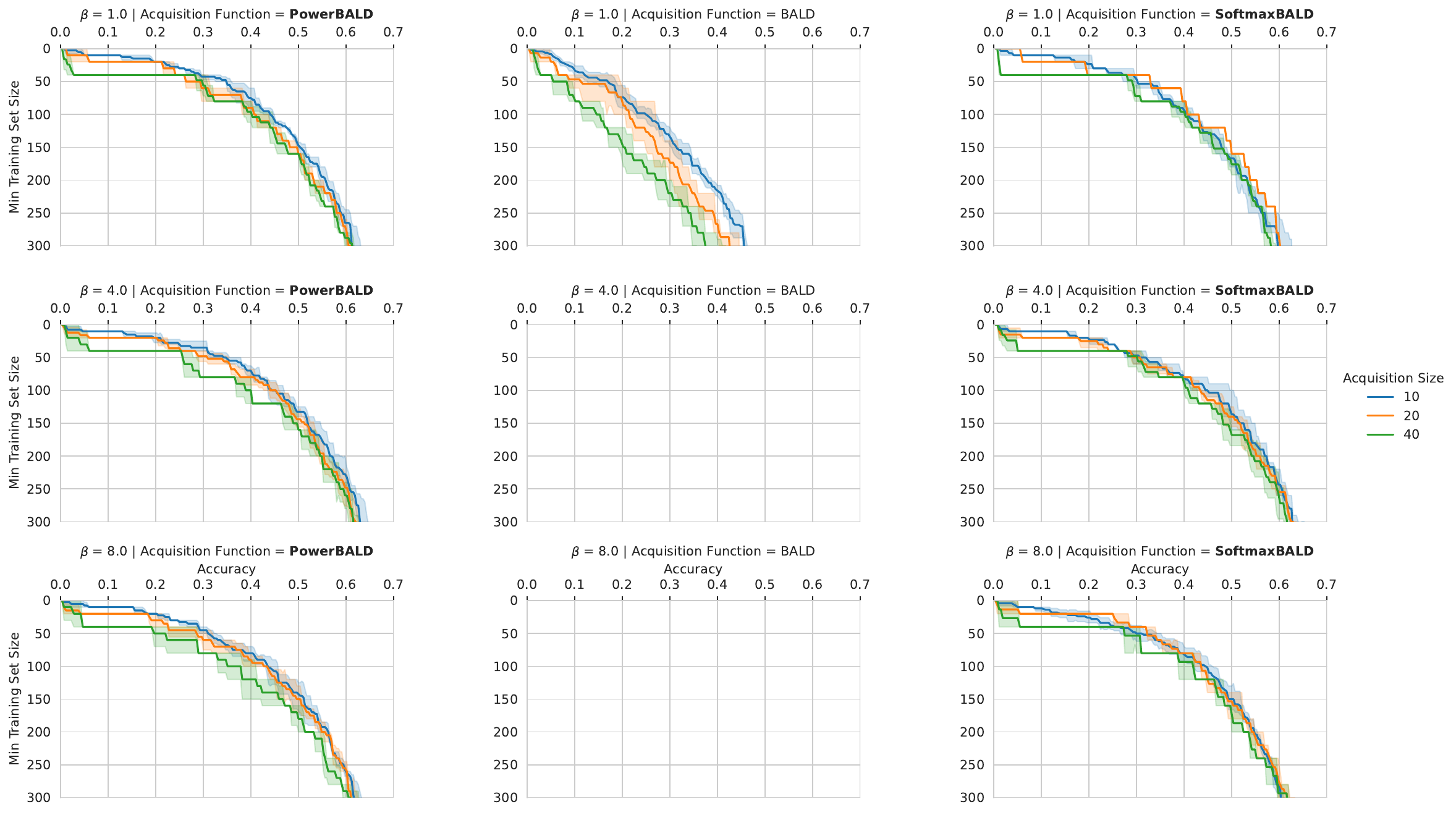}
    \caption{
    \emph{EMNIST `ByMerge': Acquisition size ablation.}
    For (top-\batchvar) BALD, $\beta$ makes no difference, so $\beta=1$ is used as placeholder. The other two subplots are intentionally left out as they are very similar to this one.
    }
    \label{appfig:emnist_bymerge_acquisition_size_ablation}
\end{figure}

\subsection{ACS-FW}
\label{appsec:acs-fw-acquisition-size}

\begin{figure}[H]
    \centering
    \includegraphics[width=0.8\textwidth]{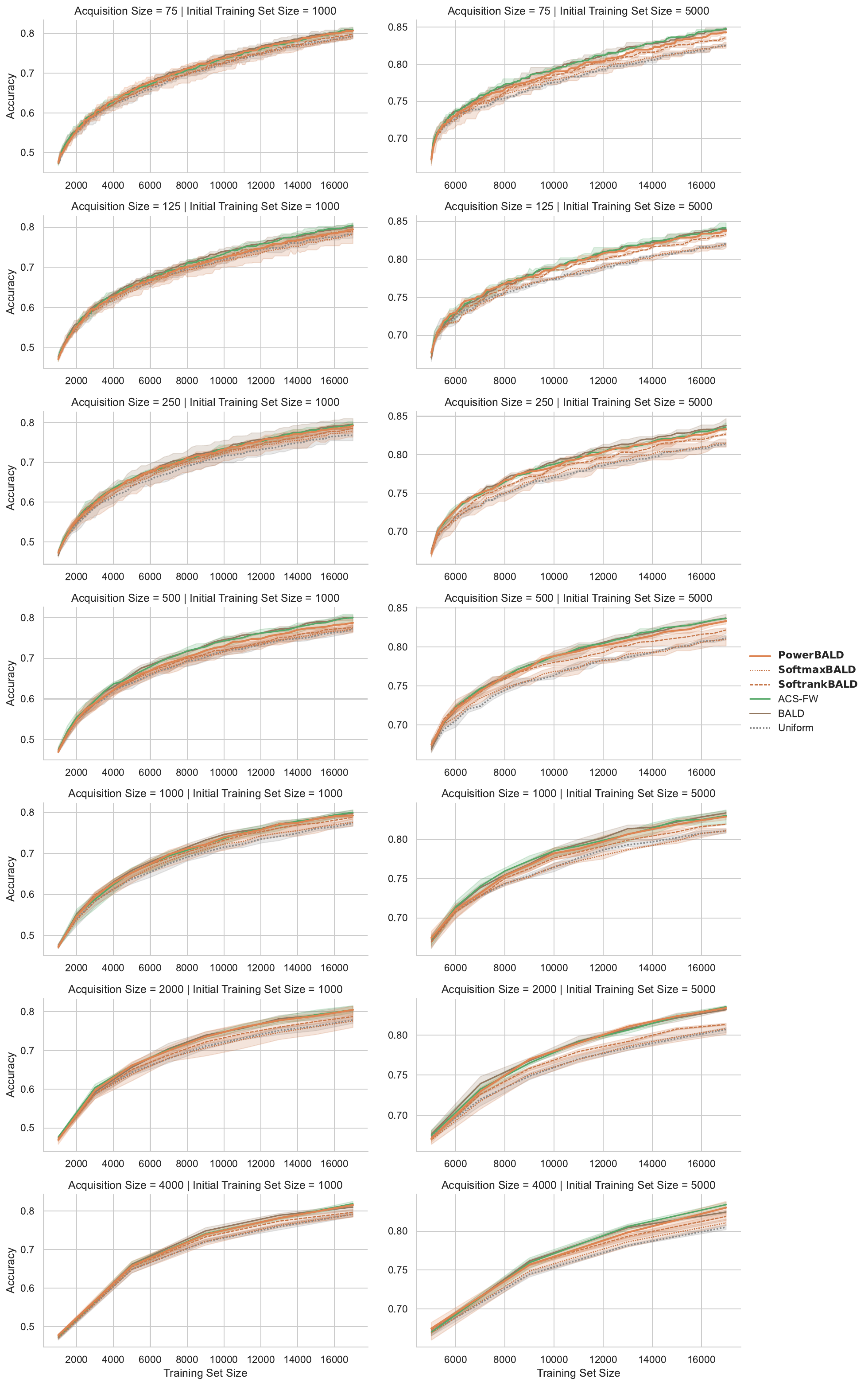}
    \caption{
    \emph{MFVI Last-Layer Classification on CIFAR-10: Acquisition size ablation.}
    }
    \label{appfig:acs-fw-cifar10}
\end{figure}

\begin{figure}[H]
    \centering
    \includegraphics[width=0.8\textwidth]{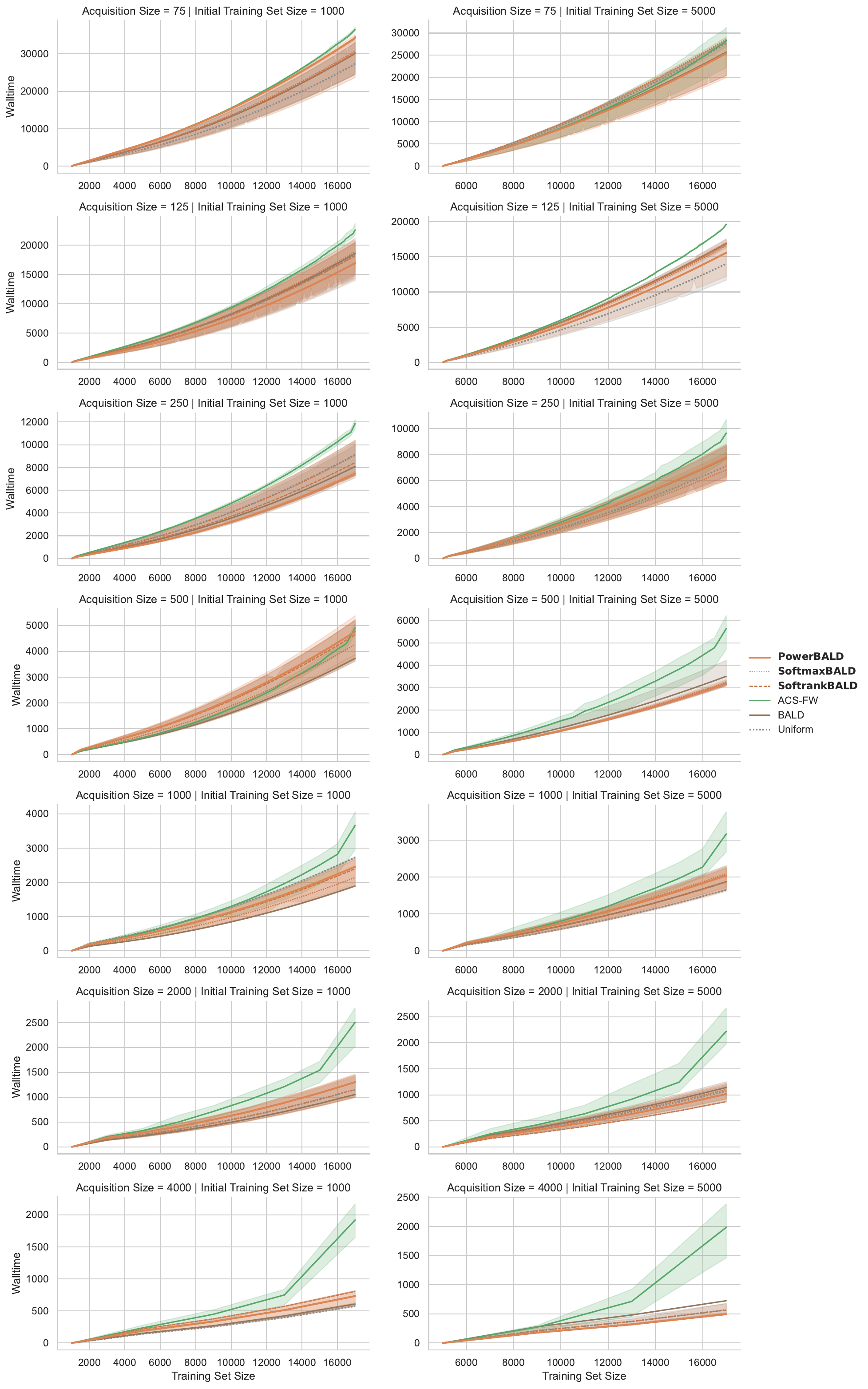}
    \caption{
    \emph{MFVI Last-Layer Classification on CIFAR-10: Acquisition size ablation (Walltime).}
    }
    \label{appfig:acs-fw-cifar10_wt}
\end{figure}

\begin{figure}[H]
    \centering
    \includegraphics[width=\textwidth]{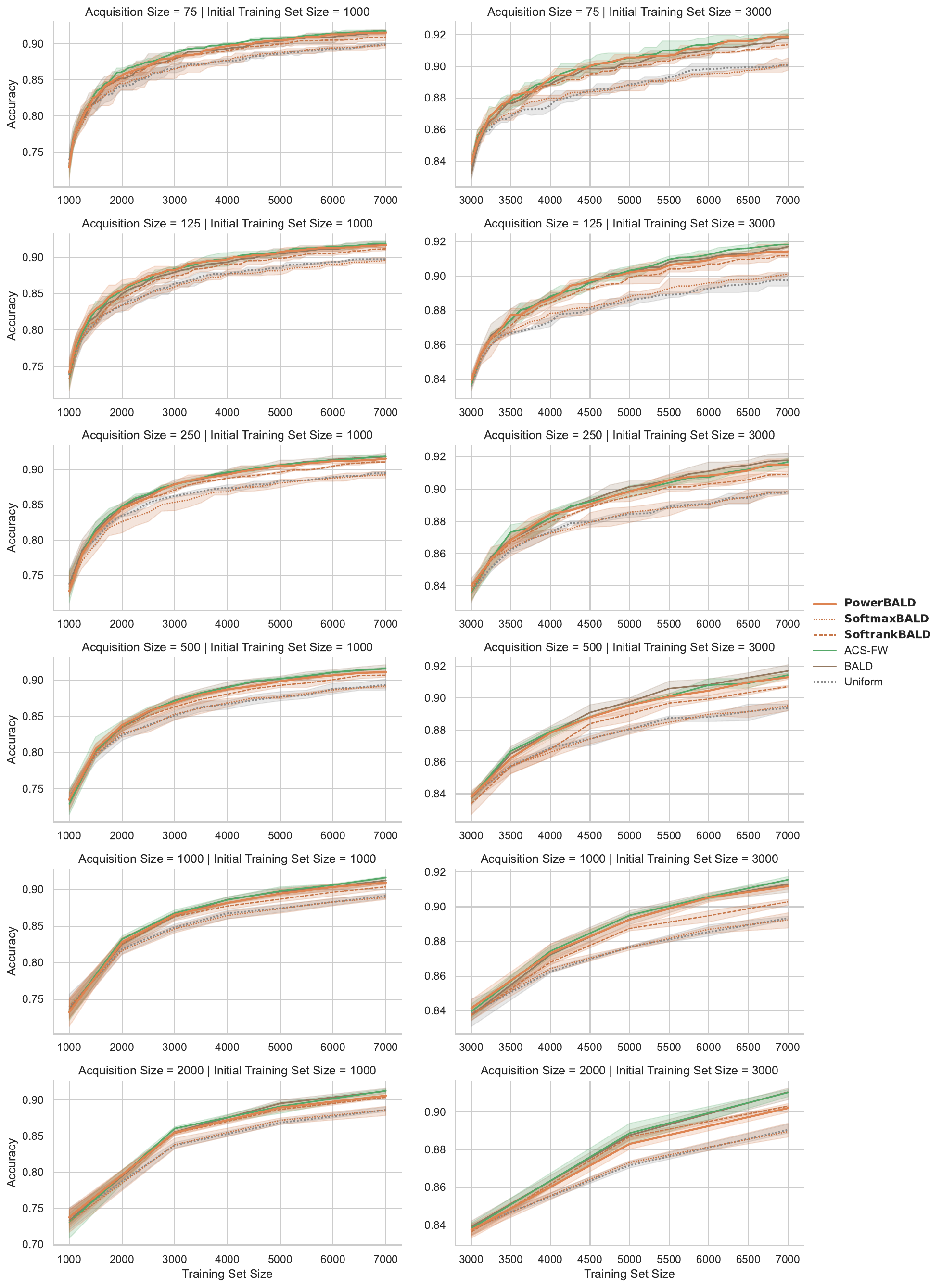}
    \caption{
    \emph{MFVI Last-Layer Classification on SVHN: Acquisition size ablation.}
    }
    \label{appfig:acs-fw-svhn}
\end{figure}

\begin{figure}[H]
    \centering
    \includegraphics[width=\textwidth]{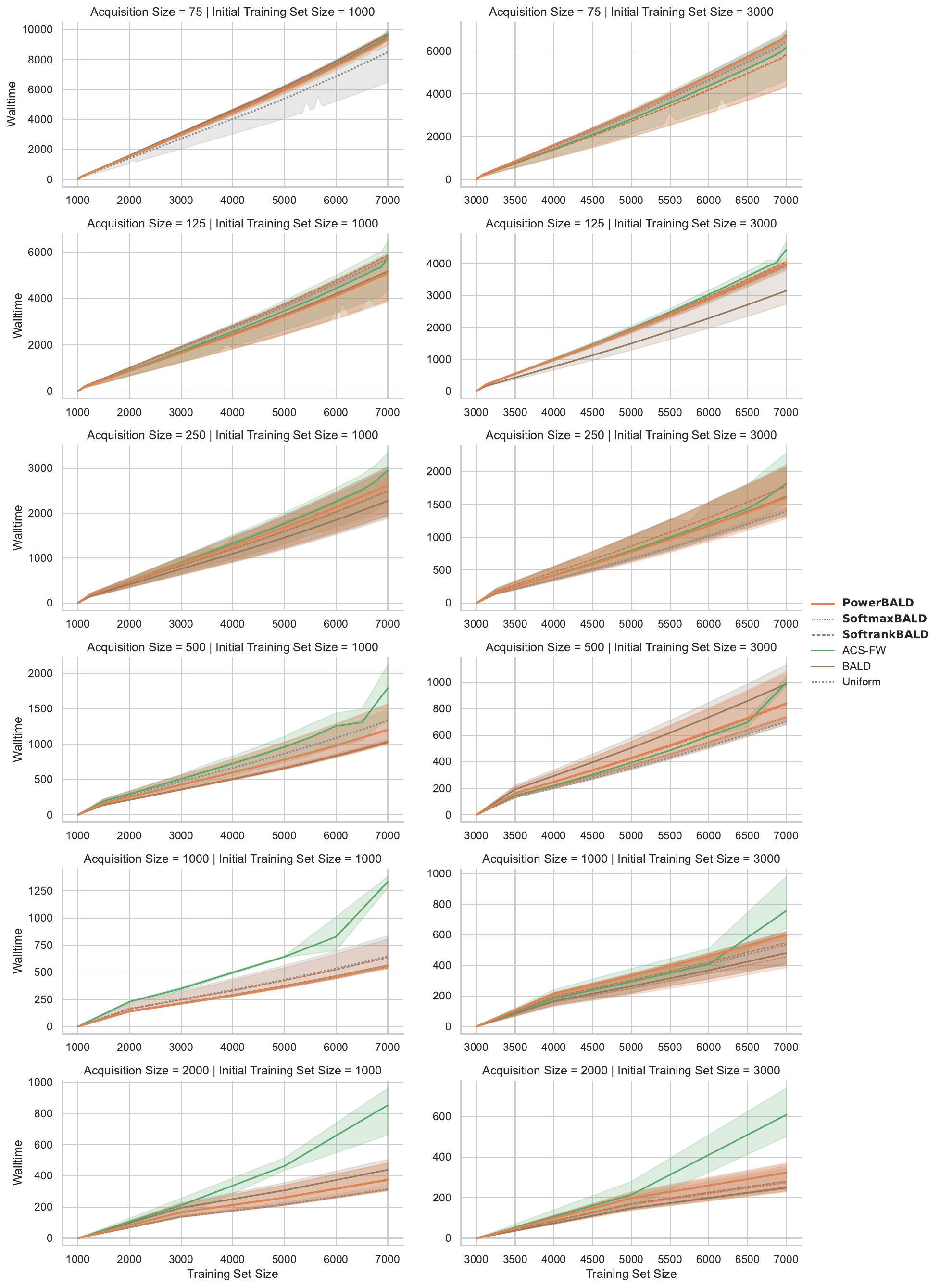}
    \caption{
    \emph{MFVI Last-Layer Classification on SVHN: Acquisition size ablation (Walltime).}
    }
    \label{appfig:acs-fw-svhn_wt}
\end{figure}

\begin{figure}[H]
    \centering
    \includegraphics[width=\textwidth]{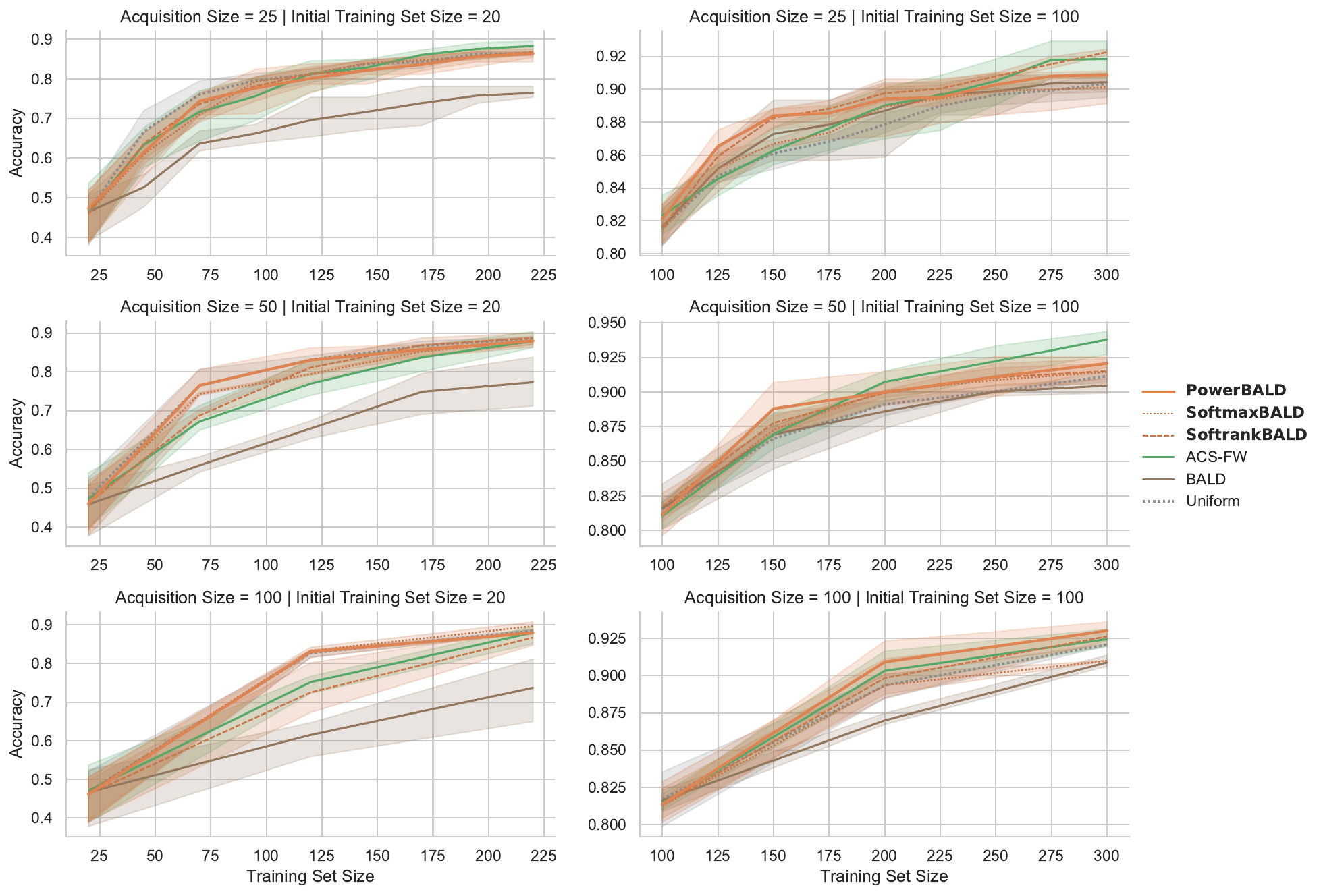}
    \caption{
    \emph{MFVI Last-Layer Classification on Repeated-MNIST: Acquisition size ablation.}
    }
    \label{appfig:acs-fw-repeated-mnist}
\end{figure}

\begin{figure}[H]
    \centering
    \includegraphics[width=\textwidth]{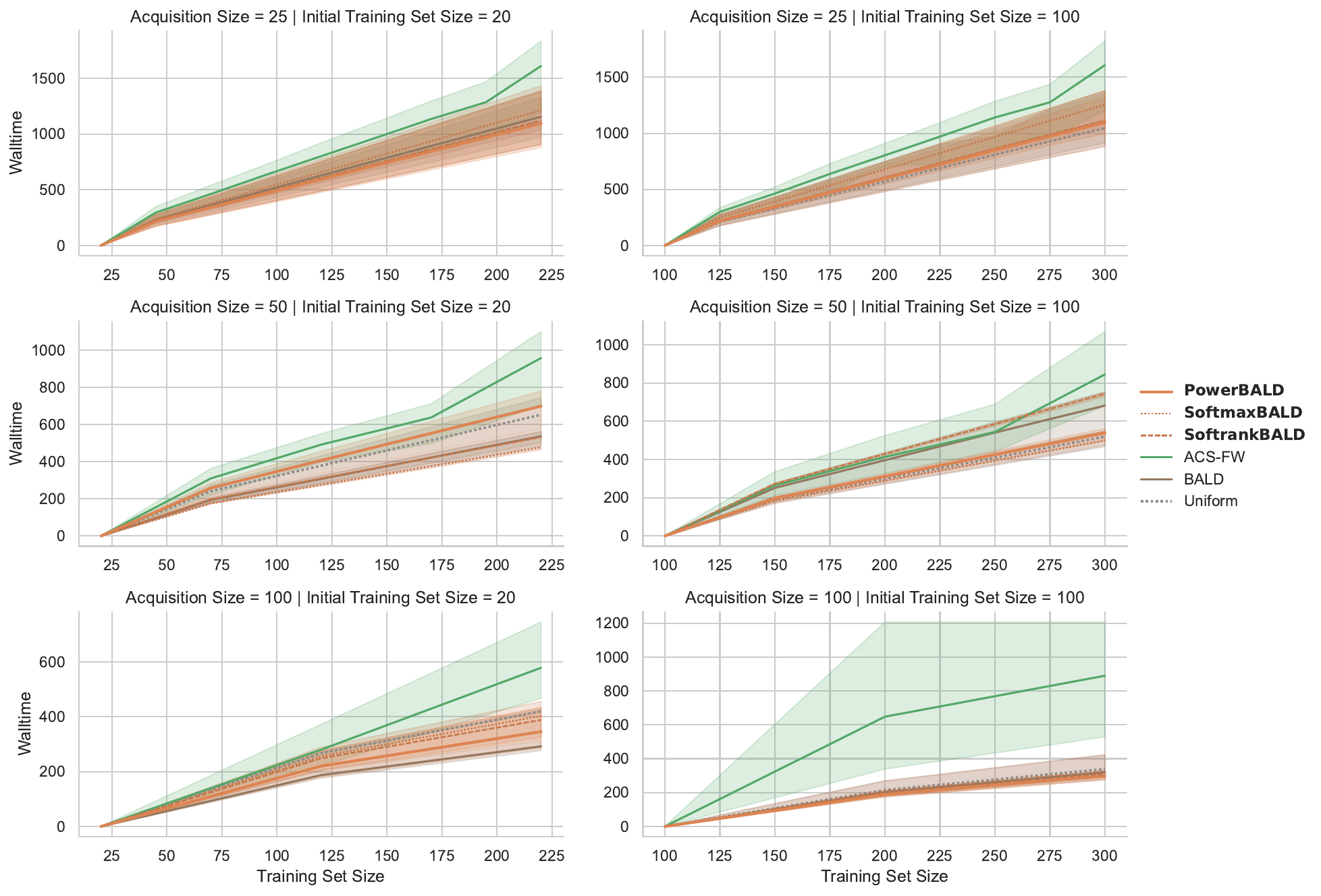}
    \caption{
    \emph{MFVI Last-Layer Classification on Repeated-MNIST: Acquisition size ablation (Walltime).}
    }
    \label{appfig:acs-fw-repeated-mnist_wt}
\end{figure}

\begin{figure}[H]
    \centering
    \includegraphics[width=\textwidth]{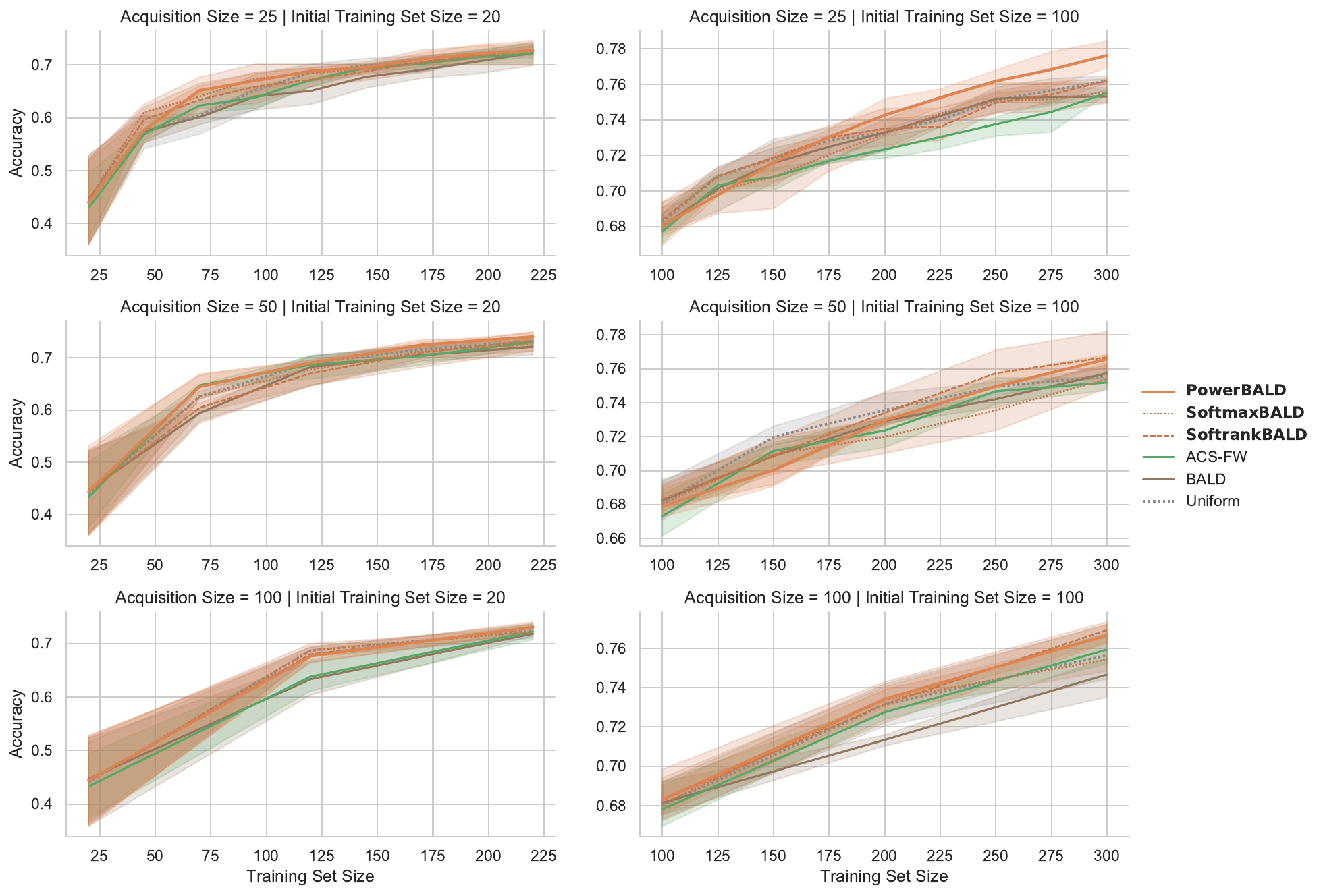}
    \caption{
    \emph{MFVI Last-Layer Classification on Fashion-MNIST: Acquisition size ablation.}
    }
    \label{appfig:acs-fw-fashion-mnist}
\end{figure}

\begin{figure}[H]
    \centering
    \includegraphics[width=\textwidth]{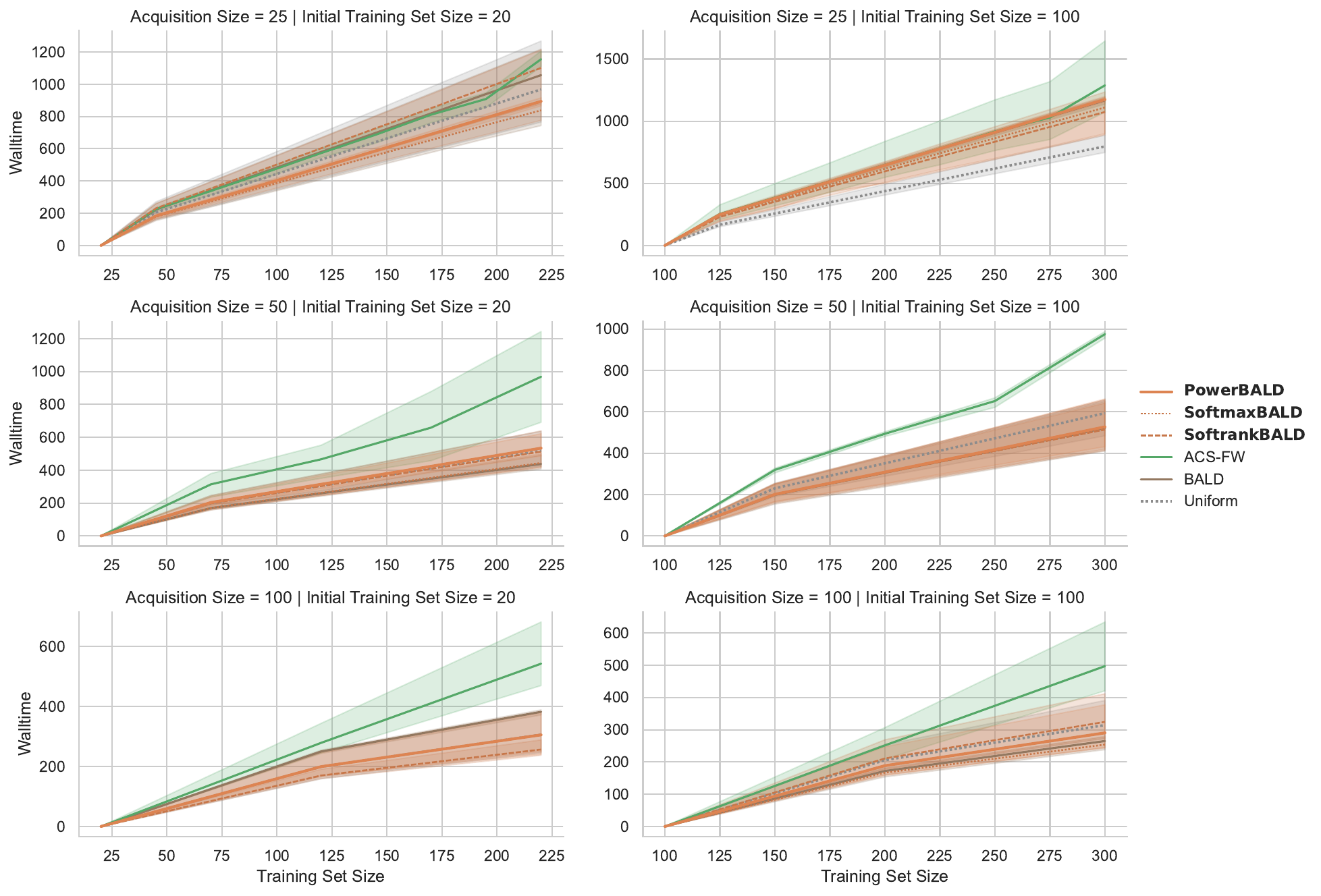}
    \caption{
    \emph{MFVI Last-Layer Classification on Fashion-MNIST: Acquisition size ablation (Walltime).}
    }
    \label{appfig:acs-fw-fashion-mnist}
\end{figure}

\section{Effect of changing $\beta$}
\label{appsec:tuning_beta}

\subsection{Repeated-MNIST}
\label{appsec:rmnist_beta}

\begin{figure}[H]
    \centering
    \includegraphics[height=0.85\textheight]{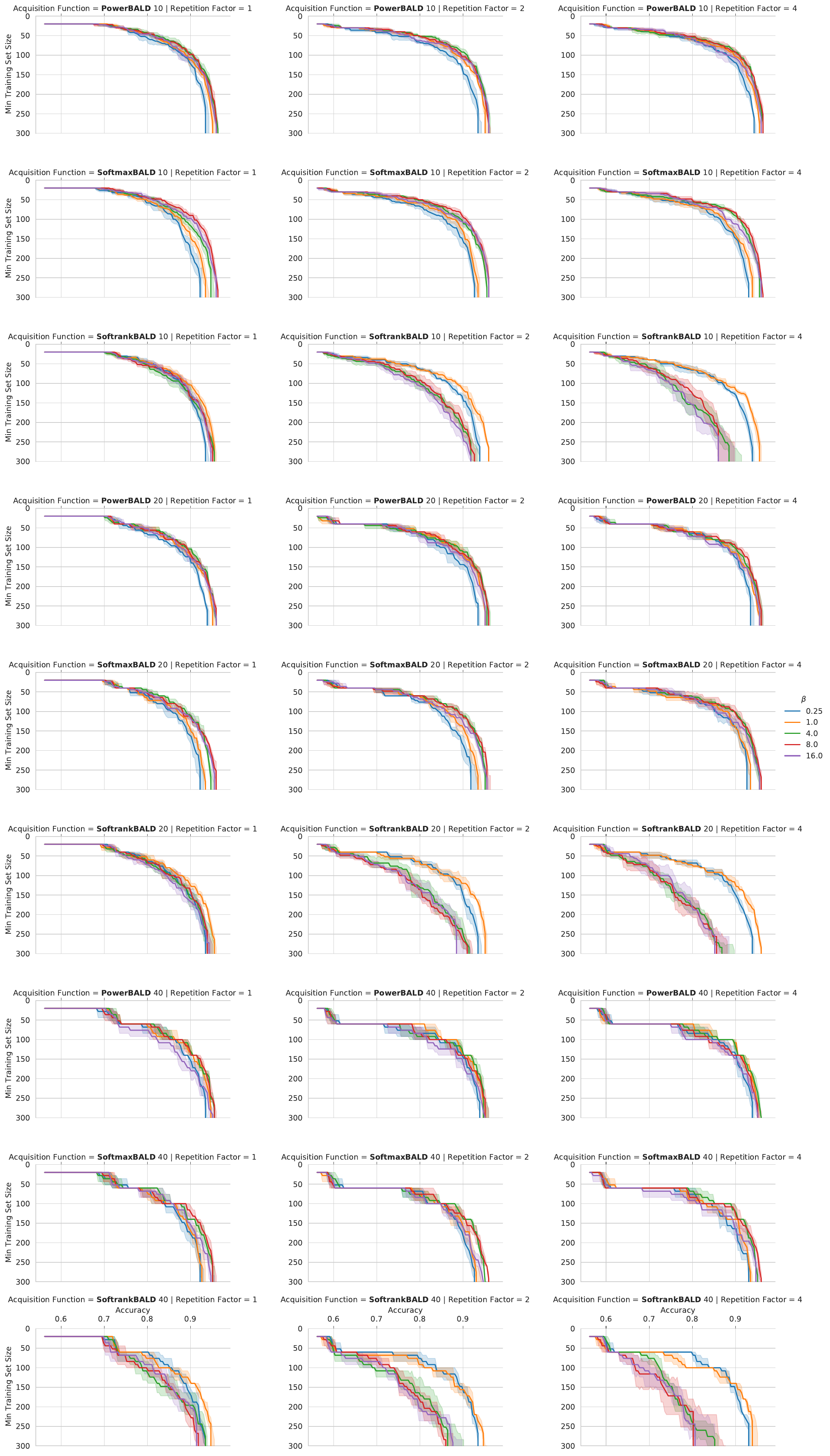}
    \caption{
    \emph{Repeated-MNIST: $\beta$ ablation for *BALD.}
    }
    \label{appfig:rmnist_temperature_ablation}
\end{figure}

\subsubsection{\miotcd and Synbols}
\label{appsec:eai_temperature_ablations}

\begin{figure}[H]
    \centering
    \includegraphics[width=\linewidth]{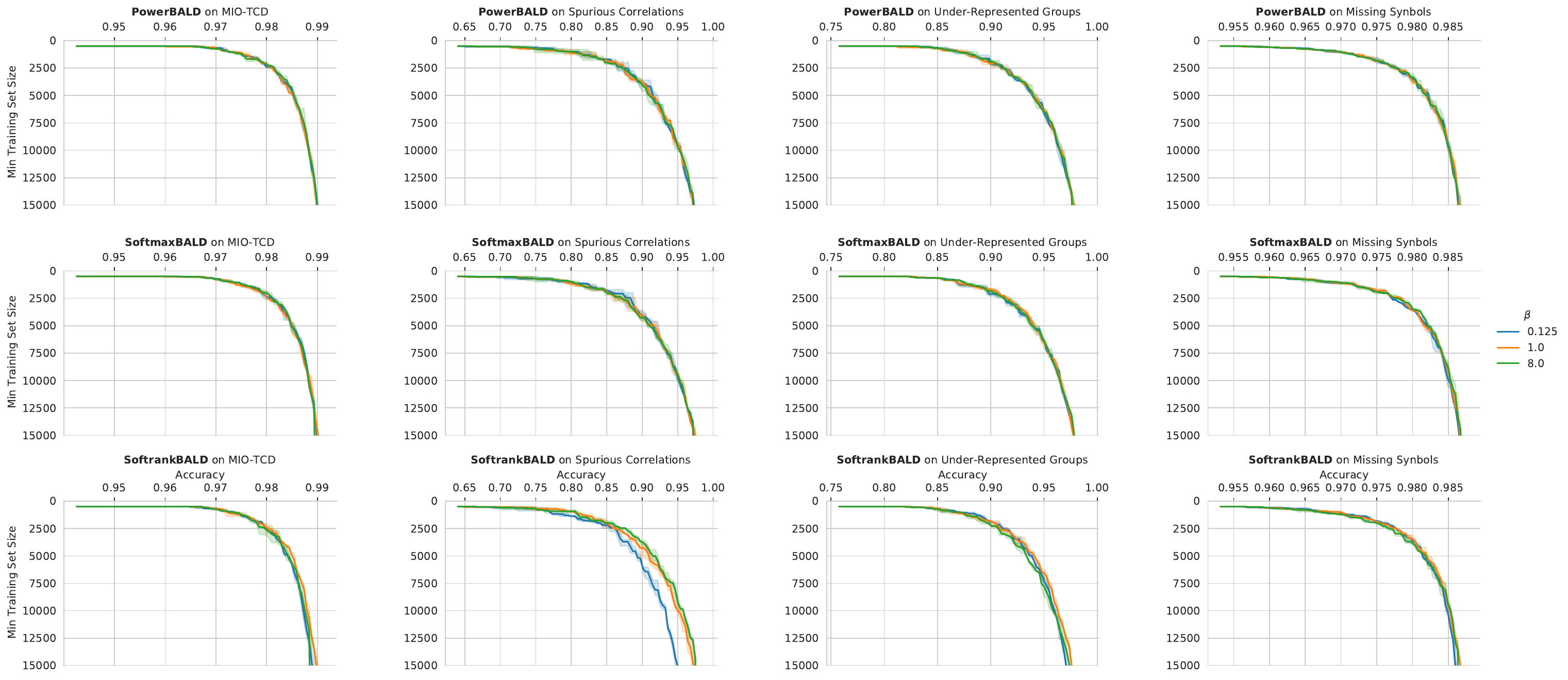}
    \caption{
    \emph{\miotcd and Synbols: $\beta$ ablation for *BALD.}
    }
\end{figure}

\begin{figure}[H]
    \centering
    \includegraphics[width=\linewidth]{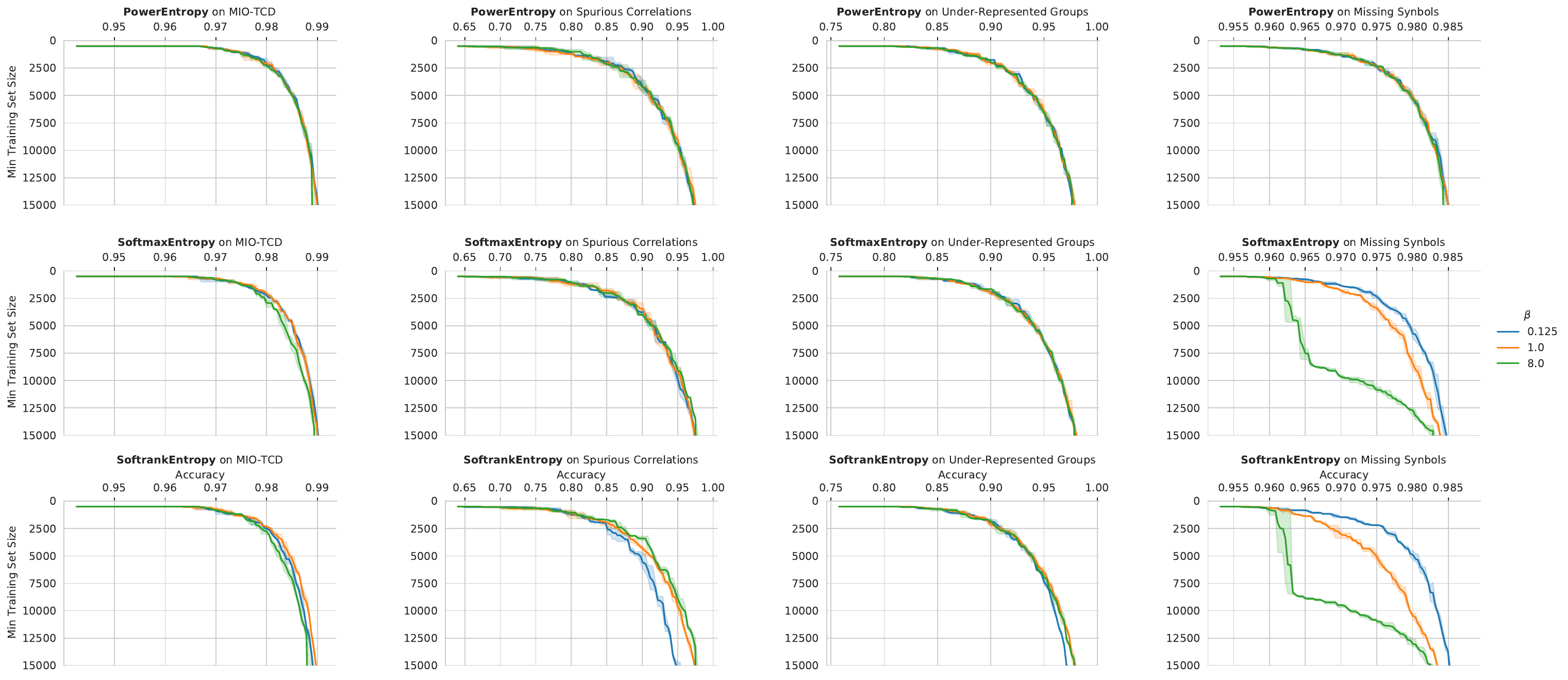}
    \caption{
    \emph{\miotcd and Synbols: $\beta$ ablation for *Entropy.}
    }
\end{figure}

\subsection{EMNIST}

\begin{figure}[H]
    \centering
    \includegraphics[width=\textwidth]{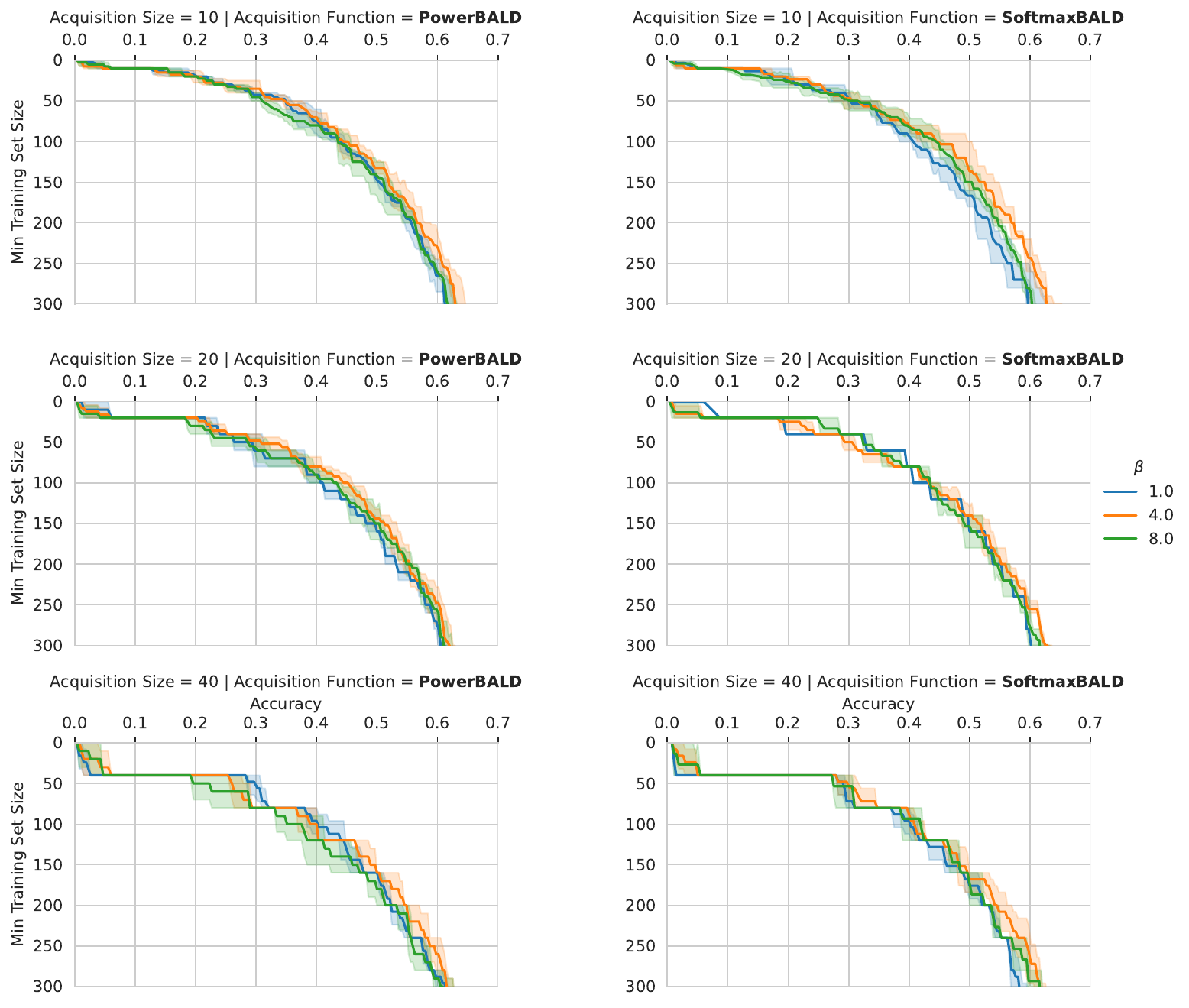}
    \caption{
    \emph{EMNIST `Balanced': $\beta$ ablation for Softmax- and PowerBALD.}
    }
\end{figure}

\begin{figure}[H]
    \centering
    \includegraphics[width=\textwidth]{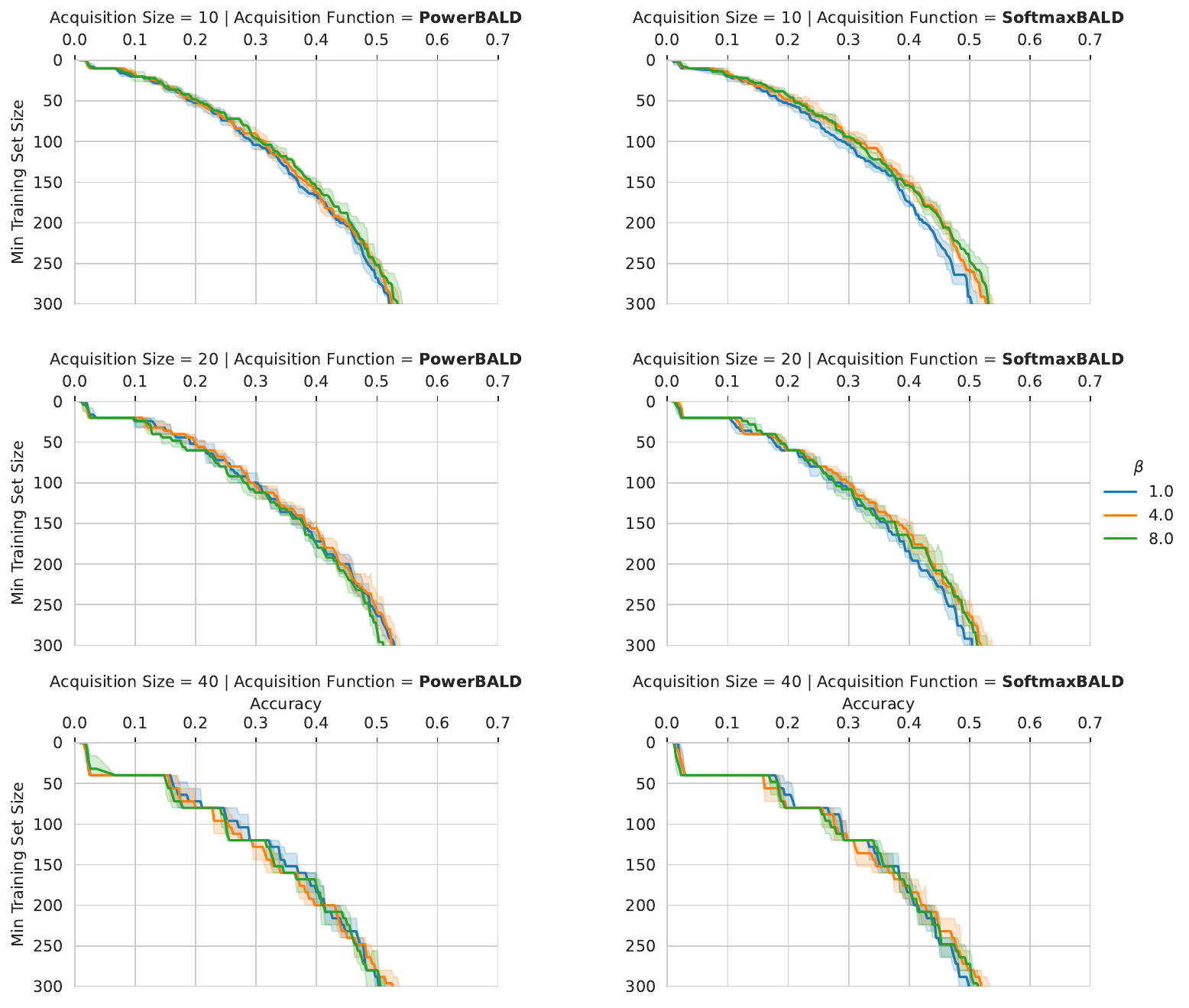}
    \caption{
    \emph{EMNIST `ByMerge': $\beta$ ablation for Softmax- and PowerBALD.}
    }
\end{figure}

\subsection{CausalBALD: synthetic dataset}
\label{app:causal_synth_temp}

\begin{figure}[H]
    \centering
    \begin{subfigure}[b]{0.64\linewidth}
        \centering
        \includegraphics[width=\linewidth]{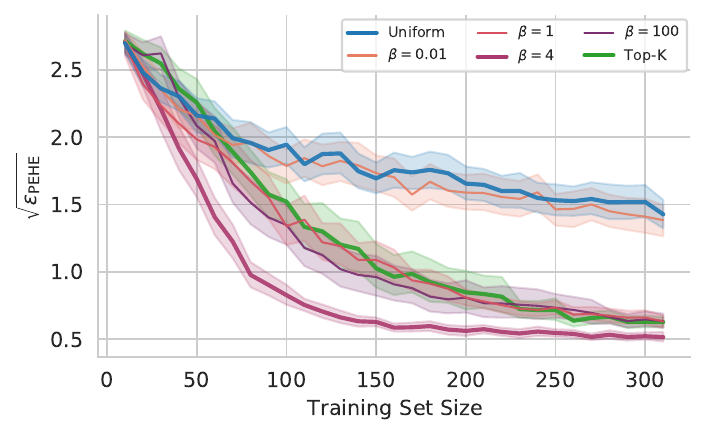}
        \caption{Overall Ablation (Subset)}
    \label{fig:synth}
    \end{subfigure}
    \begin{minipage}[b]{0.30\linewidth}
            \begin{subfigure}[b]{\linewidth}
            \centering
            \includegraphics[width=\linewidth]{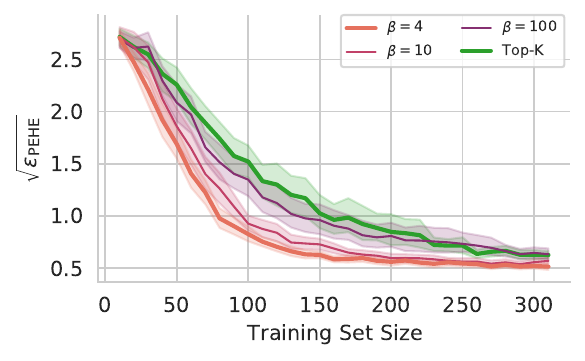}
            \caption{Low Temperature Only}
        \label{fig:low_temp_synth}
        \end{subfigure}
        \begin{subfigure}[b]{\linewidth}
            \centering
            \includegraphics[width=\linewidth]{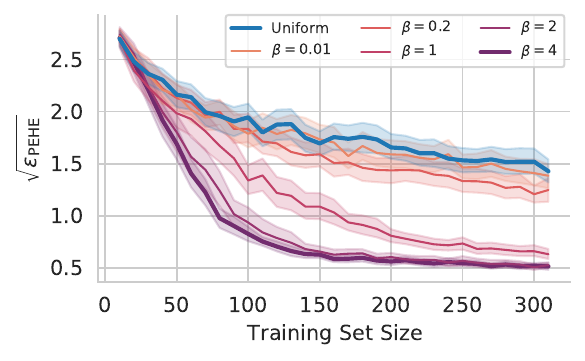}
            \caption{High Temperature Only}
            
        \label{fig:high_temp_synth}
        \end{subfigure}
    \end{minipage}
    \caption{\emph{CausalBALD: Synthetic Dataset.} (a) At a very high temperature ($\beta = 0.1$), PowerBALD behaves very much like random acquisition, and as the temperature decreases the performance of the acquisition function improves (lower $\sqrt{\epsilon_{\mathrm{PEHE}}}$). (b) Eventually, the performance reaches an inflection point ($\beta = 4.0$) and any further decrease in temperature results in the acquisition strategy performing more like top-\batchvar. We see that under the optimal temperature, power acquisition significantly outperforms both random acquisition and top-\batchvar over a wide range of temperature settings.}
    \label{fig:synth_causal}
\end{figure}

We provide further $\beta$ ablations for CausalBALD on the entirely synthetic dataset which is used by \citet{jesson2021causal}. This demonstrates the ways in which $\beta$ interpolates between uniform and top-\batchvar acquisition.

\subsection{CLINC-150}

\begin{figure}[H]
    \centering
    \includegraphics[width=\linewidth]{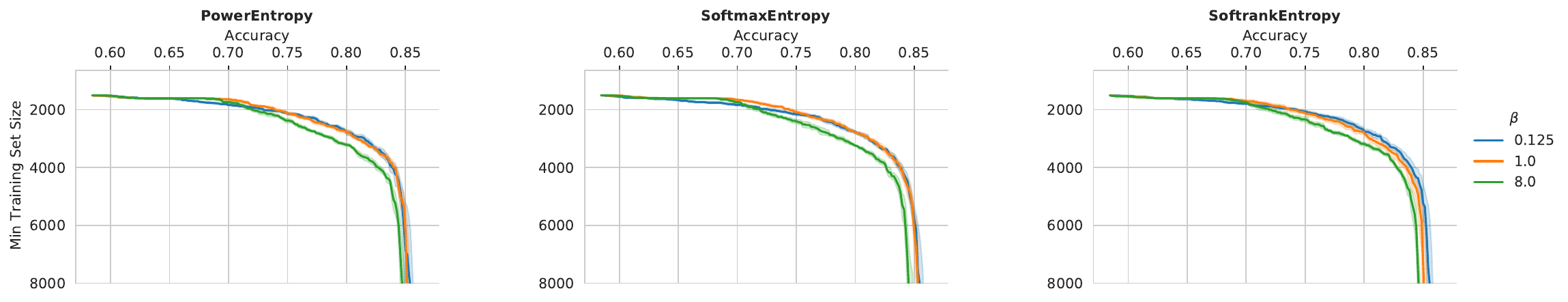}
    \caption{Performance CLINC-150: $\beta$ ablation for *Entropy.}
\end{figure}

\section{Simple Implementation of Stochastic Batch Acquisition}
\label{appsec:stochastic_batch_acquisition_impl}

\begin{listing}[h]
    \caption{\emph{Code for stochastic batch acquisition.} Colab \href{https://colab.research.google.com/drive/1C8PxpwSgLAIg_uHak7NuXVQpcRtxvMed?usp=sharing}{here}.}
    \label{code:stoch_batch_acquisition}
    \begin{minted}[fontsize=\footnotesize]{python}
def get_random_samples(scores_N: torch.Tensor, *, aquisition_batch_size: int):
    N = len(scores_N)
    aquisition_batch_size = min(aquisition_batch_size, N)

    indices = np.random.choice(N, size=aquisition_batch_size, replace=False)
    return indices.tolist()


def get_softmax_samples(scores_N: torch.Tensor, *, beta: float, aquisition_batch_size: int):
    # As beta -> 0, we obtain random sampling.
    if beta == 0.0:
        return get_random_samples(scores_N, aquisition_batch_size=aquisition_batch_size)

    N = len(scores_N)
    noised_scores_N = scores_N + scipy.stats.gumbel_r.rvs(loc=0, scale=1 / beta, size=N, random_state=None)

    return torch.topk(noised_scores_N, k=aquisition_batch_size)


def get_power_samples(scores_N: torch.Tensor, *, beta: float, aquisition_batch_size: int):
    return get_softmax_samples(torch.log(scores_N), beta=beta, aquisition_batch_size=aquisition_batch_size)


def get_softrank_samples(scores_N: torch.Tensor, *, beta: float, aquisition_batch_size: int):
    N = len(scores_N)

    sorted_indices_N = torch.argsort(scores_N, descending=True)
    ranks_N = torch.argsort(sorted_indices_N) + 1

    return get_power_samples(1 / ranks_N, beta=beta, aquisition_batch_size=aquisition_batch_size)
    \end{minted}
\end{listing}

\section{Hypoexponential Distribution Evaluation}
\label{app:hypoexponential}

\begin{listing}[H]
    \caption{\emph{Code for fitting a Gumbel distribution to the mean of samples from a hypoexponential distribution.} Colab \href{https://colab.research.google.com/drive/1xiIOe78dilh71acJVJoOKlsN1AFQ8F3l?usp=sharing}{here}.}
    \label{code:hypoexponential}
    \begin{minted}[fontsize=\footnotesize]{python}
import numpy as np
from scipy.stats import expon, gumbel_r
import matplotlib.pyplot as plt

# Set the rate parameter
# Change here to get different scales of exponentials 
lam = 1/2

# Generate 10 samples from the exponential distribution
samples = expon.rvs(scale=1/lam, size=5000)

# Now sample from the samples using the previous samples as lambda of exponential distributions
new_samples = expon.rvs(scale=1/samples, size=(4000,5000))

what_are_you = new_samples.mean(axis=1)

# Fit a Gumbel distribution to the data
params = gumbel_r.fit(what_are_you)

x = np.linspace(np.min(what_are_you), np.max(what_are_you), 1000)
gumbel_cdf = gumbel_r.cdf(x, *params)

# Plot both what_are_you and the Gumbel distribution in the same plot

plt.plot(np.sort(what_are_you), np.linspace(0, 1, len(what_are_you), endpoint=False), label="Simulated Hypoexponential")
plt.plot(x, gumbel_cdf, label="Gumble Fit")
plt.legend()
plt.show()
    \end{minted}
    \includegraphics[width=0.75\textwidth]{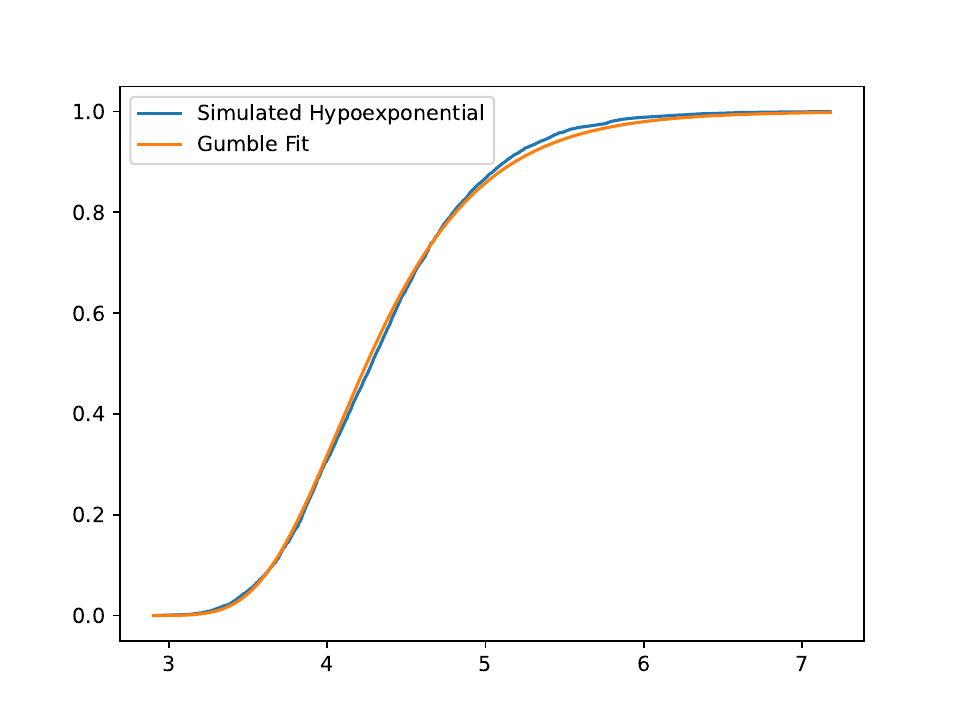}
\end{listing}

\end{document}